\documentclass[twoside,11pt]{article}

\usepackage{amsthm}
\usepackage{blindtext}

\usepackage[preprint,nohyperref ]{jmlr2e}

\usepackage[english]{babel}

\usepackage[utf8]{inputenc} 
\usepackage[T1]{fontenc}    
\usepackage[colorlinks=true]{hyperref}       
 \usepackage{url}            
 \usepackage{booktabs}       
\usepackage{amsfonts}       
\usepackage{nicefrac}       
\usepackage{microtype}      
\usepackage{xcolor}         
\usepackage{float}          
\usepackage{amsmath}

\usepackage{centernot}
\usepackage{bm}
\usepackage{mathrsfs}
\usepackage{algorithm}
\usepackage{algpseudocode}
\usepackage{subcaption}
\usepackage{mathtools}

\newcommand{\R}{\mathbb{R}}

\newcommand{\rd}{\mathrm{d}}

\DeclareMathOperator*{\argmin}{\arg\!\min}
\DeclareMathOperator*{\argmax}{\arg\!\max}
\DeclareMathOperator*{\rank}{rank}

\newtheorem{thm}{Theorem}
\newtheorem{prop}{Proposition}
\newtheorem{lem}{Lemma}

\newtheorem{assumption}{\protect\assumptionname\ignorespaces}
\providecommand{\assumptionname}{A}

\usepackage{lastpage}
\jmlrheading{23}{2026}{1-\pageref{LastPage}}{1/21; Revised 5/22}{9/22}{21-0000}{Hannah Sansford, Nick Whiteley and Patrick Rubin-Delanchy}

\usepackage{todonotes}

\firstpageno{1}

\begin{document}

\title{Uncovering Locally Low-dimensional Structure in Networks by Locally Optimal Spectral Embedding}

\author{\name Hannah Sansford \email hannah.sansford@bristol.ac.uk \\
       \addr School of Mathematics\\
       University of Bristol, UK\\
       \AND
       \name Nick Whiteley \email nick.whiteley@bristol.ac.uk \\
       \addr School of Mathematics\\
       University of Bristol, UK\\
       \AND
       \name Patrick Rubin-Delanchy \email patrick.rubin-delanchy@ed.ac.uk \\
       \addr School of Mathematics\\
       University of Edinburgh, UK\\}
\maketitle

\begin{abstract}%
Standard Adjacency Spectral Embedding (ASE) relies on a global low-rank assumption often incompatible with the sparse, transitive structure of real-world networks, causing local geometric features to be ‘smeared’. To address this, we introduce Local Adjacency Spectral Embedding (LASE), which uncovers locally low-dimensional structure via weighted spectral decomposition. Under a latent position model with a kernel feature map, we treat the image of latent positions as a locally low-dimensional set in infinite-dimensional feature space. We establish finite-sample bounds quantifying the trade-off between the statistical cost of localisation and the reduced truncation error achieved by targeting a locally low-dimensional region of the embedding. Furthermore, we prove that sufficient localisation induces rapid spectral decay and the emergence of a distinct spectral gap, theoretically justifying low-dimensional local embeddings. Experiments on synthetic and real networks show that LASE improves local reconstruction and visualisation over global and subgraph baselines, and we introduce UMAP-LASE for assembling overlapping local embeddings into high-fidelity global visualisations.

\end{abstract}

\begin{keywords}
spectral embedding, graph representation learning, manifold learning, dimension reduction, network analysis
\end{keywords}

\section{Introduction}
Spectral embedding \citep{sussman2012consistent} provides a reformulation and generalisation of principal component analysis  (PCA) for dimension reduction, to operate on an input adjacency (or similarity matrix), rather than a set of vectors. 
This can be of interest when the only information we have about objects is how they connect, or, more often, when a complex dataset has some element of connectivity that we wish to exploit. This is the situation usually faced in the analysis of complex systems such as social networks \citep{o2008analysis}, transport \citep{shanmukhappa2018spatial}, computer hardware \citep{fyrbiak2019graph} and software \citep{boldi2020fine}, AI models \citep{ameisen2025circuit}, computer networks \citep{adams2016dynamic}, the brain \citep{de2014graph}, ecosystems \citep{chiu2011unifying}, knowledge bases \citep{peng2023knowledge}, and more. 

The methodological applications of spectral embedding are diverse and generally analogous to those of PCA in helping reduce noise as well as dimension. As PCA followed by $K$-means is rate-optimal under a Gaussian mixture model \citep{loffler2021optimality}, spectral clustering (spectral embedding followed by clustering) is asymptotically efficient under a stochastic block model \citep{tang2022asymptotically} --- and a pragmatic choice in practice. Replacing the clustering step with other types of analysis, we can target other models, such as the mixed membership \citep{airoldi2008mixed,rubin2017consistency}, degree-corrected\citep{karrer2011stochastic,jin2015fast,qin2013regularized,lyzinski2014perfect,lei2015consistency,passino2020spectral}, popularity adjusted \citep{sengupta2018block,koo2023popularity}, and t-stochastic block models \citep{fang2023t}. The scope of spectral embedding extends well beyond these simple statistical models and in practice it is integrated in many techniques, such as UMAP \citep{mcinnes2018umap} and graph neural networks \citep{wu2020comprehensive}, and probably many technologies that we use every day, such as internet search \citep{larson2020making,liu2011spectral}.

Applied researchers are often frustrated by a collection of related issues. Low-dimensional embeddings are seen to ``smear'' local structure \citep{seshadhri2020impossibility}, and statistical performance across various tasks continues to improve substantially as one increases the embedding dimension \citep{sansford2023implications}, to a point where computation becomes unmanageable. To combat this, researchers may turn to other graph embedding methods, e.g. DeepWalk \citep{perozzi2014deepwalk,DeepWalk_ToT} or node2vec \citep{grover2016node2vec}, thought to provide a better view of local structure. However, these methods lack the direct simplicity of PCA and its theoretical understanding, and may be harder to integrate safely within larger pipelines. Alternatively, researchers may apply spectral embedding to a subgraph corresponding to a neighbourhood of interest, and find they are able to obtain a sharper view of that neighbourhood in low dimension \citep{sansford2023implications}. The realities of graph-analysis as part of a more complex problem usually make this possible: there is additional information about at least some of the nodes or edges (e.g. spatial location) that allows us to define a neighbourhood of interest. More broadly, even when a neighbourhood is not specified a priori, local graph partitioning methods can recover an informative local subgraph around a query node for downstream analysis \citep{andersen2006local, spielman2013local}.

Why is this local approach effective? An explanation which would tie several empirical observations together is that the graph is generated over a low-dimensional latent space in which connections mostly happen locally. A graph following this model can be sparse and transitive, two common features of real-world graphs \citep{chanpuriya2021power,sansford2023implications}. Those properties are incompatible with the low-rank assumptions underpinning spectral embedding \citep{seshadhri2020impossibility}. A kernel-manifold relationship deriving from Mercer's theorem allows us to view the latent space as mapping to a low-dimensional manifold embedded in infinite-dimensional space \citep{rubin2020manifold,whiteley2021matrix,whiteley2022statistical} (in the present paper use the word ``manifold'' for a set of low intrinsic dimension embedded in infinite-dimensional Euclidean space -- precise details of which are given later). This provides an intuitive explanation for the improvement offered by the local approach: While spectral embedding tries and fails to capture the entire manifold in a low-dimensional subspace, the local approach only targets a patch of this manifold which, under suitable curvature assumptions, can presumably be much better approximated within a low-dimensional subspace.

In this paper, we make this explanation precise and introduce a family of local spectral embeddings which includes subgraph embedding as an example but usually offers improvements. In particular, the subgraph method is somewhat degenerate in making zero use of the information in the connections between the subgraph and the remaining population. The discontinuous in/out aspect of selecting a subgraph is also highly unwieldy for various graph analysis tasks. Our embeddings are also available in an inductive mode which only requires neighbourhood information for a seed set.

\paragraph{Outline and contributions.}
This paper is organised as follows. Section~\ref{sec:related_work} reviews related work on spectral embedding and localisation approaches for representation learning. Section~\ref{sec:background} recalls the latent position model interpretation of adjacency spectral embedding (ASE) and the decomposition of error into statistical and truncation components that underpins our analysis. Section~\ref{sec:LASE} introduces Local Adjacency Spectral Embedding (LASE), a weighted generalisation of ASE that emphasises a region of interest through node-specific weights. We provide a population-level characterisation of its target and show that LASE can be interpreted as an optimal low-rank feature map for a localised adjacency operator. Building on the analogous results for ASE from \cite{tang2013universally}, our main theoretical results provide non-asymptotic bounds that decompose the embedding error of LASE into a finite-sample statistical term (variance) and a truncation term (bias) arising from finite-dimensional projection onto a local region. We further show that sufficient localisation induces rapid spectral decay and an emergent eigengap, providing theoretical justification for low-dimensional embeddings. Section~\ref{sec:experiments} presents experiments on synthetic and real networks validating the theory and illustrating the benefits of localisation. We introduce UMAP-LASE, a local-to-global procedure that assembles overlapping local embeddings into high-fidelity global visualisations. Section~\ref{sec:discussion} concludes with a discussion.


\paragraph{Notation.}  For a matrix $\mathbf{M}$, $\|\mathbf{M}\|$ denotes the spectral norm; for a vector $\mathbf{v}\in \ell_2$ or $\mathbb{R}^d$, $\|\mathbf{v}\| = (\sum_i v_i^2)^{1/2}$ denotes the $\ell_2$ or Euclidean norm, where
$\ell_2 \coloneqq \{ (v_i)_{i\ge1} : \sum_{i\ge1} v_i^2 < \infty \}$.
For an integer $n\geq 1$ we write $[n]\coloneqq\{1,\ldots,n\}$.

\section{Related Work}
\label{sec:related_work}

Our work sits at the intersection of spectral graph theory, manifold learning, and local matrix approximation. In this section, we contextualise LASE within these fields, highlighting how it bridges the gap between global spectral consistency and local geometric fidelity.

\paragraph{Theoretical foundations of spectral embedding.}
A large body of work establishes consistency and distributional limits for ASE under latent position and random graph models. For the random dot product graph \citep{young2007random} and related models, \citet{sussman2012consistent, tang2013universally} provided foundational consistency results. Subsequent work established asymptotic normality for scaled spectral embeddings \citep{athreya2016limit} and sharper row-wise perturbation bounds \citep{cape2019two, abbe2020entrywise}. Parallel developments under blockmodel-type assumptions have analysed spectral clustering in sparse regimes and quantified misclassification rates \citep{rohe2011spectral, lei2015consistency}. Across these settings, guarantees typically require spectral separation and fast eigenvalue decay. However, as noted by \citet{seshadhri2020impossibility}, real-world graphs often exhibit sparsity and high triangle density, features that are incompatible with these global low-rank assumptions. Addressing this, \citet{sansford2023implications} demonstrate that such properties can be recovered from a low-dimensional manifold embedded in infinite-dimensional space.

\paragraph{Importance weighting.}
The use of node-specific weights in LASE (Algorithm \ref{alg:lase}) evokes importance weighting techniques in machine learning \citep{kimura2024short}. In graph analysis, \citet{bonald2018weighted} introduced a weighted Laplacian spectral embedding, though without the theoretical analysis of statistical error or localisation perspective provided here. 

\paragraph{Local PCA and matrix approximation.}
Conceptually, LASE is related to local PCA, which inspects data in small sub-regions to determine intrinsic dimensionality \citep{fukunaga1971algorithm}. This approach acknowledges that global PCA is limited by its linear assumption when data resides on a non-linear manifold \citep{fukunaga1971algorithm, rubin2020manifold}. In matrix approximation, several works have argued that global low-rank assumptions are often insufficient, proposing instead that different regions be approximated by distinct low-rank components \citep{lee2016llorma, arias2017spectral}.

\paragraph{Nonlinear manifold learning.}
LASE shares goals with nonlinear dimensionality reduction methods like Locally Linear Embedding (LLE) \citep{roweis2000nonlinear} and ISOMAP \citep{tenenbaum2000global}, which preserve neighbourhood relationships. Other techniques, such as UMAP \citep{mcinnes2018umap} and Laplacian Eigenmaps \citep{belkin2003laplacian}, emphasise neighbourhood preservation via graph-based smoothing. Our method differs by applying localisation directly via a weighted spectral decomposition, retaining the interpretability of ASE under the Latent Position Model. Furthermore, strategies for combining overlapping local embeddings into global visualisations, such as tangent space alignment \citep{zhang2004principal} or the \texttt{local2global} approach \citep{jeub2023local2global}, provide the context for our UMAP-LASE procedure described in Section \ref{sec:experiments}.

\section{Theoretical properties of Adjacency Spectral Embedding}
\label{sec:background}

In this section we outline definitions and facts about ASE under the Latent Position Model; this material is not new, but we need to present it here to help explain LASE in Section \ref{sec:LASE}.

The procedure for positive semidefinite ASE of an undirected graph with $n$ vertices
and adjacency matrix $\mathbf{A}\in\{0,1\}^{n\times n}$ is given in Algorithm \ref{alg:ase}.
\begin{algorithm}[H]
\caption{Adjacency Spectral Embedding (ASE)}
\label{alg:ase}
\textbf{Input:} Symmetric adjacency matrix $\mathbf{A} \in \{0,1\}^{n \times n}$ and fixed embedding dimension $r\geq 1$.
\begin{algorithmic}[1] 
    \State Compute the $r$ largest eigenvalues of $\mathbf{A}$, 
    $\lambda_1 \ge \lambda_2 \ge \cdots \ge \lambda_r$, 
    and corresponding orthonormal eigenvectors 
    $\mathbf{u}_1, \ldots, \mathbf{u}_r \in \mathbb{R}^n$.
    \State Define 
    $\mathbf{U} = [\mathbf{u}_1 \mid \cdots \mid \mathbf{u}_{r}] 
    \in \mathbb{R}^{n \times r}$ 
    and 
    $\mathbf{\Lambda} = \mathrm{diag}(\lambda_1, \ldots, \lambda_{r}) 
    \in \mathbb{R}^{r \times r}$.
    \State Compute
    \[
    \hat{\mathbf{X}} = \mathbf{U} \mathbf{\Lambda}^{1/2} 
    \in \mathbb{R}^{n \times r},
    \]
    and denote the $i$-th row by $\hat{X}_i^\top$.
\end{algorithmic}
\textbf{Output:} Embedding vectors 
$\hat{X}_1, \ldots, \hat{X}_n \in \mathbb{R}^{r}$.
\end{algorithm}
In Algorithm \ref{alg:ase}, the embedding dimension $r$ is a fixed parameter, but should be such that $\lambda_r \geq 0$. Under the generative model for $\mathbf{A}$ assumed in this work (Definition \ref{def:latent_position_model} below), it can be shown that for any $r\geq 1$, $\lambda_r \geq 0$ holds with high probability for sufficiently large $n$. In practice, the choice of $r$ is often guided by inspecting the eigenvalues of $\mathbf{A}$, for example via the position of an ``elbow'' in a scree plot. 

The ASE embedding $\hat{\mathbf{X}}$  as in Algorithm \ref{alg:ase}, or any orthogonal transformation of that embedding, solves the Frobenius norm minimisation problem:
\begin{equation}\label{eq:EYM}
\hat{\mathbf{X}} = \argmin_{\mathrm{rank}(\mathbf{X})\leq r}\|\mathbf{A} - \mathbf{X}\mathbf{X}^\top\|_F,   
\end{equation}
by the Eckart-Young-Mirksy theorem \citep{eckart1936approximation, mirsky1960symmetric}.
This characterises ASE as an optimal low-rank approximation to the observed adjacency matrix $\mathbf{A}$. Under a probabilistic model, however, $\mathbf{A}$ may be viewed as a noisy realisation of a matrix of connection probabilities. To make this connection precise and to analyse the statistical properties of $\hat{\mathbf{X}}$, we assume that $\mathbf{A}$ is generated according to the Latent Position Model of \citet{hoff2002latent}, in the form considered by \citet{tang2013universally}.
\begin{definition}[Latent position model] 
\label{def:latent_position_model}
    Let $\mathcal{Z}$ be a compact metric space; let $\mu$ be a Borel probability measure supported on $\mathcal{Z}$; let $f: \mathcal{Z} \times \mathcal{Z} \rightarrow [0,1]$ be a continuous, symmetric, positive semi-definite kernel; and let $Z_1,\ldots,Z_n$ be i.i.d. draws from $\mu$. Given $Z_1,\ldots,Z_n$, the elements $A_{ij}$ of $\mathbf{A}$ are conditionally independent and distributed:
\begin{equation}\label{eq:bernoulli}
    A_{ij} \mid Z_1, \ldots, Z_n \overset{ind}{\sim} \mathrm{Bernoulli}\left\{f(Z_i, Z_j)\right\}, \quad \text{for $i < j$}.
    \end{equation} 
\end{definition}

In the setting of Definition \ref{def:latent_position_model},
Mercer's theorem \citep[Thm. 4.49]{steinwart2008support} asserts the existence of a collection of functions $(u_{k})_{k\geq1}$ which are
orthonormal in $L_{2}(\mathcal{Z},\mu)$, and nonnegative
real numbers $\lambda_{1}\geq\lambda_{2}\geq\cdots$, such that 
\begin{equation}\label{eq:mercer_expansion}
f(x,y)=\sum_{k=1}^{\infty}\lambda_{k}u_{k}(x)u_{k}(y),
\end{equation}
where the convergence is absolute and uniform in $x,y$. The $(u_k,\lambda_k)_{k\geq 1}$ are eigen-function/value pairs of the integral operator $\mathcal{A}: L_2(\mathcal{Z}, \mu) \to L_2(\mathcal{Z}, \mu)$, 
$$\mathcal{A}g(x) \coloneqq \int_{\mathcal{Z}} f(x,y) g(y) \mu (\mathrm{d}y),$$
which is trace-class, since with $Z\sim\mu$, $\sum_{k\geq 1}\lambda_k = \sum_{k\geq 1}\lambda_k \mathbb{E}[|u_k(Z)|^2] = \mathbb{E}[f(Z,Z)]\leq 1 $, using $\sup_{x,y}f(x,y)\leq 1$.

Defining the \emph{Mercer feature map}  $\phi:\mathcal{Z}\to\ell_{2}$, $\phi(x)\coloneqq[\lambda_{1}^{1/2}u_{1}(x),\lambda_{2}^{1/2}u_{2}(x),\cdots]$,
we can regard evaluation of $f$ at some $x,y$ as evaluation of the $\ell_2$
inner-product: 
\begin{equation}\label{eq:mercer_inner_prod_form}
f(x,y)\equiv\left\langle \phi(x),\phi(y)\right\rangle.
\end{equation}
Under mild assumptions, the set $\mathcal{M}\coloneqq\{\phi(x);x\in\mathcal{Z}\}$ equipped with the $\ell_2$ distance  conveys the topological structure of $\mathcal{Z}$. For example, if for all $x, y\in\mathcal{Z}$ with $x \neq y$, there exists $a\in\mathcal{Z}$ such that $f(x,a)\neq f(y,a)$, then  $\phi$ is a homeomorphism, so that $\mathcal{Z}$ and $\mathcal{M}$ are
topologically equivalent.  A proof of this fact is given in Appendix \ref{app:ase}; see \citet{whiteley2022statistical} for further discussion of the manifold structure of $\mathcal{M}$ under various assumptions on $f$.

The ASE embedding can then be regarded as approximating the point cloud $\{\phi(Z_{1}),\ldots,\phi(Z_{n})\}$, with approximation error which can be separated into two components: \emph{truncation} error
which arises since the ASE embedding is of dimension $r<\infty$ but, in general $\phi(Z_{i})$ has more than $r$ (possibly infinitely many) non-zero entries; 
and \emph{statistical} error which arises since under the latent position
model the adjacency matrix $\mathbf{A}$ involves the randomness of
$Z_{1},\ldots,Z_{n}$ and the Bernoulli-distributed variables in (\ref{eq:bernoulli}).

Let $\phi^{(r)}(x)$ denote the truncation of the vector $\phi(x)\in\ell_2$ to its first
$r$ entries. Then by Schmidt's low-rank approximation theorem \citep{schmidt1907theorie} which we quote in full in Appendix \ref{app:ase} (infinite-dimensional analog of the Eckart-Young-Mirsky Theorem), the map $\phi^{(r)}$, or any orthogonal
transformation thereof, achieves best rank-$r$ approximation to $\phi$
in that, for any $r\geq1$,
\begin{equation}
\phi^{(r)}=\argmin_{\tilde{\phi}:\mathcal{Z}\to\mathbb{R}^r}\mathbb{E}\left[\left|\left\langle \phi(Z),\phi(Z^\prime)\right\rangle -\left\langle \tilde{\phi}(Z),\tilde{\phi}(Z^\prime)\right\rangle \right|^{2}\right],\quad\text{where}\quad Z,Z^\prime\stackrel{\mathrm{iid}}{\sim}\mu.\label{eq:phi^r_opt}
\end{equation}
This makes precise the sense in which the geometry of $\{\phi^{(r)}(x);x\in\mathcal{Z}\}$
resembles that of  $\mathcal{M}$.
Moreover, if we pad the vector $\phi^{(r)}(x)$ with zeros so it is
the same length as $\phi(x)$, it follows from \eqref{eq:mercer_expansion},  \eqref{eq:mercer_inner_prod_form} and $\int_{\mathcal{Z}} |u_k(x)|^2 \mu(\mathrm{d}x)=1$ that the truncation error associated with
$\phi^{(r)}$ can be written: 
\begin{equation}\mathbb{E}\left[\left\Vert \phi(Z_i)-\phi^{(r)}(Z_i)\right\Vert ^{2}\right]=\sum_{k>r}\lambda_{k},\label{eq:ase_trunc_err}
\end{equation}
for any $i=1,\ldots,n$.

In their analysis of the statistical error associated with ASE, \cite{tang2013universally} showed that there exists an orthogonal matrix $\mathbf{Q} \in \R^{r \times r}$ such that  $\hat{\mathbf{X}}\mathbf{Q}$ is probabilistically close to the point cloud $\{\phi^{(r)}(Z_{1}),\ldots,\phi^{(r)}(Z_{n})\}$.  In particular, by \citet[eq. 3.9]{tang2013universally},  the statistical error can be bounded:
\begin{equation}
\label{eq:ase_stat_error}  \mathbb{E}\left[\left\Vert \hat{X}_i^\top\mathbf{Q}-\phi^{(r)}(Z_i)\right\Vert^2 \right]^{1/2} \leq \frac{27 \sqrt{6r} }{(\lambda_r - \lambda_{r+1})^{2}} \sqrt{\frac{\log n}{n}}.
\end{equation}
Therefore, in order for the combined truncation error \eqref{eq:ase_trunc_err} and statistical error \eqref{eq:ase_stat_error} to be small, one would like to choose $r$ such that  $\sum_{k>r}\lambda_k $ is small, and the eigengap $\lambda_r - \lambda_{r+1}$ is large. 
In practice, in search of a dimension that satisfies these conditions,
one usually inspects the eigenvalues of $\mathbf{A}$ as an approximation; as $n \to \infty$ the eigenvalues of $\mathbf{A}/n$ converge to the eigenvalues of $\mathcal{A}$, in some sense.
However, it is not uncommon for the eigenvalues of $\mathbf{A}$ to decay slowly and exhibit no discernable eigengap; see, for example, the left-hand plot of Figure \ref{fig:road_network_eig_decay}.

\begin{figure}
    \centering
    \includegraphics[width=0.8\linewidth]{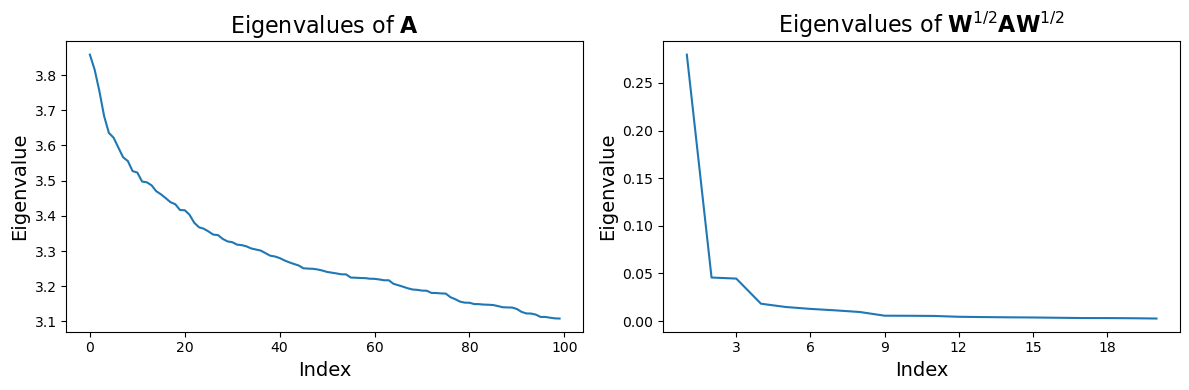}
    \caption{Left: the top 100 eigenvalues of the adjacency matrix, $\mathbf{A}$, of the Bristol Road Network (see Section \ref{sec:road_network} for details of the dataset), where $n = 3857$. Right: the top 20 eigenvalues of $\mathbf{W}^{1/2}\mathbf{A}\mathbf{W}^{1/2}$, where $\mathbf{W}$ is a diagonal matrix of weights $w_1, \ldots, w_n > 0$ selected to `localise' LASE (Algorithm \ref{alg:lase}) around a node of interest, $z$, based on the graph distance from $z$ (see Section \ref{weights_computation} - graph-distance-based weighting).}
    \label{fig:road_network_eig_decay}
\end{figure}

We make this (fairly common) positive semidefinite assumption on $f$ because it greatly simplifies the theory. If we relax this we lose access to Mercer's theorem; analogous decompositions \citep{lei2021network,rubin2020manifold} typically only hold in a weaker sense. An analogous geometric interpretation could possibly be phrased within a pseudo-Euclidean framework, as in the finite rank example of the generalised random dot product graph \citep{rubin2022statistical}, but in our view this would represent too much conceptual development for one paper. One should not use LASE, as it appears in Algorithm~\ref{alg:lase}, i.e., using only the positive eigenvalues, if $\mathbf{W}^{1/2} \mathbf{A} \mathbf{W}^{1/2}$ shows significant negative eigenvalues.

\section{Local Adjacency Spectral Embedding}\label{sec:LASE}

\subsection{The LASE algorithm}

\begin{algorithm}[H]
\caption{Local Adjacency Spectral Embedding (LASE)}
\label{alg:lase}
\textbf{Input:} Symmetric adjacency matrix $\mathbf{A} \in \{0,1\}^{n \times n}$, 
node weights $w_1, \ldots, w_n > 0$, and fixed embedding dimension $r \geq 1$.
\begin{algorithmic}[1]
\State Construct the diagonal weight matrix 
$\mathbf{W} = \mathrm{diag}(w_1, \ldots, w_n) \in \mathbb{R}^{n \times n}$.
\State Form the matrix $\mathbf{W}^{1/2}\mathbf{A}\mathbf{W}^{1/2}$ and compute the $r$ largest eigenvalues, denoted $\lambda_1 \ge \lambda_2 \ge \cdots \ge \lambda_r$,  and their corresponding orthonormal eigenvectors 
$\mathbf{u}_1, \ldots, \mathbf{u}_r \in \mathbb{R}^n$.
\State Define 
$\mathbf{U}_w = [\mathbf{u}_1 \mid \cdots \mid \mathbf{u}_{r}] 
\in \mathbb{R}^{n \times r}$ 
and 
$\mathbf{\Lambda}_w = \mathrm{diag}(\lambda_1, \ldots, \lambda_{r}) 
\in \mathbb{R}^{r \times r}$.
\State Compute the embedding
\[
\hat{\mathbf{X}} 
= \mathbf{W}^{-1/2}\mathbf{U}_w \mathbf{\Lambda}_w^{1/2} 
\in \mathbb{R}^{n \times r},
\]
and denote the $i$-th row of $\hat{\mathbf{X}}$ by 
$\hat{X}_i^\top$.
\end{algorithmic}
\textbf{Output:} Embedding vectors 
$\hat{X}_1, \ldots, \hat{X}_n \in \mathbb{R}^{r}$.
\end{algorithm}
\vspace{0.5em}

Algorithm \ref{alg:lase} defines Local Adjacency Spectral Embedding (LASE), a weighted variant of ASE in which node-specific weights adjust the contribution of each node to the spectral decomposition. The role of the weights is to emphasise a region of interest by giving greater influence to nodes that are more relevant to the local structure. The algorithm itself treats the weights as exogenous inputs; assumptions on the origin of these weights are introduced only for theoretical analysis, and practical strategies for their selection are discussed in Section~\ref{weights_computation}. Analogous to Algorithm \ref{alg:ase}, the embedding dimension $r$ is a fixed parameter and under the latent position model for any $r \geq1$, it can be shown that $\lambda_r \geq 0$ holds with high probability for sufficiently large $n$. 

\paragraph{Weights in theory.}
For our theoretical analysis, we define a reweighted probability measure $\mu_w$ on $\mathcal{Z}$ by
\begin{equation}\label{eq:mu_w_defn}
\mu_w(\mathrm{d}x) = w(x)\,\mu(\mathrm{d}x),
\end{equation}
where $w:\mathcal{Z}\to(0,\infty)$ is continuous and normalised such that $\int_{\mathcal{Z}} w(x)\,\mu(\mathrm{d}x)=1$ (hence $\mu_w$ is a probability measure). We assume that the node weights in Algorithm \ref{alg:lase} are evaluations of $w$ at the latent positions:
\begin{assumption}\label{assump:weights}
For $i=1,\ldots,n$, $w_i = w(Z_i)$.
\end{assumption}

We have in mind $w$ being such that $\mu_w$ assigns most of its mass to a neighbourhood of a point $z^\star \in \mathcal{Z}$. Although this localisation of $\mu_w$ is not required for the statistical bounds in Theorem \ref{thm:concentration} to hold, its consequences for inducing rapid spectral decay and controlling truncation error are analysed in Theorem \ref{thm:lase_trunc_err} of Section~\ref{sec:LASE_trunc_error}.

\paragraph{Weights in practice.}
Algorithm~\ref{alg:lase} only requires non-negative weights $w_1,\ldots,w_n$ and is invariant to their global scale: replacing $w_i$ by $\alpha w_i$ for any $\alpha>0$ multiplies $\mathbf{W}^{1/2}\mathbf{A}\mathbf{W}^{1/2}$ by $\alpha$ and hence scales $\mathbf{\Lambda}_w$ by $\alpha$, while the embedding $\hat{\mathbf{X}}=\mathbf{W}^{-1/2}\mathbf{U}_w\mathbf{\Lambda}_w^{1/2}$ is unchanged. Consequently, no normalisation of the weights is required to compute $\hat{\mathbf{X}}$. Normalisation is relevant, however, when the eigenvalues are examined, e.g. via scree plots or compared across weight choices, in which case we impose a consistent convention such as $\sum_{i=1}^n w_i = n$.
As stated above, for theoretical results we assume $w(x)>0$ on $\operatorname{supp}(\mu) = \mathcal{Z}$ to avoid degeneracies in $\mathbf{W}^{-1/2}$. In practice, zero weights may be used to exclude nodes, which reduces LASE to act on an induced subgraph. Concrete strategies for selecting weights in practice are deferred to Section~\ref{weights_computation}.

\subsection{Inductive LASE}
\label{sec:inductive_lase}

A practical limitation of many graph embedding methods is the cost of re-embedding as new data arrive. LASE, however, admits a natural out-of-sample (inductive) extension, that avoids recomputing the eigendecomposition.
Using the eigendecomposition in Algorithm \ref{alg:lase}, it can be shown that the LASE embedding of the $i$-th node satisfies:
$$\hat{X}_i^\top = (\mathbf{A}\mathbf{W}^{1/2})_i \mathbf{U}_w \mathbf{\Lambda}_w^{-1/2}.$$
This formulation allows us to embed an out-of-sample node $v_{n+1}$ using only its connections to the existing graph, represented by $\mathbf{a} \in \{0,1\}^n$. This inductive embedding can be computed as detailed in Algorithm \ref{alg:inductive_lase}.

\begin{algorithm}[H]
\caption{Inductive LASE}\label{alg:inductive_lase}
\textbf{Input:} Connections of the new node to the existing graph $\mathbf{a} \in \{0,1\}^n$, existing node weights $w_1, \ldots, w_n > 0$, and pre-computed $\mathbf{U}_w, \mathbf{\Lambda}_w$ from Algorithm \ref{alg:lase}.
\begin{algorithmic}[1]
\State Form the weighted connection vector $\tilde{\mathbf{a}} \in \mathbb{R}^n$ with entries $\tilde{a}_j = a_j w_j^{1/2}$.
\State Compute the inductive embedding:
$$\hat{X}_{n+1}^{\top} = \tilde{\mathbf{a}}^\top \mathbf{U}_w \mathbf{\Lambda}_w^{-1/2}.$$
\end{algorithmic}
\textbf{Output:} Embedding vector $\hat{X}_{n+1} \in \mathbb{R}^{r}$.
\end{algorithm}

This calculation is computationally efficient, requiring only matrix-vector multiplication rather than a full spectral decomposition. 
Notably, this procedure does not require a weight for the new node. Instead, the node is positioned relative to the local geometry defined by the existing weights purely through the new node's connections.
Connections to nodes with high weight (those within the region of interest) exert a stronger influence on $\hat{X}_{n+1}$ than connections to nodes with low weight, preserving the locality of the embedding for out-of-sample data.

\subsection{Optimality and interpretation of the LASE feature map }\label{sec:LASE_feature_map}

In this section we define and discuss a feature map $\phi_w$, motivating our theoretical results in Sections \ref{sec:LASE_stat_error} and \ref{sec:LASE_trunc_error} which show that the LASE embedding approximates $\{\phi_w(Z_1),\ldots,\phi_w(Z_n)\}$. We stress that throughout Sections \ref{sec:LASE_feature_map}-\ref{sec:LASE_stat_error} we are continuing to assume that $\mathbf{A}$ is generated according to the Latent Position Model as per Definition \ref{def:latent_position_model}.

We take the definition of $\phi_w$ to be Mercer feature map associated with the kernel $f$ and measure $\mu_w$ \eqref{eq:mu_w_defn} (instead of the measure $\mu$ as in Section \ref{sec:background}). That is $\phi_w:\mathcal{Z}\to \ell_2$ is given by $\phi_w(x)\coloneqq[\tilde{\lambda}_{1}^{1/2}\tilde{u}_{1}(x),\tilde{\lambda}_{2}^{1/2}\tilde{u}_{2}(x),\cdots]$ where $(\tilde{\lambda}_k,\tilde{u}_k)_{k\geq 1}$ are eigen-value/function pairs associated with the integral operator $\mathcal{A}_w:L_2(\mathcal{Z},\mu_w)\to L_2(\mathcal{Z},\mu_w)$:
$$
\mathcal{A}_w g(x)\coloneqq \int_{\mathcal{Z}} f(x,y)g(y)\mu_w(\mathrm{d}y).
$$
Thus, recalling \eqref{eq:mercer_inner_prod_form} the following two equalities hold:
\begin{equation}\label{eq:two_feaure_maps}
f(x,y) = \langle\phi(x),\phi(y)\rangle =\langle\phi_w(x),\phi_w(y)\rangle, 
\end{equation}
and combining the second equality in \eqref{eq:two_feaure_maps} with an application of Schmidt's optimality theorem to $\phi_w$, we have, for any $r\geq 1$:
\begin{equation}
\phi_{w}^{(r)}=\argmin_{\tilde{\phi}:\mathcal{Z}\to\mathbb{R}^{r}}\mathbb{E}\left[\left|\left\langle \phi(Z),\phi(Z^\prime)\right\rangle -\left\langle \tilde{\phi}(Z),\tilde{\phi}(Z^\prime)\right\rangle \right|^{2}\right],\quad\text{where}\quad Z,Z^\prime\stackrel{\mathrm{iid}}{\sim}\mu_{w},\label{eq:phi_w^r_opt}
\end{equation}
where $\phi_{w}^{(r)}(\cdot)$ is the truncation of $\phi_w(\cdot)$ to its first $r$ elements. In the sense of \eqref{eq:phi_w^r_opt}, $\phi_w^{(r)}$ achieves the optimal rank-$r$ approximation to $\phi$ when approximation accuracy is quantified with respect to the measure $\mu_w$ rather than $\mu$ as in \eqref{eq:phi^r_opt}. In particular, we have in mind situations in which $\mu_w$ is concentrated around some point in $z^\star \in\mathcal{Z}$ (explored in Section \ref{sec:LASE_trunc_error}), in which case the expectation in \eqref{eq:phi_w^r_opt} quantifies approximation accuracy in a way which focuses on a neighbourhood of $z^\star$ and we can regard $\phi_w^{(r)}$ as the \emph{locally optimal} $r$-dimensional approximation to $\phi$.

\paragraph{Illustrative example.}
Figure \ref{fig:zooming_in} illustrates how $\phi_w^{(r)}$ changes as $\mu_w$ becomes increasingly localised. The left-most column shows $\phi^{(2)}$, i.e. $\phi_w^{(2)}$ when $\mu_w = \mu$. Increasing the concentration parameter $\tau$ concentrates $\mu_w$ about a point $z^\star$, 
and $\phi_w^{(2)}$ adapts so that inner products $\langle \phi(x), \phi(y) \rangle$ are more accurately approximated for latent positions $x,y$ that are more likely under $\mu_w$. Intuitively, $\phi_w^{(2)}$ "zooms in" on the region of the manifold $\mathcal{M}= \{\phi(z): z \in \mathcal{Z} \}$ where $\mu_w$ places most mass. In this example, the weights are $w_\tau(x)=\exp(-\tau|x-z^\star|)/N_\tau$ applied to $\mu = \text{Uniform}[0,10]$, where $N_\tau$ is the normalising constant that ensures $\mu_w$ is a probability measure, as in \eqref{eq:mu_w_defn}.

\begin{figure}[h]
    \centering
    \includegraphics[width=1\linewidth]{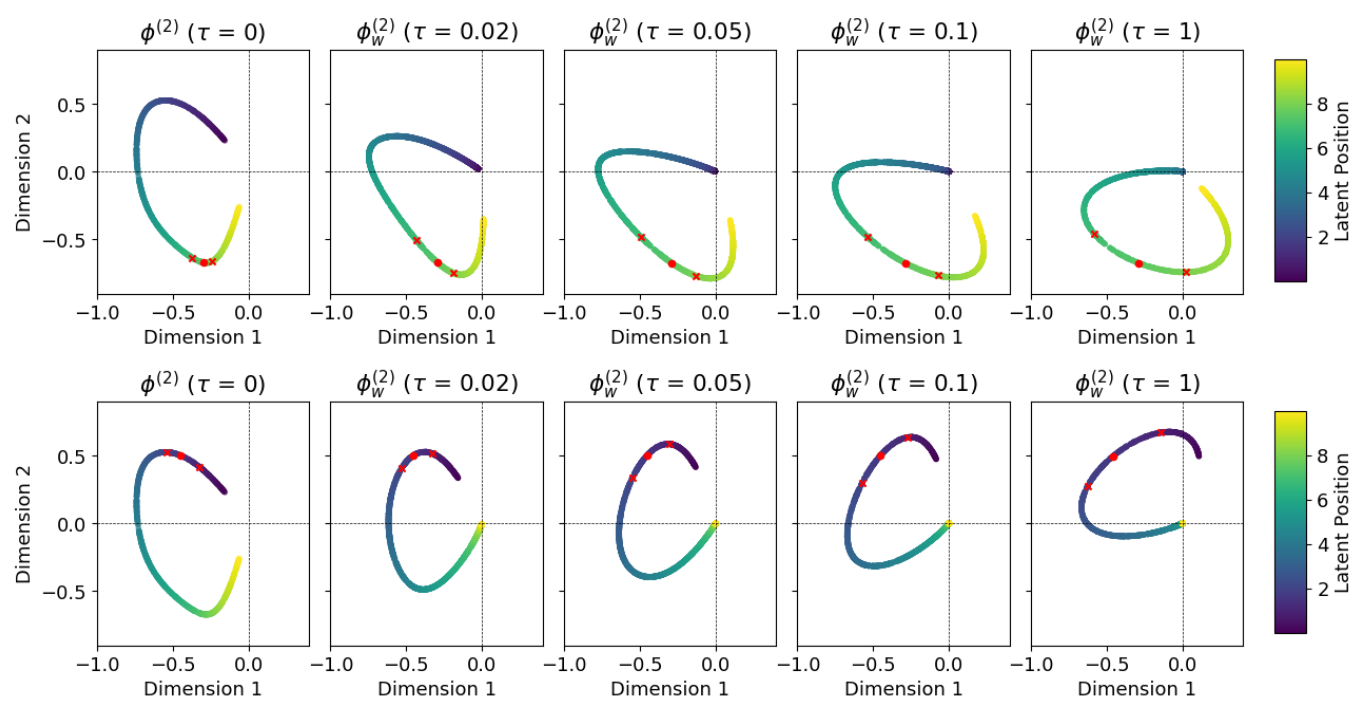}
    \caption{First two dimensions of the Mercer feature map $\phi$ (leftmost column) and the locally weighted feature map $\phi_w$ for increasing localisation (subsequent columns), controlled by the concentration parameter $\tau$.
    We use $\mu = \text{Uniform}[0,10]$ and $f(x,y)= \exp(-\tfrac{1}{2} (x-y)^2)$. Top row: localisation around $z \approx 1.5$. Bottom row: localisation around $z \approx 7.5$. In each row, $z^\star$ is indicated by a red circle and latent positions $z^\star \pm 0.5$ are indicated by red crosses. As $\tau$ increases, $\phi_w$ “zooms in” and better represents inner products for points likely under $\mu_w$. All plots align $\phi_w(z^\star)$ to $\phi(z^\star)$ using Procrustes analysis.}
    \label{fig:zooming_in}
\end{figure}

The optimality in \eqref{eq:phi_w^r_opt} is a generalisation of \eqref{eq:phi^r_opt}, in the sense the former reduces to the latter when  $w(x)=1$ for all $x$. Our next objective is to obtain a generalisation of the truncation identity \eqref{eq:ase_trunc_err} and provide more information about the relationship between $\phi$ and $\phi_w$. This is the subject of Theorem \ref{thm:projection_thm} below, for which we need the following definitions.  Consider the (infinite) matrix:
\begin{equation}\label{eq:K_w_defn}
\mathcal{K}_w := \mathbb{E} \left[ \phi(Z) \phi(Z)^\top \right], \quad\text{where}\quad Z \sim \mu_w.
\end{equation}
This defines a self-adjoint, positive operator that maps vectors $\bm{v} \in \ell_2$  to $\mathcal{K}_w \bm{v} \in \ell_2$. We show in Lemma \ref{lem:covariance_eigenvalues} in Appendix \ref{app:relating_A_K} that the spectrum of this operator is equal to that of $\mathcal{A}_w$, and we denote by $\bm{v}_k\in\ell_2$ the eigenvector of $\mathcal{K}_w$ associated with eigenvalue $\tilde{\lambda}_k$. Let $\mathrm{rank}(\mathcal{A}_w)$ denote the number of non-zero eigenvalues of $\mathcal{A}_w$.
\begin{thm}
\label{thm:projection_thm}
   \textit{ For any $r \leq \mathrm{rank}(\mathcal{A}_w)$, the following hold:}
   
   \noindent\textit{1) With $Z \sim \mu_w$,
   $$\min_{\boldsymbol{\Pi}_r} \mathbb{E} \left[ \Vert \phi(Z) - \boldsymbol{\Pi}_r \phi(Z) \Vert^2  \right] = \sum_{i = r+1}^\infty \Tilde{\lambda}_i,$$
   where the minimum is over orthogonal projections $\boldsymbol{\Pi}_r$ onto any $r$-dimensional linear 
    subspace of $\ell_2$.}

   \noindent\textit{2) With $\bm{v}_1, \cdots,\bm{v}_r \in \ell_2$ the top $r$ eigenvectors of $\mathcal{K}_w$ and $\mathbf{V}_r := [\bm{v}_1 | \cdots | \bm{v}_r]$, the orthogonal projection $\mathbf{V}_r \mathbf{V}_r^\top$ achieves the minimum in 1).}

   \noindent\textit{3) The equality $\mathbf{V}_r^\top \phi(x) = \phi_w^{(r)}(x)$  holds for all $x \in \mathcal{Z}$.}
\end{thm}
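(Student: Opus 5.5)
The plan is to treat $\mathcal{K}_w$ as the (uncentred) second-moment operator of the $\ell_2$-valued random vector $\phi(Z)$, $Z\sim\mu_w$, and run the infinite-dimensional PCA argument. First I will check that $\mathcal{K}_w$ is a positive, self-adjoint trace-class operator. Its trace is $\mathbb{E}\|\phi(Z)\|^2=\mathbb{E}[f(Z,Z)]\le 1$, using \eqref{eq:mercer_inner_prod_form}. By Lemma \ref{lem:covariance_eigenvalues}, its nonzero eigenvalues are exactly $\tilde\lambda_1\ge\tilde\lambda_2\ge\cdots$, with orthonormal eigenvectors $\bm v_k$. Hence $\sum_k\tilde\lambda_k=\mathrm{tr}(\mathcal{K}_w)=\mathbb{E}\|\phi(Z)\|^2$.

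\textbf{Parts 1) and 2).} For any rank-$r$ orthogonal projection $\boldsymbol\Pi_r$, Pythagoras and exchange of expectation and trace (justified by Tonelli, since all terms are nonnegative) give
\[
\mathbb{E}\|\phi(Z)-\boldsymbol\Pi_r\phi(Z)\|^2=\mathbb{E}\|\phi(Z)\|^2-\mathbb{E}\|\boldsymbol\Pi_r\phi(Z)\|^2=\sum_k\tilde\lambda_k-\mathrm{tr}(\boldsymbol\Pi_r\mathcal{K}_w).
\]
To see the last equality, write $\boldsymbol\Pi_r=\sum_{j\le r}\bm e_j\bm e_j^\top$ for an orthonormal basis $\bm e_1,\dots,\bm e_r$ of its range. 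Then $\mathbb{E}\|\boldsymbol\Pi_r\phi(Z)\|^2=\sum_{j\le r}\mathbb{E}\langle \bm e_j,\phi(Z)\rangle^2=\sum_{j\le r}\langle \bm e_j,\mathcal{K}_w\bm e_j\rangle$. By Ky Fan's maximum principle for compact self-adjoint positive operators, $\sum_{j\le r}\langle \bm e_j,\mathcal{K}_w\bm e_j\rangle\le\sum_{k\le r}\tilde\lambda_k$, with equality when $\bm e_j=\bm v_j$. This proves 1) and 2) together.

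Ky Fan's principle in $\ell_2$ is the main technical step. If I cannot cite it, I will prove it directly. Expand $\langle \bm e_j,\mathcal{K}_w\bm e_j\rangle=\sum_k\tilde\lambda_k\langle \bm e_j,\bm v_k\rangle^2$, which also accounts for any kernel component, where the eigenvalue is $0$. Set $c_k=\sum_{j\le r}\langle \bm e_j,\bm v_k\rangle^2$. Then $c_k\in[0,1]$ and $\sum_k c_k\le r$, and maximising $\sum_k\tilde\lambda_kc_k$ under these constraints is a fractional-knapsack problem. Its maximum is $\sum_{k\le r}\tilde\lambda_k$, attained at $c_1=\dots=c_r=1$. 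The hypothesis $r\le\mathrm{rank}(\mathcal{A}_w)$ guarantees that $\bm v_1,\dots,\bm v_r$ are well defined with positive eigenvalues.

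\textbf{Part 3).} The task is to identify $\bm v_k$ explicitly. I will set $\bm v_k:=\tilde\lambda_k^{-1/2}\,\mathbb{E}[\phi(Z)\tilde u_k(Z)]$ with $Z\sim\mu_w$, and verify three things.
\begin{itemize}
\item Orthonormality. We have $\langle \bm v_k,\bm v_l\rangle=(\tilde\lambda_k\tilde\lambda_l)^{-1/2}\iint f(x,y)\tilde u_k(x)\tilde u_l(y)\,\mu_w(\mathrm{d}x)\mu_w(\mathrm{d}y)=\delta_{kl}$, using the eigen-relation $\mathcal{A}_w\tilde u_l=\tilde\lambda_l\tilde u_l$ and orthonormality of the $\tilde u_k$ in $L_2(\mu_w)$.
\item Eigenvector property. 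We have $\mathcal{K}_w\bm v_k=\tilde\lambda_k^{-1/2}\mathbb{E}_{Z}\big[\phi(Z)\,\mathbb{E}_{Z'}[f(Z,Z')\tilde u_k(Z')]\big]=\tilde\lambda_k\bm v_k$. This is presumably the content of Lemma \ref{lem:covariance_eigenvalues}.
\item Coordinates. For any $x$, $\langle \bm v_k,\phi(x)\rangle=\tilde\lambda_k^{-1/2}\int f(x,y)\tilde u_k(y)\,\mu_w(\mathrm{d}y)=\tilde\lambda_k^{-1/2}\,\mathcal{A}_w\tilde u_k(x)=\tilde\lambda_k^{1/2}\tilde u_k(x)$.
\end{itemize}
The pointwise identity in the last step is valid for every $x$, not just $\mu_w$-a.e., because by continuity of $f$ the Mercer eigenfunctions are taken as the continuous representatives $\tilde u_k=\tilde\lambda_k^{-1}\mathcal{A}_w\tilde u_k$. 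Hence $\mathbf V_r^\top\phi(x)=\phi_w^{(r)}(x)$.

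One caveat concerns repeated eigenvalues: eigenvectors are then defined only up to rotation within an eigenspace. I will fix the choice $\bm v_k$ as above, which matches the choice of $\tilde u_k$ made in defining $\phi_w$, so that part 3) holds exactly.
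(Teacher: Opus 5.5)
Your proposal is correct and follows the paper's proof. Both reduce parts 1) and 2) via the Pythagorean identity to maximising $\sum_{j\le r}\langle \bm e_j,\mathcal{K}_w\bm e_j\rangle$ and apply a variational principle for the top-$r$ eigenvalues. Both then obtain part 3) from the explicit eigenvectors $\bm v_k=\tilde\lambda_k^{-1/2}\mathbb{E}[\phi(Z)\tilde u_k(Z)]$ of Lemma~\ref{lem:covariance_eigenvalues}. Your two refinements add rigour the paper leaves implicit: Ky Fan's principle, with your self-contained knapsack argument, is the precise form of the step the paper attributes to Courant--Fischer--Weyl, and your use of continuous representatives of $\tilde u_k$ justifies the pointwise claim in part 3).
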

The proof is in Appendix \ref{app:relating_A_K}. Theorem \ref{thm:projection_thm} can be interpreted as follows. $\mathcal{K}_w$ is the (uncentered) population covariance matrix of the random vector $\phi(Z)$ with $Z\sim\mu_w$. Parts 1) and 2) of Theorem \ref{thm:projection_thm} indicate that when approximation error is quantified by integrating with respect to $\mu_w$, the optimal $r$-dimensional projection of $\phi(\cdot)$ is the projection onto the linear subspace spanned by the top $r$ eigenvectors $\{\bm{v}_1, \cdots , \bm{v}_r\}$ of this covariance matrix. Part 3) indicates that, expressed with respect to the basis $\{\bm{v}_1, \cdots , \bm{v}_r\}$, this optimal projection of $\phi(\cdot)$ is $\phi_w^{(r)}(\cdot)$.





From a finite-sample perspective, and drawing comparisons with the Eckart-Young-Mirsky interpretation of ASE in \eqref{eq:EYM}, we can view the output of Algorithm \ref{alg:lase} as the solution to the following weighted Frobenius norm minimisation problem:
\begin{lem}
\label{lem:local-EY}
With $\hat{\mathbf{X}}$, $\mathbf{A}$ and $\mathbf{W}$ as in Algorithm \ref{alg:lase}, we have
    $$\hat{\mathbf{X}} = \argmin_{\mathbf{X} \in \R^{n \times r}}\| \mathbf{W}^{1/2} (\mathbf{A} - \mathbf{XX}^\top) \mathbf{W}^{1/2} \|_{F}.$$
\end{lem}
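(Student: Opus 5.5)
The plan is to change variables and reduce the weighted problem to the unweighted Eckart--Young--Mirsky problem \eqref{eq:EYM}, applied to the matrix $\mathbf{B}\coloneqq\mathbf{W}^{1/2}\mathbf{A}\mathbf{W}^{1/2}$. For any $\mathbf{X}\in\R^{n\times r}$, set $\mathbf{Y}\coloneqq\mathbf{W}^{1/2}\mathbf{X}$. Then
\[
\mathbf{W}^{1/2}(\mathbf{A}-\mathbf{X}\mathbf{X}^\top)\mathbf{W}^{1/2}=\mathbf{B}-\mathbf{Y}\mathbf{Y}^\top .
\]
Since every $w_i>0$, the map $\mathbf{X}\mapsto\mathbf{W}^{1/2}\mathbf{X}$ is a bijection of $\R^{n\times r}$ onto itself, with inverse $\mathbf{Y}\mapsto\mathbf{W}^{-1/2}\mathbf{Y}$. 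So $\mathbf{X}$ minimises the weighted objective if and only if $\mathbf{Y}=\mathbf{W}^{1/2}\mathbf{X}$ minimises $\|\mathbf{B}-\mathbf{Y}\mathbf{Y}^\top\|_F$ over $\R^{n\times r}$. This is the same as minimising over positive semidefinite matrices $\mathbf{Y}\mathbf{Y}^\top$ of rank at most $r$.

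Next I solve the unweighted problem. Write the spectral decomposition $\mathbf{B}=\sum_k\lambda_k\mathbf{u}_k\mathbf{u}_k^\top$. The best positive semidefinite rank-$\le r$ approximation in Frobenius norm keeps the $r$ largest eigenvalues with their negative parts clipped, that is, $\sum_{k\le r}\max(\lambda_k,0)\mathbf{u}_k\mathbf{u}_k^\top$. This follows from the symmetric form of Eckart--Young--Mirsky as used in \eqref{eq:EYM}. When $\lambda_r\ge0$, which is the regime Algorithm~\ref{alg:lase} assumes, this optimum is $\mathbf{U}_w\mathbf{\Lambda}_w\mathbf{U}_w^\top$. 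One factorisation is $\mathbf{Y}^\star=\mathbf{U}_w\mathbf{\Lambda}_w^{1/2}$, and every minimising $\mathbf{Y}$ equals $\mathbf{Y}^\star\mathbf{Q}$ for some orthogonal $\mathbf{Q}$, the same caveat as in \eqref{eq:EYM}. Mapping back gives $\mathbf{X}^\star=\mathbf{W}^{-1/2}\mathbf{U}_w\mathbf{\Lambda}_w^{1/2}=\hat{\mathbf{X}}$, up to that orthogonal transformation.

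The main obstacle is justifying the positive semidefinite version of Eckart--Young--Mirsky, since \eqref{eq:EYM} is stated without proof. My first approach is the trace expansion
\[
\|\mathbf{B}-\mathbf{M}\|_F^2=\|\mathbf{B}\|_F^2-2\operatorname{tr}(\mathbf{B}\mathbf{M})+\|\mathbf{M}\|_F^2 ,
\]
with $\mathbf{M}=\mathbf{V}\mathbf{D}\mathbf{V}^\top$, where $\mathbf{V}$ has orthonormal columns and $\mathbf{D}\succeq0$ has size $r$. Maximising $\operatorname{tr}(\mathbf{B}\mathbf{M})$ for fixed $\mathbf{D}$ uses Ky Fan / von Neumann trace inequalities: the maximum occurs when $\mathbf{V}$ spans the top eigenvectors. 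Optimising the diagonal entries of $\mathbf{D}$ then gives $d_k=\max(\lambda_k,0)$. I would state the conclusion as equality of argmin sets modulo right-multiplication by orthogonal matrices, matching the convention of \eqref{eq:EYM}.
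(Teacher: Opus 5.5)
Your proposal is correct and follows the paper's own proof. Both use the substitution $\mathbf{Y}=\mathbf{W}^{1/2}\mathbf{X}$, a bijection of $\R^{n\times r}$ since all $w_i>0$, to reduce the problem to Eckart--Young--Mirsky for $\mathbf{W}^{1/2}\mathbf{A}\mathbf{W}^{1/2}$; your handling of the positive semidefinite constraint (clipping negative eigenvalues, hence needing $\lambda_r\ge 0$) spells out a point the paper leaves implicit. One small overstatement: the claim that \emph{every} minimiser is $\mathbf{Y}^\star\mathbf{Q}$ fails when $\lambda_r=\lambda_{r+1}>0$, because the optimal $\mathbf{Y}\mathbf{Y}^\top$ is then not unique, but the lemma only needs $\hat{\mathbf{X}}$ to attain the minimum, which your argument does establish.
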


\proof We have that
\begin{align*}
    \min_{\mathbf{X} \in \R^{n \times r}}\| \mathbf{W}^{1/2} (\mathbf{A} - \mathbf{XX}^\top) \mathbf{W}^{1/2} \|_{F}
    &= \min_{\mathbf{X} \in \R^{n \times r}}\| \mathbf{W}^{1/2}\mathbf{A} \mathbf{W}^{1/2} - (\mathbf{W}^{1/2} \mathbf{X}) (\mathbf{W}^{1/2}\mathbf{X})^\top \|_{F}.
\end{align*}
Let $\tilde{\mathbf{X}} := \mathbf{W}^{1/2} \mathbf{X}$, since $\mathbf{W}$ is diagonal with strictly positive diagonal entries, the linear map $\mathbf{X} \mapsto \tilde{\mathbf{X}}$ is a bijection on $\R^{n \times r}$ and preserves the column space dimension (so $\tilde{\mathbf{X}}$ ranges over all $n \times r$ matrices). Hence the original minimisation is equivalent to 
\begin{align*}
    \min_{\tilde{\mathbf{X}} \in \R^{n \times r}}\| \mathbf{W}^{1/2}\mathbf{A} \mathbf{W}^{1/2} -  \tilde{\mathbf{X}} \tilde{\mathbf{X}}^\top \|_{F},
\end{align*}
which, by the Eckart-Young-Mirsky Theorem, is minimised by $\tilde{\mathbf{X}} = \mathbf{U}_w \mathbf{\Lambda}_w^{1/2}$, where $ \mathbf{U}_w$ and  $\mathbf{\Lambda}_w$ are as defined in Algorithm \ref{alg:lase}. Thus, by substituting our definition of $\tilde{\mathbf{X}}$, we see that $\mathbf{X} = \mathbf{W}^{-1/2} \mathbf{U}_w \mathbf{\Lambda}_w^{1/2}$ achieves the minimum of interest.
\qed

This characterisation can be viewed as the finite-sample analogue of the infinite-dimensional Schmidt optimality problem in \eqref{eq:phi_w^r_opt}, foreshadowing the link between the output of the LASE algorithm and $\phi_w^{(r)}$, which we explore in detail in the following section.

\subsection{Bounding the statistical error of LASE}\label{sec:LASE_stat_error}

In this section, we bound the statistical error of LASE arising from finite-sample randomness in the observed adjacency matrix. Specifically, Theorem~\ref{thm:concentration} provides a finite-sample bound on the deviation between LASE and the locally optimal feature map $\phi_w^{(r)}$. 
As in standard ASE, this error is controlled by the spectral gap of the associated population operator, but here it is most naturally measured in a $\mathbf{W}^{1/2}$-weighted norm. This weighting reflects both the definition of the LASE algorithm, based on the matrix $\mathbf{W}^{1/2}\mathbf{A}\mathbf{W}^{1/2}$, and the fact that the optimality of $\phi_w^{(r)}$ in \eqref{eq:phi_w^r_opt} is defined with respect to the weighted measure $\mu_w$. Consequently, the error bound prioritises accuracy in regions where $\mu_w$ places most mass, while downweighting deviations in regions deemed less relevant. The resulting bound mirrors classical ASE rates, with an additional dependence on $w^*\coloneqq \sup_x w(x)$ that captures the statistical cost of localisation: highly concentrated weight functions increase variance by
effectively reducing the number of observations contributing to the embedding. This trade-off between localisation and statistical stability is explored further in Section~\ref{sec:LASE_trunc_error}.

\begin{thm}
\label{thm:concentration}
\textit{Suppose that $Z_1,\ldots,Z_n$ and $\mathbf{A}$ follow the model in Definition \ref{def:latent_position_model}
and $\tilde{\lambda}_r - \tilde{\lambda}_{r+1} > 0$ for some fixed $r\geq1$. Let $\hat{\mathbf{X}}$ be as in Algorithm \ref{alg:lase}.
With probability greater than $1 - 2\eta$, there exists an orthogonal matrix $\mathbf{Q} \in \R^{r \times r}$ such that
\begin{equation}
\label{conc_thm:eq_1}
    \left\Vert \mathbf{W}^{1/2} \left( \hat{\mathbf{X}} \mathbf{Q} - \mathbf{\Phi}_w^{(r)} \right) \right\Vert_F \leq \frac{27{w^*}^{3} }{(\tilde{\lambda}_r - \tilde{\lambda}_{r+1})^{2}}\sqrt{r \log(n / \eta)},
\end{equation}
where  $w^*\coloneqq \sup_x w(x)$ and $\mathbf{\Phi}_w^{(r)} \in \R^{n \times r}$ has $i$-th row equal to $\phi_w^{(r)}(Z_i)$. For each $i \in [n]$ and any $\epsilon > 0$,
\begin{equation}
\label{eq:3.1.2}
\mathbb{P} \left[ w(Z_i)^{1/2} \left\Vert \hat{X}_i^\top\mathbf{Q} - \phi_w^{(r)}(Z_i) \right\Vert \geq \epsilon\right] \leq \frac{27 {w^*}^3 }{\epsilon(\tilde{\lambda}_r - \tilde{\lambda}_{r+1})^{2}} \sqrt{\frac{6r\log n }{n}}.
\end{equation}}
\end{thm}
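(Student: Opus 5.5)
We mirror the argument of \citet{tang2013universally} for ASE, replacing the kernel matrix $\mathbf{K}=(f(Z_i,Z_j))_{ij}$ and the operator $\mathcal{A}$ by their weighted analogues $\mathbf{K}_w\coloneqq\mathbf{W}^{1/2}\mathbf{K}\mathbf{W}^{1/2}$ and $\mathcal{A}_w$. The observed matrix is $\mathbf{A}_w\coloneqq\mathbf{W}^{1/2}\mathbf{A}\mathbf{W}^{1/2}$, and by Assumption \ref{assump:weights} its conditional mean given $Z_1,\ldots,Z_n$ is $\mathbf{K}_w$, up to the diagonal. The argument has three steps.

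\textbf{Step 1: population side.} Note that $\mathbf{K}_w/n$ is the empirical analogue of the operator $g\mapsto\int f(\cdot,y)w(y)g(y)\mu(\mathrm{d}y)$, which is $\mathcal{A}_w$ written as an operator on $L_2(\mu)$. It is cleaner to symmetrise and use the kernel $h(x,y)\coloneqq w(x)^{1/2}f(x,y)w(y)^{1/2}$ on $L_2(\mu)$. This kernel is continuous and PSD, is bounded by $w^*$, and has the same nonzero eigenvalues $\tilde\lambda_k$. Its Mercer feature map is $w(x)^{1/2}\phi_w(x)$. Hence $\mathbf{K}_w=\mathbf{W}^{1/2}\mathbf{\Phi}_w\mathbf{\Phi}_w^\top\mathbf{W}^{1/2}$, and the scaled top-$r$ eigenspace of $\mathbf{K}_w$ is close to $\mathbf{W}^{1/2}\mathbf{\Phi}_w^{(r)}$.

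\textbf{Step 2: two concentration bounds.} We need:
\begin{itemize}
\item[(a)] $\|\mathbf{A}_w-\mathbf{K}_w\|\lesssim w^*\sqrt{n\log(n/\eta)}$ with probability at least $1-\eta$. This follows from a matrix Bernstein/Hoeffding bound, since the entries are independent, centred given the $Z_i$, and bounded by $w^*$.
\item[(b)] $\|\mathbf{K}_w/n-\mathcal{A}_w\|_{HS}\lesssim w^*\sqrt{\log(1/\eta)/n}$, via Hilbert-space Hoeffding for the rank-one operators $h(\cdot,Z_i)\otimes h(\cdot,Z_i)$ in the RKHS of $h$, as in Rosasco et al.\ and Tang et al.
\end{itemize}
A union bound gives probability at least $1-2\eta$.

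\textbf{Step 3: perturbation argument.} Davis--Kahan with gap $n(\tilde\lambda_r-\tilde\lambda_{r+1})$, combined with (b) to ensure the empirical eigengap is at least half the population gap, yields Procrustes alignment. We then copy Tang et al.'s argument: compare $\mathbf{U}_w\mathbf{\Lambda}_w^{1/2}$ with the corresponding scaled eigenvectors of $\mathbf{K}_w$, and the latter with $\mathbf{W}^{1/2}\mathbf{\Phi}^{(r)}_w$ through the operator correspondence of Lemma \ref{lem:covariance_eigenvalues}. The estimate then transfers directly because $\mathbf{W}^{1/2}\hat{\mathbf{X}}=\mathbf{U}_w\mathbf{\Lambda}_w^{1/2}$, so the bound is naturally in the $\mathbf{W}^{1/2}$-weighted Frobenius norm. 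The $w^*$ factors enter at three places: the bound on $h$ in Hoeffding, the spectral norm bound, and the $\mathbf{\Lambda}^{-1/2}$ or eigenvalue upper bound terms. Together these give ${w^*}^3$.

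\textbf{Pointwise bound \eqref{eq:3.1.2}.} By exchangeability of the rows, $\mathbb{E}[w(Z_i)\|\hat X_i^\top\mathbf{Q}-\phi^{(r)}_w(Z_i)\|^2]=\frac1n\mathbb{E}\|\mathbf{W}^{1/2}(\hat{\mathbf{X}}\mathbf{Q}-\mathbf{\Phi}^{(r)}_w)\|_F^2$. Choose $\eta=1/n^2$, bound the failure event trivially, and apply Markov's inequality to the square root (via Jensen), exactly as in \citet[eq.\ 3.9]{tang2013universally}.

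The main obstacle is Step 3. The difficulty is tracking the weights through the Tang et al.\ operator-to-matrix comparison: identifying the RKHS of $h$ with the appropriate space so that the eigenfunctions evaluated at the $Z_i$ reproduce $w(Z_i)^{1/2}\phi^{(r)}_w(Z_i)$, and checking that the constants combine into the advertised $27{w^*}^3$.
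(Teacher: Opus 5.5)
Your proposal is correct and is essentially the paper's proof. The paper likewise passes to the symmetrised kernel $f_{\mathrm{sym}}(x,y)=w(x)^{1/2}f(x,y)w(y)^{1/2}$ on $L_2(\mathcal{Z},\mu)$, whose Mercer map is $w^{1/2}\phi_w$, and reruns Tang et al.'s Davis--Kahan/Procrustes argument with $\kappa=\sup_x f_{\mathrm{sym}}(x,x)=w^*$ in the Rosasco bound. It then obtains the row-wise bound via exchangeability, $\eta=n^{-2}$ and Markov's inequality. The only differences are cosmetic: the paper gets your bound (a) directly from $\|\mathbf{W}^{1/2}(\mathbf{A}-\mathbf{P})\mathbf{W}^{1/2}\|\le w^*\|\mathbf{A}-\mathbf{P}\|$, identifies the target through a weighted analogue of Tang et al.'s RKHS projection lemma rather than Lemma~\ref{lem:covariance_eigenvalues}, and absorbs the lower powers of $w^*$ that actually arise into ${w^*}^3$ using $w^*\ge 1$ and $\tilde{\lambda}_r-\tilde{\lambda}_{r+1}\le 1$.
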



The proof of Theorem \ref{thm:concentration} can be found in Appendix \ref{app:concentration result}, and follows the main steps as the ASE consistency result of \cite[Thm. 3.1]{tang2013universally}. The second inequality provides a row-wise concentration bound, showing that for each node
$i$, the embedding error is controlled after scaling by its weight $w(Z_i)$, so that nodes with larger influence under $\mu_w$ enjoy tighter guarantees. When $w(x) = 1$ for all $x \in \mathcal{Z}$, the weighted norm reduces to the standard Frobenius norm and Theorem~\ref{thm:concentration} recovers the usual ASE concentration behaviour in \eqref{eq:ase_stat_error}.

\subsection{Bounding the truncation error of LASE}\label{sec:LASE_trunc_error}

In Section~\ref{sec:LASE_stat_error}, we showed that the statistical error bound for LASE can be viewed as a weighted analogue of the classical ASE concentration result. In this section, we draw an analogous comparison for the truncation error. For standard ASE, truncation of the Mercer feature map $\phi$ at rank $r$ incurs the mean-squared error in \eqref{eq:ase_trunc_err}. For LASE, with $Z \sim \mu$, truncation of the locally optimal feature map $\phi_w$ admits the weighted analogue
\begin{equation}\label{eq:weighted_trunc_err}
\mathbb{E}\!\left[ w(Z)\,\bigl\Vert \phi_w(Z)-\phi_w^{(r)}(Z)\bigr\Vert^2\right]
= \sum_{k>r}\tilde{\lambda}_k,
\end{equation}
which follows directly from the definition of $\phi_w$ and the orthonormality of the eigenfunctions $(\tilde{u}_k)_{k\geq1}$ in $L_2(\mathcal{Z},\mu_w)$. Consequently, controlling the truncation error of LASE reduces to understanding the decay of the eigenvalues $(\tilde{\lambda}_k)_{k>r}$.

The objective of this section is to characterise this eigenvalue decay when the weighted measure $\mu_w$ becomes increasingly concentrated around a point $z^\star\in\mathcal{Z}$, and to show that localisation induces a rapidly decaying spectrum, thereby ensuring that the truncation error of $\phi_w^{(r)}$ is small for an appropriate choice of the truncation dimension $r$.
To enable this analysis, we specialise the latent position model from Definition \ref{def:latent_position_model} to the case where $\mathcal{Z}$ is a subset of Euclidean space.


\begin{assumption}\label{ass:Z_in_R^d}
For some $d\geq 1$, the latent space $\mathcal{Z}$ is a compact subset of $\mathbb{R}^d$ with non-empty interior.
\end{assumption}
\begin{assumption}\label{ass:phi(x)_not_zero}
The point $z^\star$ is an interior point of $\mathcal{Z}$ and $\phi(z^\star)\neq 0$.
\end{assumption}
Requiring $z^\star$ to be an interior point of the compact set $\mathcal{Z}$ allows us to deploy elementary calculus arguments an open neighbourhood of $z^\star$. If $z^\star$ were allowed to be on the boundary of $\mathcal{Z}$, some other geometric assumptions about the boundary itself would be needed to make the same calculus arguments work.
The $\phi(z^\star)\neq 0$ part of \textbf{A\ref{ass:phi(x)_not_zero}} is a non-degeneracy assumption: if, conversely, $\phi(z^\star)=0$, then for any $x\in\mathcal{Z}$, $f(z^\star,x)=\langle \phi(z^\star),\phi(x)\rangle=0$, i.e., under the Latent Position Model there would be zero probability of any node connecting to a node with $Z_i=z^\star$.

\begin{assumption}\label{ass:differentiability}
For $i,j\in[d]$, the mixed partial derivatives:
$
\frac{\partial^2 f }{\partial x^{(i)} \partial y^{(j)} }
$ of $f(x,y)$
exist and are continuous, where $x=(x^{(1)},\ldots,x^{(d)})$ and $y=(y^{(1)},\ldots,y^{(d)})$. Moreover the matrix $\mathbf{H}_{x,y} \in \R^{d \times d}$ with $(i,j)$-th element
$$\mathbf{H}_{x,y}^{ij} := \frac{\partial^2 f}{\partial x^{(i)} \partial y^{(j)}}\bigg|_{(x,y)}, \hspace{10pt} \text{for } x,y \in \mathcal{Z},$$
satisfies:
$$
|\mathbf{H}_{x,x}^{ii}-\mathbf{H}_{x,y}^{ii}|\leq L_i \|x-y\|
$$
for some finite constants $L_i$, $i\in[d]$, and all $x,y\in\mathcal{Z}$.
\end{assumption}
Assumption \textbf{A\ref{ass:differentiability}} allows us to differentiate in the following sense: $\partial^2 f(x,y) /\partial x^{(i)} \partial y^{(j)}  = \partial^2 \langle \phi(x),\phi(y)\rangle / \partial x^{(i)} \partial y^{(j)} =  \langle \partial \phi(x)/ \partial x^{(i)}, \partial\phi(y)/\partial y^{(j)}\rangle$. For background on this differentiability of $\phi$ under \textbf{A\ref{ass:differentiability}} see, e.g. \citet[Lem. 4.34]{steinwart2008support}. 
We shall denote by  $\partial \phi(x)$ the matrix whose $(k,i)$-th element is the partial derivative of $\lambda_k^{1/2} u_k(x)$ with respect to $x^{(i)}$, where $x=
(x^{(1)}, \ldots, x^{(d)})$, so that $\mathbf{H}_{x,y} \equiv \partial \phi(x)^\top \partial \phi(y)$. This highlights that the matrix $\mathbf{H}_{x,y}$ is a symmetric function of $(x,y)$.

Now, let us define the \textit{local latent dimension} at $z^\star \in \mathcal{Z}$ to be
\begin{equation}\label{eq:d_loc_defn}
d_\text{loc}(z^\star) := \text{rank}\left(\mathbf{H}_{z^\star,z^\star}\right).
\end{equation}
The quantity $d_\text{loc}(z^\star)$ captures the intrinsic dimensionality of the kernel-induced geometry in a neighbourhood of $z^\star$. Note that, by definition, $d_\text{loc}(z^\star) \leq d$ and $d_\text{loc}(z^\star)$ depends on $f$ but not on $\mu$ or $\mu_w$. Intuitively, $d_\text{loc}(z^\star)<d$ corresponds to locally lower-dimensional structure: near $z^\star$, the set $\{\phi(z):z\in\mathcal{Z}\}$ is well approximated by a $d_\text{loc}(z^\star)$-dimensional tangent space. In this sense, we regard the set $\{\phi(z):z\in\mathcal{Z}\}$ as a manifold, albeit we do not assume that $\phi$ is invertible.

We next introduce assumptions that allow us to quantify the concentration of $\mu_w$  around $z^\star$. We consider a parameter $\epsilon>0$, such that $\mu_w$ becoming more concentrated corresponds to $\epsilon\to 0$: from hereon in Section \ref{sec:LASE_trunc_error} we assume that $\mu_w$ depends on $\epsilon$ (although this dependence is not shown in the notation $\mu_w$), such that assumptions \textbf{A\ref{ass:epsilon_sigma}} and \textbf{A\ref{ass:higher_moments}} are satisfied.
\begin{assumption}
\label{ass:epsilon_sigma}
    With $Z \sim \mu_w$,
    $\mathbb{E} \left[ (Z-z^\star)(Z-z^\star)^\top \right] = \epsilon \mathbf{\Sigma,}$
    for some fixed, rank-$d$ matrix $\mathbf{\Sigma}$.
\end{assumption}
\begin{assumption}
\label{ass:higher_moments}
    With $Z \sim \mu_w$, $\mathbb{E}[\Vert Z-z^\star\Vert^q] = o(\epsilon)$ for $q=5/2,3$. 
\end{assumption}

The main result of Section \ref{sec:LASE_trunc_error} is Theorem \ref{thm:lase_trunc_err} below, which quantifies the behaviour of  $(\tilde{\lambda}_i)_{i\geq 1}$ as $\epsilon\to 0$. As we have seen in Sections \ref{sec:LASE_feature_map} and  \ref{sec:LASE_stat_error}, $(\tilde{\lambda}_i)_{i\geq 1}$ are the eigenvalues of $\mathcal{A}_w$ and $\mathcal{K}_w$. As a step to analysing the eigenvalues of $\mathcal{K}_w$ we introduce:
$$\mathcal{D}_w := \mathbb{E} \left[ \left( \phi(Z) - \phi(z^\star) \right)\left( \phi(Z) - \phi(z^\star) \right)^\top \right], \qquad\text{where}\quad Z \sim \mu_w,$$ with eigenvalues denoted $\lambda_1(\mathcal{D}_w)\geq \lambda_2(\mathcal{D}_w)\geq \ldots $.
We have:
$$
\mathcal{K}_w = \mathcal{D}_w + \Bar{\phi}_w\Bar{\phi}_w^\top -  \bm{\gamma}_w\bm{\gamma}_w^\top,
$$
where, with $Z\sim \mu_w$, $\Bar{\phi}_w \coloneqq \mathbb{E}[\phi(Z)]$ and $\bm{\gamma}_w\coloneqq \Bar{\phi}_w- \phi(z^\star)$.

The following proposition quantifies the asymptotics of the eigenvalues of $\mathcal{D}_w$ with respect to  $\epsilon$.
\begin{prop}
\label{prop:eigenvalues_of_D}
Assume \textbf{A\ref{ass:Z_in_R^d}}-\textbf{A\ref{ass:higher_moments}}. Then, for $i=1,\ldots, d_\text{loc}(z^\star)$, $\lambda_i(\mathcal{D}_w) = \Theta(\epsilon)$; and  $
\sum_{i> d_\text{loc}(z^\star)} \lambda_i(\mathcal{D}_w) = o(\epsilon),
$ as $\epsilon \to 0$.
\end{prop}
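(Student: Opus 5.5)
Our plan is to Taylor-expand $\phi$ around $z^\star$ and compare $\mathcal{D}_w$ with the finite-rank operator $\epsilon\,\partial\phi(z^\star)\mathbf{\Sigma}\,\partial\phi(z^\star)^\top$. Write $\mathbf{J}:=\partial\phi(z^\star)$, viewed as a map $\R^d\to\ell_2$. By \textbf{A\ref{ass:differentiability}}, for $z$ in a convex neighbourhood $B$ of $z^\star$ (available by \textbf{A\ref{ass:phi(x)_not_zero}}), we set
$$\phi(z)-\phi(z^\star)=\mathbf{J}(z-z^\star)+R(z),\qquad R(z)=\int_0^1\bigl(\partial\phi(z^\star+t(z-z^\star))-\mathbf{J}\bigr)(z-z^\star)\,\rd t.$$
The first task is to bound $\|R(z)\|$. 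Using $\mathbf{H}_{x,y}=\partial\phi(x)^\top\partial\phi(y)$, we have for each coordinate direction $i$
$$\bigl\|\partial_i\phi(x)-\partial_i\phi(z^\star)\bigr\|^2=\mathbf{H}^{ii}_{x,x}-2\mathbf{H}^{ii}_{x,z^\star}+\mathbf{H}^{ii}_{z^\star,z^\star}\le 2L_i\|x-z^\star\|,$$
where the Lipschitz condition is applied twice, using the symmetry of $\mathbf{H}$. Hence $\|\partial\phi(x)-\mathbf{J}\|\le C\|x-z^\star\|^{1/2}$, and therefore $\|R(z)\|\le C\|z-z^\star\|^{3/2}$. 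This is exactly why the moments of order $5/2$ and $3$ appear in \textbf{A\ref{ass:higher_moments}}. Outside $B$, $\|\phi(z)-\phi(z^\star)\|$ is bounded because $f\le1$, and $\|z-z^\star\|$ is bounded below. So both $\|R(z)\|$ and $\|\phi(z)-\phi(z^\star)\|^2$ are at most a constant times $\|z-z^\star\|^3$ there, and the bound $\|R(z)\|^2\le C\|z-z^\star\|^3$ can be taken to hold on all of $\mathcal{Z}$.

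Next we decompose, with $Z\sim\mu_w$ and $\Delta:=Z-z^\star$,
$$\mathcal{D}_w=\mathbf{J}\,\mathbb{E}[\Delta\Delta^\top]\mathbf{J}^\top+\mathbb{E}[\mathbf{J}\Delta R(Z)^\top+R(Z)\Delta^\top\mathbf{J}^\top]+\mathbb{E}[R(Z)R(Z)^\top]=\epsilon\,\mathbf{J}\mathbf{\Sigma}\mathbf{J}^\top+\mathcal{E},$$
using \textbf{A\ref{ass:epsilon_sigma}}. All operators here are positive or trace class. We bound the trace norm of $\mathcal{E}$. The last term has trace $\mathbb{E}\|R\|^2\le C\,\mathbb{E}\|\Delta\|^3=o(\epsilon)$. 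Each cross term has trace norm at most $\|\mathbf{J}\|\,\mathbb{E}[\|\Delta\|\|R\|]\le C\,\mathbb{E}\|\Delta\|^{5/2}=o(\epsilon)$. Hence $\|\mathcal{E}\|_{1}=o(\epsilon)$.

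The main term $\mathbf{J}\mathbf{\Sigma}\mathbf{J}^\top$ is positive semidefinite with rank equal to $\mathrm{rank}(\mathbf{J})=\mathrm{rank}(\mathbf{J}^\top\mathbf{J})=\mathrm{rank}(\mathbf{H}_{z^\star,z^\star})=d_\text{loc}(z^\star)$, since $\mathbf{\Sigma}$ is positive definite (rank $d$). Its nonzero eigenvalues $\nu_1\ge\dots\ge\nu_{d_\text{loc}}>0$ are fixed and independent of $\epsilon$. By Weyl's inequality, $|\lambda_i(\mathcal{D}_w)-\epsilon\nu_i|\le\|\mathcal{E}\|=o(\epsilon)$, which gives $\lambda_i(\mathcal{D}_w)=\Theta(\epsilon)$ for $i\le d_\text{loc}$. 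For the tail, the Ky Fan characterisation gives
$$\sum_{i>d_\text{loc}}\lambda_i(\mathcal{D}_w)=\min_{\mathrm{rank}\,P\le d_\text{loc}}\mathrm{tr}\bigl((I-P)\mathcal{D}_w\bigr).$$
Taking $P$ to be the projection onto the range of $\mathbf{J}$ kills the main term, leaving $\mathrm{tr}((I-P)\mathcal{E})\le\|\mathcal{E}\|_1=o(\epsilon)$.

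The main obstacle is the remainder bound in the first step. It requires that $\phi$ be differentiable in $\ell_2$ along segments with $\partial\phi$ continuous, and that the Lipschitz hypothesis on the diagonal entries $\mathbf{H}^{ii}$ alone controls $\|\partial\phi(x)-\mathbf{J}\|$. The identity above settles the latter, since the Frobenius norm is bounded by the sum over $i$ of $\|\partial_i\phi(x)-\partial_i\phi(z^\star)\|^2$. For the former we would invoke \citet[Lem. 4.34]{steinwart2008support}. We also need to handle measure placed near the boundary or off $B$, which the global cubic bound above does.
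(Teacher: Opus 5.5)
Your proof is correct. For the first claim it matches the paper's argument: both compare $\mathcal{D}_w$ with $\epsilon\,\partial\phi(z^\star)\mathbf{\Sigma}\,\partial\phi(z^\star)^\top$. Both bound $\|\partial\phi(x)-\partial\phi(z^\star)\|_F^2$ by $2\sum_i L_i\|x-z^\star\|$, using the diagonal Lipschitz condition twice together with the symmetry of $\mathbf{H}$. Both control the trace norms of the cross and quadratic remainder terms by the moments of order $5/2$ and $3$, and both finish with Weyl's inequality.

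The first difference is minor. You extend $\|R(z)\|\lesssim\|z-z^\star\|^{3/2}$ to all of $\mathcal{Z}$. Strictly this also needs $\|\mathbf{J}(z-z^\star)\|$ bounded off the ball, which compactness supplies. The paper instead splits off the event $\{Z\notin B_\delta\}$ as a separate term $\mathbf{Q}$ and handles it by Markov's inequality with the third moment. These are the same estimate.

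The genuinely different step is the tail sum. The paper invokes Kato's trace-norm eigenvalue-variation theorem, a triangle inequality and the Weyl bound on the top $d_\text{loc}(z^\star)$ eigenvalues. This gives $\sum_{i>d_\text{loc}(z^\star)}\lambda_i(\mathcal{D}_w)\le(1+d_\text{loc}(z^\star))\|\mathcal{D}_w-\epsilon\mathbf{S}\|_\star$. Your route takes $P$ to be the projection onto the range of $\partial\phi(z^\star)$, so $P$ annihilates nothing of the main term and $(I-P)$ kills it exactly. It is more elementary, since it uses only the easy half of Ky Fan, $\mathrm{tr}(P\mathcal{D}_w)\le\sum_{i\le k}\lambda_i(\mathcal{D}_w)$ for rank-$k$ projections. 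It does not rely on the first claim. It gives the sharper bound by the trace norm of the remainder alone, without the factor $1+d_\text{loc}(z^\star)$.

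The paper's perturbative pairing of eigenvalues would be the natural tool if the range of the main term were not explicit. Here it is explicit, so your argument loses nothing.
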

Proposition \ref{prop:eigenvalues_of_D} implies that, as $\epsilon \to 0$ a $\Theta(\epsilon)$ eigen-gap $\lambda_{d_{\text{loc}(z^\star)}}(\mathcal{D}_w)-\lambda_{d_{\text{loc}(z^\star)}+1}(\mathcal{D}_w)$ emerges, whilst the eigenvalues $\lambda_i(\mathcal{D}_w)$ for $i>d_{\text{loc}(z^\star)}$ are all $o(\epsilon)$.

\begin{thm}
\label{thm:lase_trunc_err}
Assume \textbf{A\ref{ass:Z_in_R^d}}-\textbf{A\ref{ass:higher_moments}} and $\bm{\gamma}_w \neq \bm{0}$. As $\epsilon \to 0$, the eigenvalues of $\mathcal{K}_w$ decay at the following rates:
\begin{equation}\label{eq:lam_decay_cases}\tilde{\lambda}_i = \begin{cases}
    \Theta(1) &\text{for } i=1 \\
    \Theta(\epsilon) &\text{for } i=2, \ldots, d_\text{loc}(z^\star)-1 \hspace{15pt}(\text{ignore if } d_\text{loc}(z^\star) =1,2) \\
    O(\epsilon) &\text{for } i=d_\text{loc}(z^\star) \hspace{64pt} (\text{ignore if } d_\text{loc}(z^\star) =1) \\
    O(\epsilon) &\text{for } i= d_\text{loc}(z^\star)+1 \\
    o(\epsilon) &\text{for } i>d_\text{loc}(z^\star)+1,
\end{cases}
\end{equation}
and 
\begin{equation}
\label{eq:lase_trunc_err}
\sum_{i> d_\text{loc}(z^\star)+1} \tilde{\lambda}_i = o(\epsilon).    
\end{equation}

It follows that there exists a monotone decreasing sequence $\{\epsilon_k\}_{k\geq1}$, with $\epsilon_k \to 0$ as $k\to \infty$, such that for some $r\geq1$ in the set $\{d_\text{loc}(z^\star)-1, d_\text{loc}(z^\star), d_\text{loc}(z^\star)+1 \}$ the eigengap $\tilde{\lambda}_r - \tilde{\lambda}_{r+1} = \Omega(\epsilon_k)$ as $k \to \infty$. 
 \end{thm}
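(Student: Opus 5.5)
The plan is to treat $\mathcal{K}_w$ as a perturbation of $\mathcal{D}_w$ by a signed rank-two term, $\mathcal{K}_w = \mathcal{D}_w + \bar{\phi}_w\bar{\phi}_w^\top - \bm{\gamma}_w\bm{\gamma}_w^\top$, and then transfer the eigenvalue asymptotics of Proposition~\ref{prop:eigenvalues_of_D} using Weyl-type interlacing inequalities for compact self-adjoint operators.

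\emph{Step 1: sizes of the two rank-one terms.} As $\epsilon\to 0$ we have $\bar{\phi}_w\to\phi(z^\star)\neq 0$ by \textbf{A\ref{ass:phi(x)_not_zero}}. Moreover $\|\bm{\gamma}_w\|^2 \le \mathbb{E}\|\phi(Z)-\phi(z^\star)\|^2 = \operatorname{tr}\mathcal{D}_w = O(\epsilon)$ by Jensen's inequality and Proposition~\ref{prop:eigenvalues_of_D}. So the positive term has $\Theta(1)$ norm, while the subtracted term is a nonzero positive semidefinite rank-one operator of size $O(\epsilon)$.

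\emph{Step 2: interlacing.} A rank-one positive semidefinite perturbation $\mathcal{B}\mapsto\mathcal{B}+\mathbf{u}\mathbf{u}^\top$ satisfies $\lambda_i(\mathcal{B})\le\lambda_i(\mathcal{B}+\mathbf{u}\mathbf{u}^\top)\le\lambda_{i-1}(\mathcal{B})$. This follows from Courant--Fischer, which is valid here since all operators are compact and positive or of finite-rank difference. Subtracting $\bm{\gamma}\bm{\gamma}^\top$ gives the reverse interlacing. Set $\mathcal{B}=\mathcal{D}_w+\bar{\phi}_w\bar{\phi}_w^\top$, so $\mathcal{K}_w=\mathcal{B}-\bm{\gamma}_w\bm{\gamma}_w^\top$. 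Write $\lambda_i(\mathcal{D}_w)=\Theta(\epsilon)$ for $i\le d_{\rm loc}$ and $o(\epsilon)$ beyond. Then:
\begin{itemize}
\item $\lambda_1(\mathcal{B})\ge\|\bar{\phi}_w\|^2=\Theta(1)$;
\item $\lambda_{i+1}(\mathcal{B})\in[\lambda_{i+1}(\mathcal{D}_w),\lambda_i(\mathcal{D}_w)]$ for $i\ge1$.
\end{itemize}
Applying the downward rank-one step gives $\lambda_{i+1}(\mathcal{B})\le\tilde{\lambda}_i\le\lambda_i(\mathcal{B})$, hence $\tilde{\lambda}_i\in[\lambda_{i+1}(\mathcal{D}_w),\lambda_{i-1}(\mathcal{D}_w)]$ for $i\ge 2$. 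Reading off the cases:
\begin{itemize}
\item For $2\le i\le d_{\rm loc}-1$, both ends are $\Theta(\epsilon)$.
\item For $i=d_{\rm loc}$ and $i=d_{\rm loc}+1$, the upper end is $\Theta(\epsilon)$ but the lower end is only $o(\epsilon)$, giving $O(\epsilon)$.
\item For $i\ge d_{\rm loc}+2$, the upper end is $o(\epsilon)$.
\end{itemize}
For $\tilde{\lambda}_1$, note $\tilde{\lambda}_1\ge \langle \mathcal{K}_w \hat{\bar\phi},\hat{\bar\phi}\rangle \ge \|\bar\phi_w\|^2-\|\bm\gamma_w\|^2=\Theta(1)$, while $\tilde\lambda_1\le\operatorname{tr}\mathcal{K}_w\le 1$.

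\emph{Step 3: the tail sum \eqref{eq:lase_trunc_err}.} This needs a summed rather than termwise bound. Use the Ky Fan/trace form: since $\mathcal{K}_w\preceq\mathcal{B}$, we get $\sum_{i>k}\lambda_i(\mathcal{K}_w)\le\sum_{i>k}\lambda_i(\mathcal{B})$. Also, since $\mathcal{B}$ is $\mathcal{D}_w$ plus a rank-one term, $\sum_{i>k+1}\lambda_i(\mathcal{B})\le\sum_{i>k}\lambda_i(\mathcal{D}_w)$; this holds by taking the min over codimension-$k$ subspaces, i.e. $\min_{\Pi}\operatorname{tr}((I-\Pi)\mathcal{B})$, and enlarging $\Pi$ by the direction $\bar\phi_w$. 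With $k=d_{\rm loc}$, Proposition~\ref{prop:eigenvalues_of_D} gives $o(\epsilon)$.

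\emph{Step 4: the eigengap.} This is the main obstacle, because the three middle eigenvalues are only $O(\epsilon)$ and not necessarily $\Theta(\epsilon)$. I would argue by contradiction using sums. By Ky Fan, $\sum_{i=2}^{d_{\rm loc}+1}\tilde\lambda_i\ge\sum_{i\le d_{\rm loc}+1}\tilde\lambda_i-\tilde\lambda_1$. A cleaner route is to compare traces over the complement of the top eigenvector: $\tilde\lambda_2+\dots+\tilde\lambda_{d_{\rm loc}}\ge \lambda_{3}(\mathcal{D}_w)+\dots$. From interlacing alone, $\tilde\lambda_{d_{\rm loc}-1}\ge\lambda_{d_{\rm loc}}(\mathcal{D}_w)=\Theta(\epsilon)$ whenever $d_{\rm loc}\ge2$; the case $d_{\rm loc}=1$ is handled by $\tilde\lambda_1=\Theta(1)$. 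Meanwhile $\tilde\lambda_{d_{\rm loc}+2}=o(\epsilon)$. The ratio sequence $\tilde\lambda_{d_{\rm loc}-1}/\epsilon$ is bounded below, and $\tilde\lambda_{d_{\rm loc}+2}/\epsilon\to0$, so the telescoping sum of the three gaps $\tilde\lambda_r-\tilde\lambda_{r+1}$ for $r\in\{d_{\rm loc}-1,d_{\rm loc},d_{\rm loc}+1\}$ is $\Omega(\epsilon)$. Hence for each $\epsilon$, at least one gap is at least a third of it. By pigeonhole, along some decreasing sequence $\epsilon_k\to0$ a single fixed $r$ attains this, giving $\tilde\lambda_r-\tilde\lambda_{r+1}=\Omega(\epsilon_k)$.

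The delicate points are these. First, the interlacing must be justified in $\ell_2$ for compact operators. Second, the case $d_{\rm loc}=1$ needs separate handling, where $r=d_{\rm loc}-1=0$ is excluded, so the gap must come from $r\in\{1,2\}$ using $\tilde\lambda_1=\Theta(1)$ directly. Third, one must check that the hypothesis $\bm\gamma_w\neq0$ is what makes the downward rank-one step genuine rather than vacuous.
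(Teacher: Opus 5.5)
Your argument is correct, and Steps 1--3 track the paper closely. The paper also applies Weyl interlacing to $\mathcal{K}_w=\mathcal{D}_w+\bar{\phi}_w\bar{\phi}_w^\top-\bm{\gamma}_w\bm{\gamma}_w^\top$, treating it as one rank-two perturbation rather than your two rank-one steps, to get $\lambda_{i+1}(\mathcal{D}_w)\le\tilde{\lambda}_i\le\lambda_{i-1}(\mathcal{D}_w)$. It likewise bounds $\tilde{\lambda}_1$ below by the Rayleigh quotient at $\bar{\phi}_w$. Your derivation of $\|\bm{\gamma}_w\|^2\le\mathrm{tr}\,\mathcal{D}_w=O(\epsilon)$ via Jensen and Proposition~\ref{prop:eigenvalues_of_D} is a neat shortcut for the paper's Lemma~\ref{lem:bias}, which redoes a Taylor expansion. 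Your Ky Fan detour in Step 3 is unnecessary, though: the termwise \emph{inequalities} $\tilde{\lambda}_i\le\lambda_{i-1}(\mathcal{D}_w)$ hold for all $i\ge2$ at once, so they can simply be summed over $i>d_{\mathrm{loc}}+1$, as the paper does. Only termwise $o(\epsilon)$ statements fail to sum.

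The genuinely different part is the eigengap. The paper invokes Lemma~\ref{lem:big_O_epsilon}, which says an $O(\epsilon)$ quantity is, along some sequence, either $\Theta(\epsilon_k)$ or $o(\epsilon_k)$. It applies this first to $\tilde{\lambda}_{d_{\mathrm{loc}}}$ and then, on a further subsequence, to $\tilde{\lambda}_{d_{\mathrm{loc}}+1}$, reads off in each branch which gap is of order $\epsilon_k$, and treats $d_{\mathrm{loc}}=1,2$ separately. You instead use only the two endpoint bounds $\tilde{\lambda}_{d_{\mathrm{loc}}-1}\ge\lambda_{d_{\mathrm{loc}}}(\mathcal{D}_w)\ge c\epsilon$ (or $\tilde{\lambda}_1=\Theta(1)$ if $d_{\mathrm{loc}}=1$) and $\tilde{\lambda}_{d_{\mathrm{loc}}+2}\le\lambda_{d_{\mathrm{loc}}+1}(\mathcal{D}_w)=o(\epsilon)$. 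You then telescope over the nonnegative gaps in the window and pigeonhole.

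This is shorter, needs no auxiliary lemma or nested subsequences, and handles all $d_{\mathrm{loc}}$ uniformly. It also gives a stronger conclusion: for \emph{every} sufficiently small $\epsilon$, some gap in the window is at least a fixed multiple of $\epsilon$, and the subsequence is needed only to freeze $r$. The paper's case analysis, in exchange, records which $r$ is selected according to whether the middle eigenvalues are of exact order $\epsilon$ or smaller.

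Finally, your concern about $\bm{\gamma}_w\neq\bm{0}$ is moot. If $\bm{\gamma}_w=\bm{0}$ then $\mathcal{K}_w=\mathcal{B}$ and the downward interlacing holds trivially, so your proof also covers the case the paper explicitly leaves out.
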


In particular, Theorem~\ref{thm:lase_trunc_err} shows that when $\mu_w$ is sufficiently localised, choosing $r$ in the set $\{d_\text{loc}(z^\star)-1, d_\text{loc}(z^\star), d_\text{loc}(z^\star)+1 \}$ results in a non-vanishing eigengap and a vanishing truncation error.
The proofs of Proposition \ref{prop:eigenvalues_of_D} and Theorem \ref{thm:lase_trunc_err} can be found in Appendix \ref{app:trunc_error}. In the proof of Theorem \ref{thm:lase_trunc_err}, the unlikely situation where $\bm{\gamma}_w = \bm{0}$ can be handled using a similar argument, satisfying the same decay rates as in \eqref{eq:lam_decay_cases}, however we omit the details. 

Together with the statistical error bound of Section~\ref{sec:LASE_stat_error}, these results show that LASE achieves accurate low-dimensional recovery by balancing localisation-induced eigenvalue decay with finite-sample stability.

\subsection{Gaussian toy example}

In this section, we present a simplified Gaussian example to illustrate the scaling behavior of the statistical error of LASE in an analytically tractable setting. Specifically, we characterise the rate at which the local re-weighted measure $\mu_w$ must concentrate, with respect to $n$, in order for the weighted Frobenius norm of the embedding error to vanish asymptotically as $n \to \infty$.

Let $d>2$, and consider a multivariate Gaussian distribution $\mu$ with mean vector $\mathbf{0} \in \R^d$ and fixed, positive definite covariance matrix $\mathbf{\Sigma} \in \R^{d\times d}$, truncated to the ball $\mathcal{B}_R = \{ x \in \R^d : \|x\| \leq R \},$ where the radius $R>0$ is chosen such that, for $Z \sim \mathcal{N}_d(\mathbf{0},\mathbf{\Sigma})$, $\mathbb{P}(Z \in \mathcal{B}_R)=1-\delta$ for some fixed $\delta>0$, assumed to be small. The specific values of $\mathbf{\Sigma}$ affect only constant factors and do not impact the scaling results below.

Let $\mu_w$ denote a second multivariate Gaussian distribution with fixed mean vector $\mathbf{z}^\star \in \text{int}(\mathcal{B}_R)$, and covariance $\epsilon\mathbf{I}_d$, also truncated to $\mathcal{B}_R$. We can consider $\mu_w$ to be related to $\mu$ via $w: \mathcal{B}_R \to \R_{\geq0}$ where
$$\mu_w(\rd x) = w(x) \mu(\rd x).$$

Now consider a latent position model in which latent positions  $Z_1, \ldots, Z_n$ are sampled i.i.d. from $\mu$, and $\mathbf{A}$ is generated via a symmetric kernel function $f: \mathcal{B}_R \times \mathcal{B}_R \to [0,1]$, satisfying assumption \textbf{A\ref{ass:differentiability}}.
Under this setup, the model satisfies assumptions \textbf{A\ref{ass:Z_in_R^d}}-\textbf{A\ref{ass:higher_moments}}. 
Assume in addition that $f$ is locally non-degenerate at $z^\star$, in the sense that $\mathbf H_{z^\star,z^\star}$ is positive definite; under this condition, $d_{\mathrm{loc}}(z^\star)=d$.

By Theorem \ref{thm:lase_trunc_err}, there exists a dimension $r \in \{d-1, d, d+1\}$, such that the eigengap of the operator $\mathcal{A}_w$, associated with $\mu_w$ and $f$, satisfies 
$$\lambda_{r}(\mathcal{A}_w) - \lambda_{r+1}(\mathcal{A}_w) = \Omega(\epsilon) \quad \text{as } \epsilon \to 0.$$
Moreover, the weight function satisfies
$$w^* = \Theta(\epsilon^{-d/2}) \quad \text{as } \epsilon \to 0.$$
Let $\mathbf{W} = \text{diag}(w(Z_1),\ldots,w(Z_n))$ and let $\hat{\mathbf{X}}$
be the output of Algorithm \ref{alg:lase}.
Applying Theorem \ref{thm:concentration} with the rates above, we get that 
with probability greater than $1-2\eta$, there exists an orthogonal matrix $\mathbf{Q} \in \R^{r \times r}$ such that
\begin{equation}
\label{eq:gaussian_eg}
\frac{1}{\sqrt{n}}\left\Vert \mathbf{W}^{1/2} \left( \hat{\mathbf{X}} \mathbf{Q} - \mathbf{\Phi}_w^{(r)} \right) \right\Vert_F = O\left(\epsilon^{-(4+3d)/2} \sqrt{\frac{r\log (n/\eta)}{n}} \right), 
\end{equation}
where $\mathbf{\Phi}_w^{(r)} \in \R^{n \times r}$ has $i$-th row equal to $\phi^{(r)}_w(Z_i)$. 
It follows that the weighted Frobenius error bound in \eqref{eq:gaussian_eg} vanishes as $n\to \infty$ provided that $\epsilon$ shrinks sufficiently slowly:
$$\epsilon = \omega\left( \left(\frac{r\log (n/\eta)}{n}\right)^{\tfrac{1}{4 + 3d}}\right).$$

\subsection{Weight selection}
\label{weights_computation}

Thus far, for the purposes of theoretical analysis, we have treated the weight function $w$ (or equivalently the measure $\mu_w$) as given. In practice, while one may have a general idea of which region or nodes are of primary interest, translating this into a concrete selection of node weights depends on available information and computational goals.

In this section, we describe several practical strategies for constructing node weights $\{w_i\}_{i=1}^n$ given observed data and domain knowledge. The choice of weights will depend on whether auxiliary node information is available, whether a specific region or node is of interest, and the computational constraints of the problem.

\vspace{1em}
\noindent\textbf{Normalisation convention.}
The weighting functions described in this section are specified only up to a
multiplicative constant. This is sufficient for computing LASE embeddings, since
Algorithm~\ref{alg:lase} is invariant to the global scale of the weights.
However, normalisation becomes essential when analysing or comparing the spectra
associated with different weight choices. In such cases, we impose a consistent normalisation convention $\sum_{i=1}^n w_i = n$.

\vspace{1em}
\noindent\textbf{Attribute-based weighting.}  
When auxiliary data or metadata is available for each node (e.g., spatial coordinates, features, covariates), this information can be used to define a weighting function. For example, in spatial networks, nodes may be associated with feature vectors $x_i \in \mathbb{R}^d$, and one can define
\[
w_i = \exp\left( -\tau \, \|x_i - z^\star\|^2 \right)
\]
for some centre of interest $z^\star$ and concentration parameter $\tau > 0$, inducing localisation around $z^\star$. 

\vspace{1em}
\noindent\textbf{Graph-distance-based weighting.}  
In the absence of auxiliary features, graph structure itself can be used to define locality. A natural choice is to define weights as a function of shortest-path distance from a reference node $i_0$:
\[
w_i = \left( \frac{1}{1 + \text{GraphDist}(i, i_0)} \right)^p,
\]
for some power $p > 0$. Increasing $p$ results in stronger localisation around $i_0$. This approach does not require node attributes and can be computed directly from $\mathbf{A}$.

\vspace{1em}
\noindent\textbf{Subgraph-based weighting (hard thresholding).}  
A special case of LASE arises when weights are binary:
\[
w_i =
\begin{cases}
1, & i \in \mathcal{V}_{\text{loc}}, \\
0, & \text{otherwise},
\end{cases}
\]
where $\mathcal{V}_{\text{loc}} \subseteq \{1,\ldots,n\}$ denotes a selected set of nodes.
In this setting, $\mathbf{W}^{1/2} \mathbf{A} \mathbf{W}^{1/2}$ contains zeros outside the submatrix induced by $\mathcal{V}_{\text{loc}}$, effectively reducing LASE to Adjacency Spectral Embedding on an induced subgraph. Although our theoretical framework assumes strictly positive weights, this limiting case is plausible in practice and computationally efficient. Subgraph ASE can therefore be viewed as a hard-thresholded variant of LASE.

\paragraph{Soft-thresholded subgraphs.} To improve over binary subgraph selection, one can apply a smooth kernel (e.g., Gaussian) to define gradually decaying weights surrounding a region of interest. Section~\ref{sec:reconstruction_error} shows that such softly localised LASE embeddings outperform hard-thresholded subgraph ASE in reconstruction accuracy.

\paragraph{Hybrid weighting.} One can combine structural and attribute-based proximity via
\[
w_i=
\exp\left(
-\alpha \cdot \text{GraphDist}(i,i_0)
-\beta \cdot \|x_i - x_{i_0}\|^2
\right),
\]
for parameters $\alpha, \beta > 0$. This allows one to emphasise either graph structure or feature similarity depending on data availability.

\paragraph{Supervised or task-driven weighting.} In some settings, it may be advantageous to optimise weights to improve downstream tasks (e.g., classification, regression). For example, one could define $w$ to minimise predictive loss via gradient-based optimization. 

\vspace{1em}
\noindent\textbf{Trade-offs and tuning.}  
Weight selection involves a trade-off between \textit{local specificity} and \textit{statistical stability}. Highly concentrated weights (e.g., large $\tau$ or $p$) isolate local structure but may lead to increased variance due to fewer effective samples (reflected in a larger $w^* = \max_i w_i$, the empirical analogue of $\sup_x w(x)$). Conversely, broader weightings reduce variance but dilute local detail. 

Practically, we recommend tuning localisation parameters using one or more of the following approaches:

\begin{enumerate}
    \item \textit{Downstream performance optimization.} If the embedding is used for a supervised task (e.g., regression, classification, or reconstruction), sweep across parameter values and identify regions of stable performance or “elbows” in reconstruction error curves (see Section~\ref{sec:experiments}).
    
    \item \textit{Use of metadata.} Incorporate known information about the data by colouring or labelling points (see Section~\ref{sec:umap-lase}).
    
    \item \textit{Domain knowledge.} In settings where the latent scale of meaningful variation is known (e.g., distance scales in road networks), parameter values can be chosen to reflect that scale explicitly.
\end{enumerate}

In practice, we find that LASE is robust across a broad range of localisation parameters. Smoothly decaying weights, such as those derived from Gaussian kernels, consistently outperform hard-thresholded subgraph weights in reconstruction and visualization tasks (see Sections~\ref{sec:reconstruction_error} and~\ref{sec:visualisation}). However, in very large graphs, the computational efficiency of subgraph ASE may still be advantageous, particularly when rapid approximation is prioritised over embedding fidelity. In such cases, practitioners may prefer hard-thresholded methods as a scalable first-pass approach, before applying more refined LASE embeddings to regions of interest.

\section{Experiments}
\label{sec:experiments}

\subsection{Simulated Data}
\subsubsection{Eigenvalue decay}

We illustrate our result concerning the rate of eigenvalue decay in Theorem \ref{thm:lase_trunc_err} with the following simulated experiment. For dimensions $d=1,2,3,4$, we consider a multivariate Gaussian distribution $\mu_d = \mathcal{N}(\mathbf{0}_d, \sigma^2 \mathbf{I}_d)$, where $\mathbf{I}_d \in \R_d$ is the $d$-dimensional identity matrix. Let us define the \textit{precision} to be $\tau := 1/\sigma^2$, which may be interpreted as the level of concentration about the point $z^\star = \mathbf{0}_d$. For varying values of $\sigma$, $n=100$ latent positions $Z_1, \ldots, Z_n$ are drawn from $\mu_d$ and the resulting probability matrix $\mathbf{P}$ is generated such that $\mathbf{P}_{ij} = \exp(-\Vert Z_i-Z_j \Vert^2)$. Figure \ref{fig:sim_eig_decay} shows, for each $d$, the top 6 eigenvalues of $\mathbf{P}$ as the precision, $\tau$, increases. As expected from Theorem \ref{thm:lase_trunc_err}, in each of the plots we see a clear gap emerging between the $(d+1)$-th and the $(d+2)$-th eigenvalue as $\mu_d$ concentrates about $z^\star$. 

\begin{figure}
    \centering
    \includegraphics[width=1\linewidth]{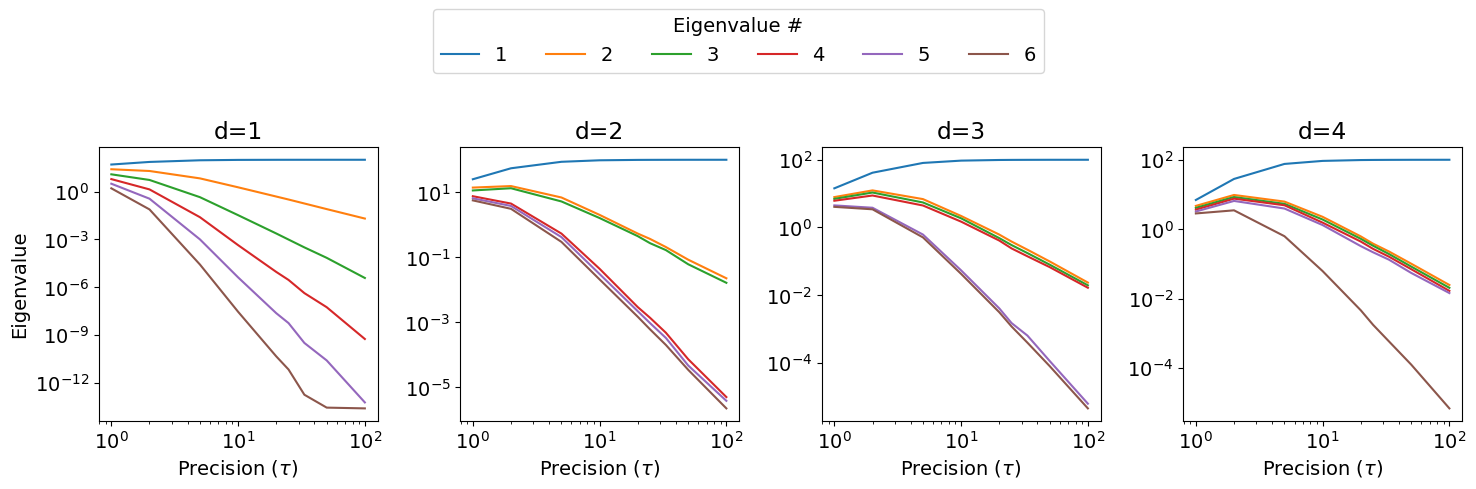}
    \caption{The top 6 eigenvalues of $\mathbf{P}$ are plotted as the measure $\mu_w$ concentrates about a point. In each plot, the dimension $d$ of the latent space $\mathcal{Z}$ is identified in the plot subtitle. Notice the distinct change in rate of decay between eigenvalues $d+1$ and $d+2$.}
    \label{fig:sim_eig_decay}
\end{figure}

\subsubsection{Reconstruction error}
\label{sec:reconstruction_error}
We also conduct a simulated experiment to examine the reconstruction error of LASE with parameterised weighting functions that tend towards a baseline approach of subgraph ASE on the region of interest. Our experiment is performed on graphs generated using 1000 latent positions $Z_i \sim \text{Uniform}[0,10]$, with edge probabilities given by $\mathbf{P}_{ij} = \exp(-\Vert Z_i-Z_j \Vert^2)$. 

\begin{figure}[h]
    \centering
    \includegraphics[width=1\linewidth]{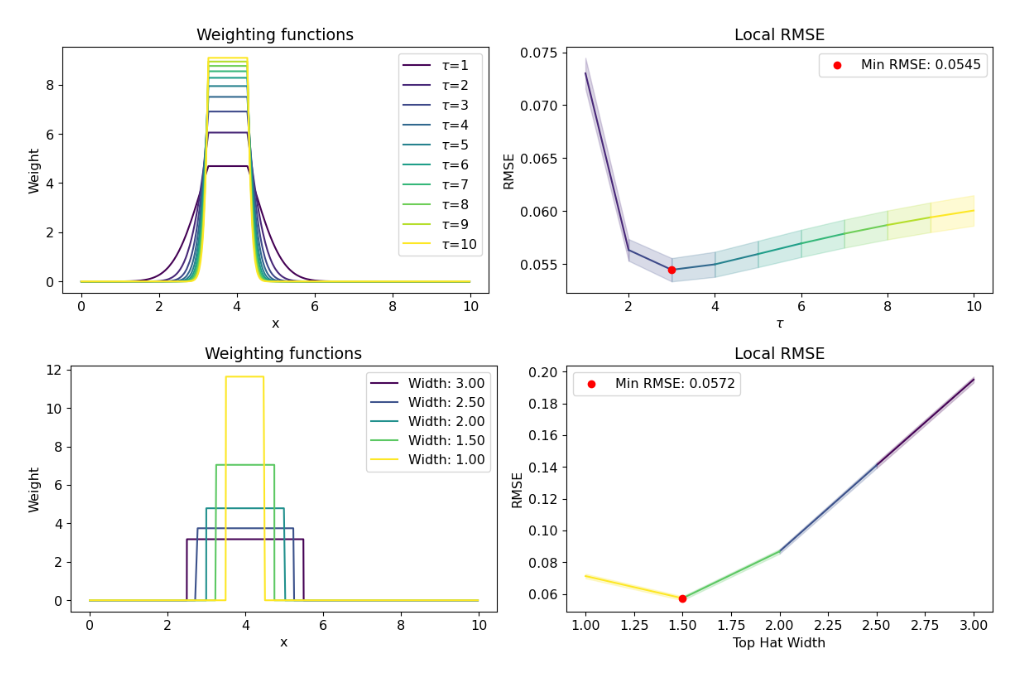}
    \caption{Comparison of reconstruction accuracy under different LASE weighting strategies. \textbf{Top left:} Continuous weighting functions \( w(Z_i) \) with varying concentration parameters \(\tau \in \{1, \dots, 10\}\), centered at \(x = 4\). 
    \textbf{Top right:} RMSE of reconstructed probabilities \(\hat{\mathbf{P}}_{ij}\) versus true \(\mathbf{P}_{ij}\), computed over nodes with latent positions in \([3.5, 4.5]\), for each \(\tau\).
    \textbf{Bottom left:} Top-hat weighting functions of varying width, also centered at \(x = 4\), corresponding to subgraph ASE on different-sized subgraphs. 
    \textbf{Bottom right:} RMSE of reconstructed probabilities for each top-hat width.
    Shaded regions indicate \(\pm 1\) SE over 10 sampled graphs. 
    }
    \label{fig:top_hat}
\end{figure}

We first consider a family of continuous weighting functions defined as
\begin{align*}
w(Z_i) = \left\{ 
\begin{aligned}
    &\exp(- \tau (Z_i - 4)^2)/N_\tau \quad & |Z_i-4|>0.5, \\
    &\exp(- \tau) /N_\tau \quad & |Z_i-4| \leq 0.5,
\end{aligned}
\right.
\end{align*}
where $N_\tau$ is the normalisation constant ensuring $\sum_i w(Z_i)/ n = 1$. The weighting functions for $\tau = 1,2, \ldots, 10$ are shown in the top left panel of Figure \ref{fig:top_hat}.
As $\tau$ increases, the weights become increasingly concentrated on the interval [3.5,4.5], approaching the subgraph ASE regime, where only a region of interest is considered. For each weighting, we perform LASE and reconstruct the probability matrix via clipped inner products,
$$\hat{\mathbf{P}}_{ij} := \max ( 0, \min(\langle \hat{X}_i, \hat{X}_j \rangle, 1)),$$
where $\hat{X}_i$ denotes the estimated embedding of node $i$. We evaluate RMSE over the nodes with latent positions in [3.5,4.5]. Results, averaged over 10 sampled graphs, are shown in the top right panel, along with $\pm1$ SE bars. We observe that overly heavy-tailed weightings (small $\tau$) lead to poor local reconstruction, while overly narrow weightings (large $\tau$) also degrade performance. The lowest RMSE occurs at $\tau=3$, highlighting the advantage of softly weighted LASE over hard-thresholded subgraph ASE.

Next, we consider a family of top-hat weighting functions of varying widths, centred at $x=4$, where nodes within the support of the weighting receive equal weight and others receive zero. These correspond to subgraph ASE on induced subgraphs of varying size. The weightings and corresponding RMSE results are shown in the bottom left and bottom right panels of Figure~\ref{fig:top_hat}, respectively. As before, RMSE is computed over the latent positions in [3.5,4.5]. The optimal subgraph for local reconstruction slightly exceeds this interval, suggesting that limited inclusion of neighboring nodes improves performance.

Crucially, the minimum RMSE achieved by the continuous weighting family was 0.0545, compared to 0.0572 for the top-hat (subgraph) approach. This demonstrates that LASE with smoothly decaying weights can outperform subgraph ASE in terms of reconstruction accuracy. However, this gain must be balanced against the computational efficiency and simplicity of subgraph-based methods.

\subsubsection{Visualisation}
\label{sec:visualisation}

In this experiment we investigate the ability of LASE, subgraph ASE and full ASE, to visually recover structures present in the latent positions of a graph. We begin by uniformly sampling 4000 latent positions on the square $[0,10]^2$. Then, among these random positions, we plant four small, visually recognisable shapes, each using 200 additional latent positions. The set of latent positions within radius 1 of each shape's center defines a neighbourhood, denoted $\mathcal{N}_{z^\star}$, where $z^\star$ is the central point. These neighbourhoods are illustrated in the first column of Figure \ref{fig:shapes}. 

\begin{figure}
    \centering
    \includegraphics[width=0.95\linewidth]{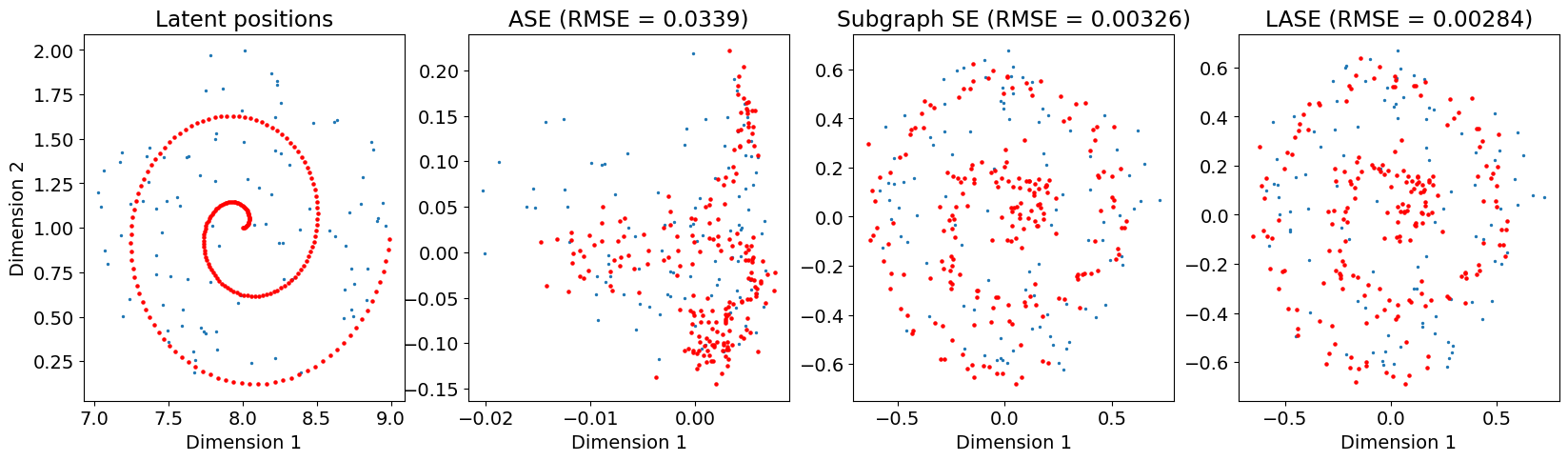}
    \includegraphics[width=0.95\linewidth]{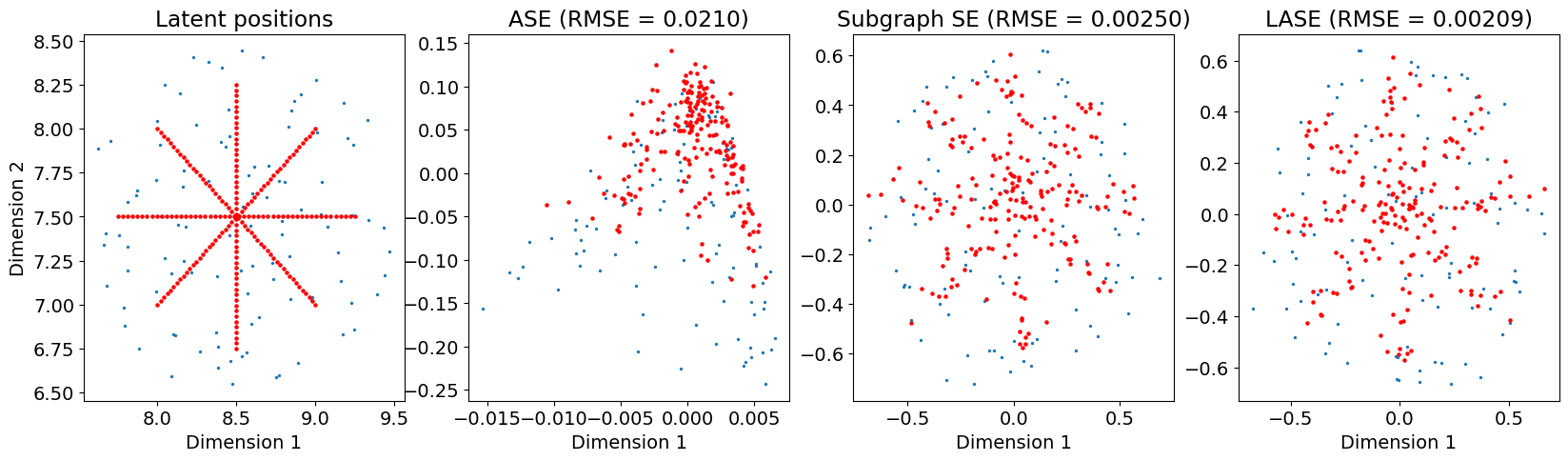}
    \includegraphics[width=0.95\linewidth]{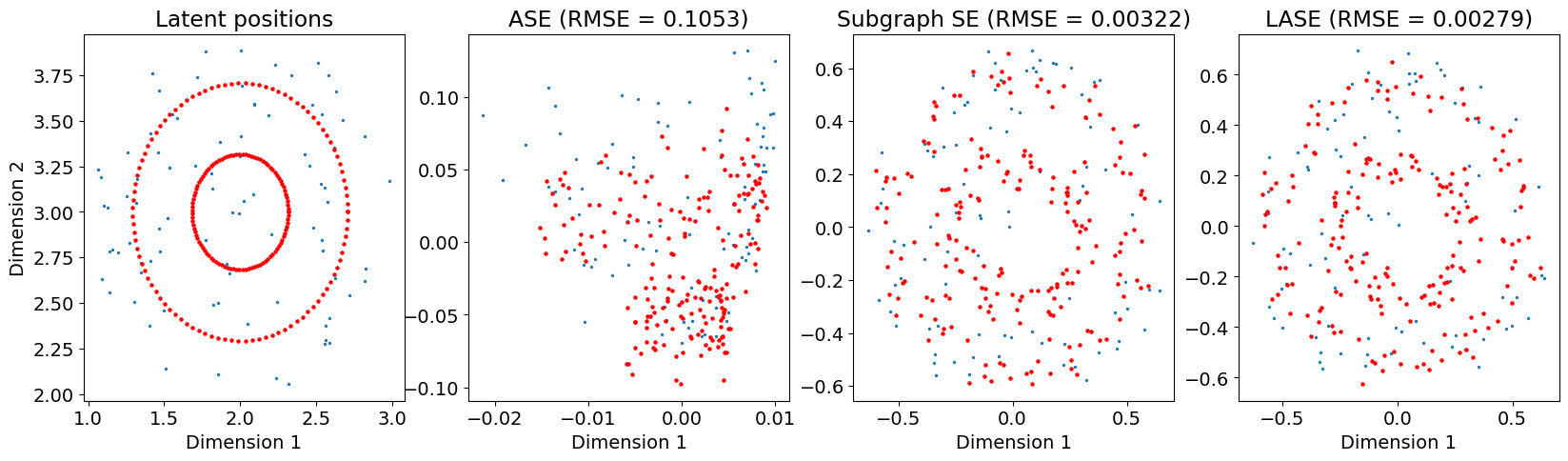}
    \includegraphics[width=0.95\linewidth]{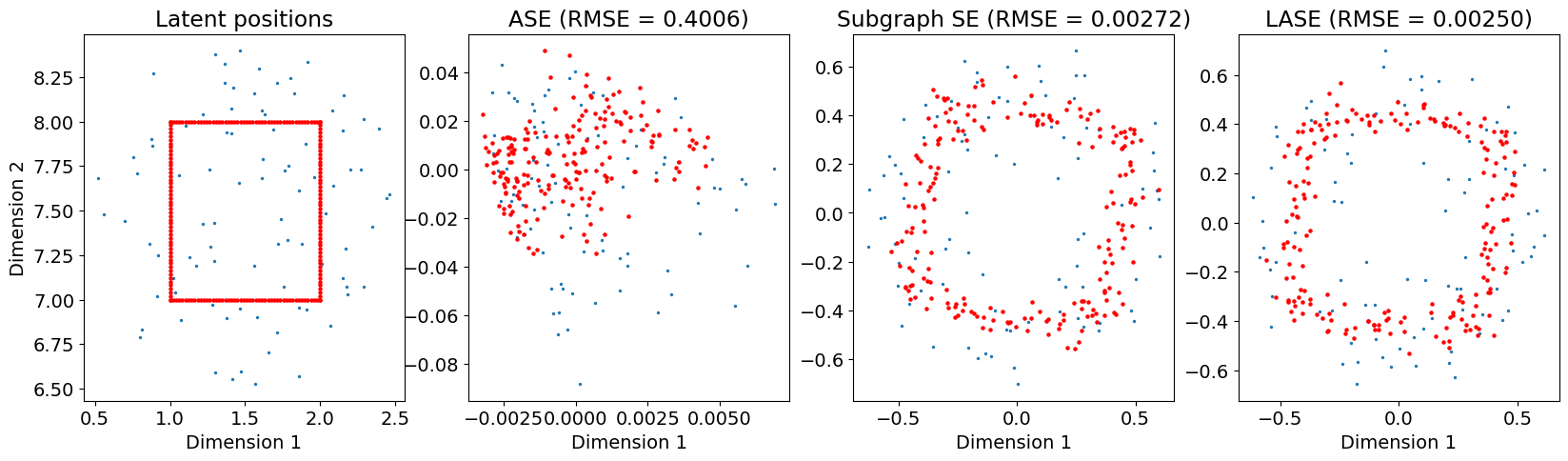}
    \caption{\textbf{Column 1}: Latent positions within radius 1 of a central point, $z^\star$. We call this the neighbourhood of $z^\star$, and represent it by $\mathcal{N}_{z^\star}$. \textbf{Column 2}: ASE is performed on the full graph into 3D, followed by PCA into 2D on embeddings of nodes in $\mathcal{N}_{z^\star}$. \textbf{Column 3}: Subgraph SE is performed into 3D on the subgraph consisting of \textit{at least} the nodes in $\mathcal{N}_{z^\star}$, followed by PCA into 2D on embeddings of nodes in $\mathcal{N}_{z^\star}$. \textbf{Column 4}: LASE is performed into 3D on the full graph, with weights corresponding to latent position $x \in \mathcal{Z}$ set to $\exp(- 0.4 \times \Vert x-z^\star \Vert^2$), followed by PCA into 2D on embeddings of nodes in $\mathcal{N}_z$. The reconstruction RMSE associated with each embedding is recorded in the titles.}
    \label{fig:shapes}
\end{figure}

Next, we generate an adjacency matrix $\mathbf{A}$ from the full set of latent positions; an edge is formed between nodes $i$ and $j$ with probability $\mathbf{P}_{ij} = \exp(-\frac{1}{2}\|z_i - z_j\|^2)$. We perform ASE on the full graph into three dimensions, and denote the embedding of node $i$ by $\hat{X}_i$. To reduce dimensionality for visualisation, we perform PCA on the subset $\{\hat{X}_i : i \in\mathcal{N}_{z^\star}\}$ for each shape, yielding the two-dimensional embedding shown in the second column of Figure \ref{fig:shapes}. The RMSE between the true probabilities $P_{ij}$ and the estimated probabilities $\hat{P}_{ij} = \min(\max(0,\langle \hat{X}_i, \hat{X}_j\rangle), 1)$ is reported in the figure headings.

For subgraph ASE, we perform ASE into three dimensions on a subgraph containing \textit{at least} all nodes in $\mathcal{N}_{z^\star}$. To determine the subgraph, we incrementally expand a radius around $z^\star$ and include all latent positions within it, selecting the radius that minimizes the RMSE within $\mathcal{N}_{z^\star}$. As before, we apply PCA to the embedded vectors $\{\hat{X}_i : i \in\mathcal{N}_{z^\star}\}$, and plot the resulting 2D embeddings and RMSEs in the third column of Figure \ref{fig:shapes}.

For LASE, we set the weight for node $i$ equal to $w_i = \exp(-\tau \|z_i - z^\star\|^2)$ (up to a global normalisation), where the concentration parameter $\tau$ is chosen to minimise the RMSE within $\mathcal{N}_z$. After performing LASE with these weights into 3D, we apply PCA on $\{X_i : i \in\mathcal{N}_{z^\star}\}$, reducing the point cloud to 2D. The resulting embeddings and RMSEs are shown in the fourth column of Figure \ref{fig:shapes}.

All four shapes are much more clearly recognisible in the subgraph ASE and LASE embeddings than in the full ASE. Moreover, the RMSEs are considerably smaller for the two local embedding methods, with LASE always performing marginally better than subgraph ASE in this respect. Comparing the outputs of subgraph ASE and LASE visually, each shape appears slightly `sharper' in the LASE embeddings. We note that in this example, where $d_\text{loc}(z) = 2$ for all $z \in [0,10]^2$, for the latent shapes to be visually recognisable in the local embeddings, it was necessary to first embed into three dimensions and then reduce to two dimensions via PCA. Embedding directly into two dimensions resulted in embeddings that did not visually resemble the shapes in the latent positions. The choice of three dimensions is in line with Theorem \ref{thm:lase_trunc_err}, which suggests embedding into a dimension in the set \{1,2,3\}.

\subsection{Road Network Data}
\label{sec:road_network}

\subsubsection{Linear Regression to predict node features}

To evaluate the practical efficacy of LASE in capturing local geometry, we perform inference on a real-world road network obtained from OpenStreetMap (OSM). We focus on the road network surrounding the city centre of Bristol, UK, where nodes represent intersections and edges represent road segments. The objective of this experiment is to demonstrate that local embeddings more accurately reconstruct physical node attributes (specifically latitude and longitude) compared to global embeddings, supporting the hypothesis that the network possesses a locally low-dimensional structure that is recoverable through LASE. This approach is motivated by the spectral properties of the Bristol road network illustrated in Figure \ref{fig:road_network_eig_decay}. While the eigenvalues of the global adjacency matrix decay slowly and lack a discernible eigengap (Figure \ref{fig:road_network_eig_decay}, left), the localised matrix $\mathbf{W}^{1/2}\mathbf{AW}^{1/2}$ exhibits a sharp decay before an ``elbow'' at $r=3$, with a notable gap between the third and fourth eigenvalues (Figure \ref{fig:road_network_eig_decay}, right). As the statistical error in Theorem \ref{thm:concentration} is controlled by the eigengap $\tilde{\lambda}_r-\tilde{\lambda}_{r+1}$, the emergence of such a gap at $r=3$ suggests performing LASE into three dimensions.

\paragraph{Remark on PSD assumption.} Applying our theoretical results to this dataset requires assuming the generative kernel $f$ is positive semi-definite. In spatial graphs like road networks, the probability of an edge is naturally modelled as a decreasing function of geographic distance. Standard proximity measures used for this purpose, such as the Gaussian kernel, are well-known to be positive definite, making the PSD assumption a natural fit for modelling spatial connectivity.

\paragraph{Experimental Setup.} We select 10 nodes $\{z_1, \dots, z_{10}\}$ from the network at random to serve as neighbourhood centres. For each $z_i$, we define a local neighbourhood $\mathcal{N}_m(z_i)$ consisting of the $m$ closest nodes according to Euclidean distance in their true coordinates, with $m \in [100, 300]$. Within the smallest neighbourhood ($m=100$), we randomly sample 10 nodes to serve as a held-out test set, assuming their coordinates are unknown. The remaining nodes constitute the training set.

As a baseline, we predict the test nodes' coordinates using the mean coordinates of their 1-hop neighbours (if a node has no neighbours with known coordinates, a new test node is drawn). For the embedding-based approaches, we produce three representations of dimension $d \in \{3, 20\}$:
\begin{enumerate}
    \item \textbf{Full ASE:} We compute the Adjacency Spectral Embedding on the entire graph and retain only the rows corresponding to the $m$ nodes in $\mathcal{N}_m(z_i)$.
    \item \textbf{Subgraph ASE:} We perform ASE on the subgraph induced by the $m+k$ closest nodes to $z_i$. We tune the expansion parameter $k \in \{0, 10, \dots, 100\}$ to maximize the $R^2$ of the linear regression on the training data.
    \item \textbf{LASE:} We tune the localization parameter $\tau$ to maximize the training $R^2$. The weights for node $x$ are defined as $w(x) = \exp[-\tau \cdot \text{dist}(x, z_i)]$, where $\text{dist}(\cdot, z_i)$ evaluates the distance to node $z_i$ based on the nodes' true coordinates (for training points) and baseline predictions (for test points).
\end{enumerate}

For each embedding method, we fit a linear regression from the latent positions to the true coordinates of the training set and evaluate the predictive performance on the test set using Mean Squared Error (MSE). The results, averaged over the 10 centers, are illustrated in Figure~\ref{fig:regression}.

\begin{figure}[t]
    \centering
    \includegraphics[width=1\linewidth]{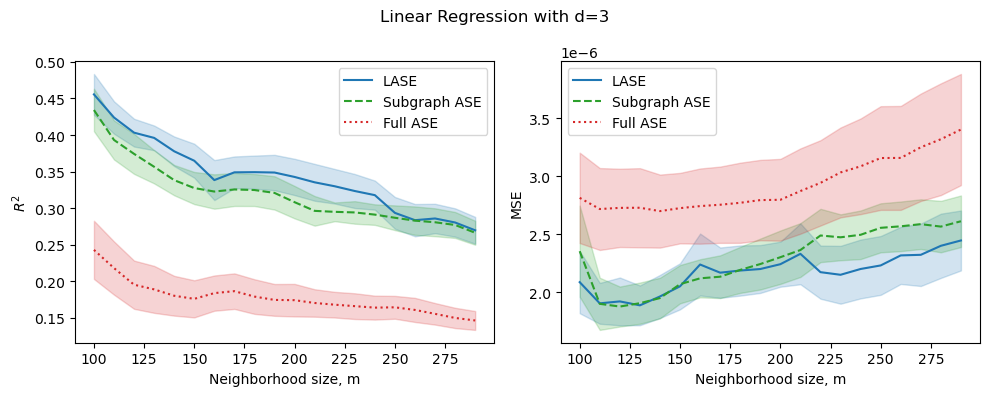}
    \includegraphics[width=1\linewidth]{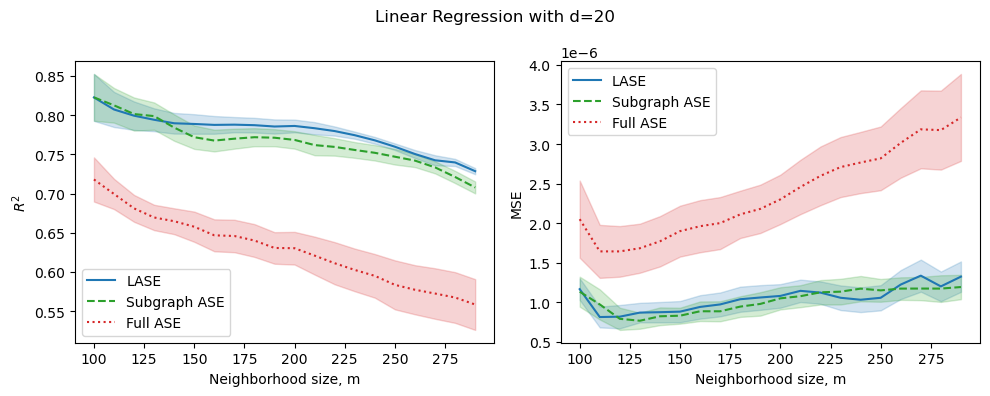}
    \caption{Linear regression of the true latitude and longitude coordinates of nodes onto embeddings (LASE, Subgraph ASE and Full ASE) of dimension $d=3$ (top row) and $d=20$ (bottom row). For all plots, the x-axis is the size of local neighbourhood (number of nodes) considered for the regression - see the main text for how this is determined. The left-hand plots show the mean $R^2$ statistic ($\pm$1 SE) over 10 randomly selected neighbourhood centres. The right-hand plots show the mean-squared error ($\pm$1 SE) for 10 randomly selected test points in the 100-node neighbourhood of the centres. }
    \label{fig:regression}
\end{figure}

We observe that for all embedding methods, the $R^2$ statistic decreases as the neighbourhood size $m$ increases. This trend supports the intuition that the network's geometry is more effectively approximated by a low-dimensional linear model over smaller, local patches. Notably, both LASE and Subgraph ASE consistently achieve significantly higher $R^2$ and lower MSE than Full ASE across all values of $m$. While Subgraph ASE and LASE perform comparably, LASE offers a more flexible ``soft'' localization that avoids the arbitrary boundaries of a hard subgraph cutoff. These results suggest that global embeddings (Full ASE) fail to resolve local geometric details required for accurate spatial inference, whereas LASE successfully isolates and preserves this local information.

\subsubsection{Global visualisation via combined LASE embeddings}
\label{sec:umap-lase}

In this section, we present a method to construct global visualisations of large graphs by combining local embeddings obtained via LASE.
We revisit the Bristol road network from the previous section and additionally consider a larger network from central London, again sourced from OpenStreetMap and centred on “Westminster, London, UK”.

Our simulated experiments suggest that embedding small subgraphs yields comparable accuracy to weighting the full nodeset continuously, with significant computational advantages. We leverage this by constructing global visualisations from local subgraph embeddings.

The procedure is as follows. We iteratively build overlapping subgraphs by randomly selecting a node not yet assigned to any subgraph, and including its $m$-hop neighborhood. This continues until every node belongs to at least one subgraph. We then apply ASE to each subgraph and compute the Euclidean distances between embedded nodes. These local distances populate a global distance matrix $\mathbf{D}$: for nodes co-occurring in multiple subgraphs, we average their distances; for nodes that never appear together, we assign a large finite value (e.g., 10× the maximum observed distance).

We refer to this method as `UMAP-LASE', wherein we input $\mathbf{D}$ to the UMAP algorithm \citep{mcinnes2018umap} with \texttt{metric=`precomputed'}. UMAP's ability to recover low-dimensional representations from local neighbourhood distances makes it well-suited to our setting. Intuitively, by our local-PCA exposition of LASE in Section \ref{sec:LASE_feature_map}, the distances between embeddings from separate subgraphs are comparable, and should approximate the true local geometry of $\{\phi(Z_1), \ldots, \phi(Z_n)\}$ up to error terms from LASE estimation and truncation.
As a baseline, we compare against a simpler alternative, `UMAP-ASE', in which we embed the entire graph via ASE into $r$ dimensions, and apply UMAP directly to the resulting point cloud. In this case, distance distortions arise from the ASE estimation and truncation error.

\begin{figure}[h]
    \centering
    \includegraphics[width=1\linewidth]{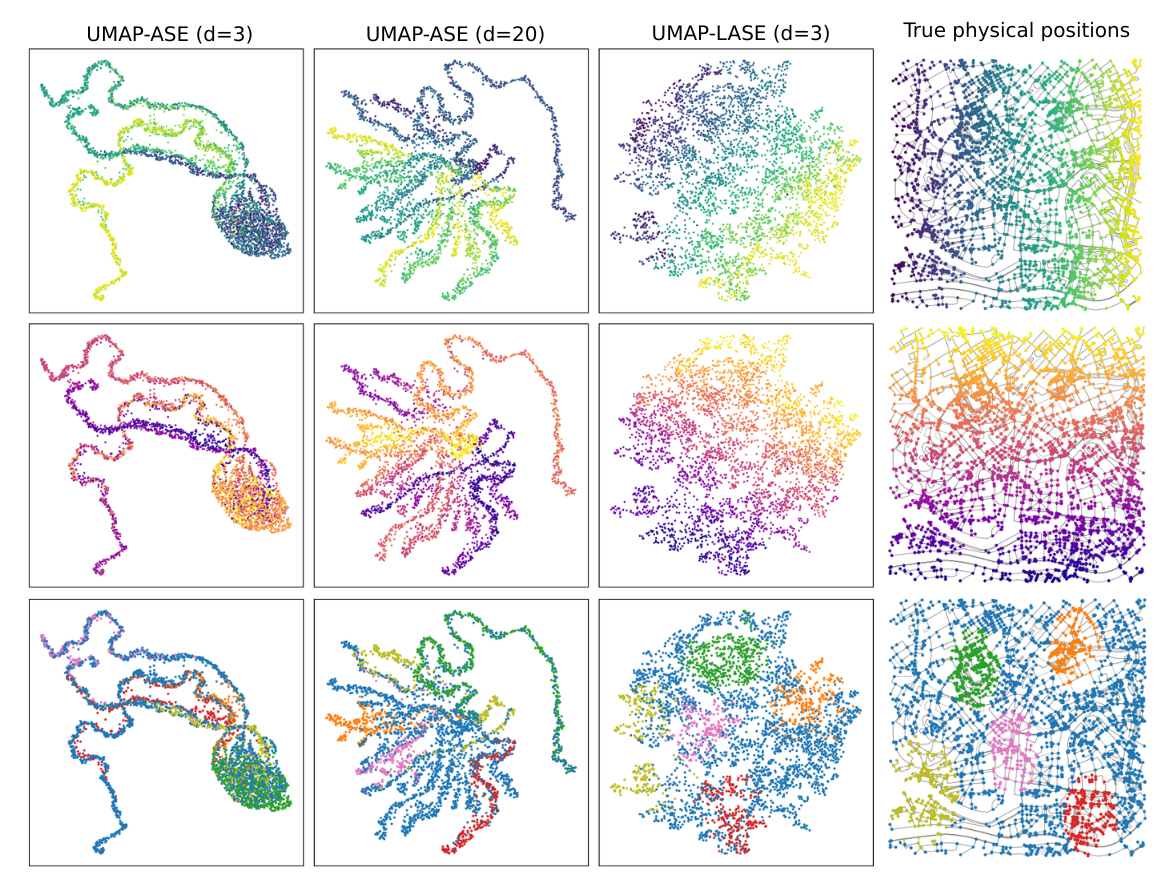}
    \caption{Global embeddings of the Bristol road network ($n=3857$), with true coordinates in the right-most column. The first two columns show UMAP applied to ASE embeddings of the full network in $d=3$ and $d=20$ dimensions. The third column shows UMAP-LASE using $m=10$ and $d=3$. Rows indicate colouring by longitude, latitude, and selected regions, respectively.}
    \label{fig:bristol_umap}
\end{figure}

\begin{figure}[h]
    \centering
    \includegraphics[width=1\linewidth]{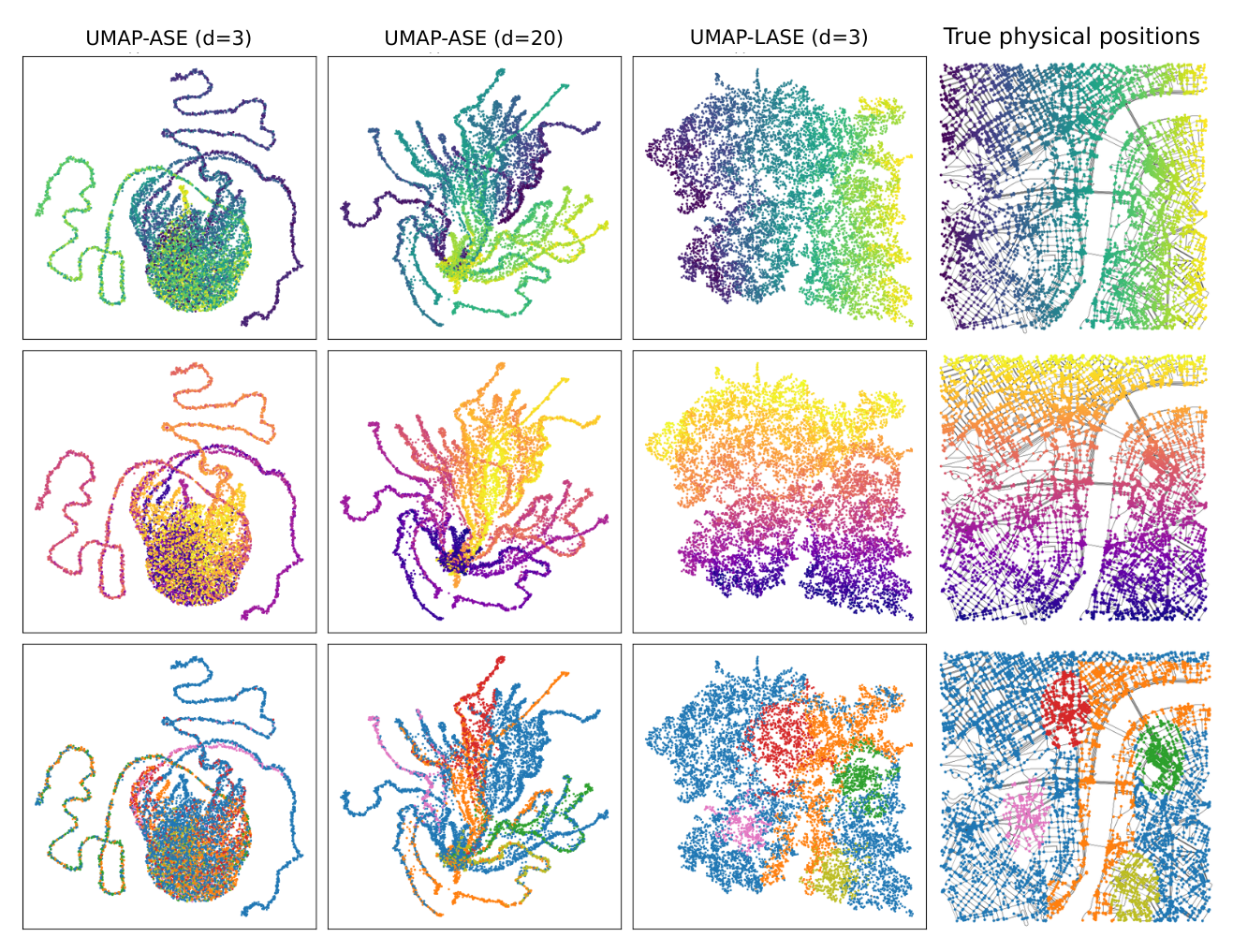}
    \caption{Global embeddings of the London road network ($n=9478$), analogous to Figure~\ref{fig:bristol_umap}. UMAP-LASE yields improved geometric fidelity of geographic features such as the River Thames.}
    \label{fig:london_umap}
\end{figure}

Figures~\ref{fig:bristol_umap} and~\ref{fig:london_umap} compare the global embeddings produced by UMAP-ASE and UMAP-LASE for the Bristol and London road networks. In both cases, the right-most column shows the true geographic layout of the network. The first two columns correspond to UMAP-ASE into $d=3$ and $d=20$ dimensions respectively. The third column shows the output of UMAP-LASE with $m=10$ and $d=3$, which internally uses 109 subgraphs for Bristol and 181 for London. 
Across all settings, UMAP parameters were set to \texttt{n\_neighbors=25} and \texttt{min\_dist=1}. We observed that the \texttt{min\_dist} parameter significantly affected cluster compactness, with \texttt{min\_dist=1} resulting in the best visualisations for all methods, whereas results were relatively robust to \texttt{n\_neighbors} in the range of 10–100. In both datasets, UMAP-LASE more faithfully preserved structural features. For example, in the London network, the embedded positions of nodes along opposing banks of the River Thames (highlighted in orange in the third row) for UMAP-LASE resembles the true physical positions. In fact, we even see clear separation of the banks at the start and end points of the stretch of river included in the network. In the UMAP-LASE embeddings, on the other hand, this structure is less apparent.

\begin{table}[ht]
    \centering
    \begin{tabular}{lccc}
        \toprule
        \textbf{Network} & \textbf{UMAP-ASE ($d=3$)} & \textbf{UMAP-ASE ($d=20$)} & \textbf{UMAP-LASE ($d=3$)} \\
        \midrule
        Bristol & 17.3 & 16.9 & 5.7 \\
        London  & 26.8 & 13.9 & 19.9 \\
        \bottomrule
    \end{tabular}
    \caption{Run-times (in seconds) for UMAP applied to different embeddings of the Bristol and London road networks.}
    \label{tab:umap_runtimes}
\end{table}

Table \ref{tab:umap_runtimes} reports the run-times for UMAP applied to different embeddings of the Bristol and London road networks. Interestingly, the UMAP-ASE procedure is faster for $d=20$ than $d=3$ for both networks. This somewhat counter-intuitive behaviour reflects the fact that the 3D embedding, while lower-dimensional, more severely compresses the network’s structure, leading to densely packed neighbourhoods that are harder to disentangle during optimisation. This phenomenon, sometimes referred to as the \textit{crowding problem}, is well-documented in the context of t-SNE and related methods \citep{van2008visualizing}. In contrast, higher-dimensional embeddings (e.g., 20D ASE) can retain more of the original graph’s local geometry, resulting in better-separated neighbourhoods and smoother convergence during optimisation. Applying UMAP to low-dimensional LASE does not appear to suffer from the crowding problem, producing embeddings that more closely resemble the true coordinates and running significantly faster than UMAP-ASE for both $d=3$ and $d=20$ on the Bristol network. For the London network, the subgraph splitting, embedding, and $\mathbf{D}$-matrix computation step took 4.5,s, while the UMAP step took 15.4s, which is comparable to the UMAP-ASE ($d=20$) run-time. For even larger graphs than those considered here, our procedure has the benefit of not requiring a high-dimensional spectral decomposition of a very large matrix.
In Appendix \ref{app:experiments}, we also show the superiority of UMAP-LASE ($d=3$) over applying UMAP to a 3D Node2Vec embedding \citep{grover2016node2vec}.

\section{Discussion}\label{sec:discussion}

In this paper, we introduced \emph{Local Adjacency Spectral Embedding} (LASE), a generalisation of classical ASE designed to emphasise localised structure in networks. By incorporating node weights into the adjacency spectral decomposition, LASE enables embeddings that reflect user-defined priorities. Importantly, the LASE algorithm preserves the interpretability and mathematical rigour of spectral methods.

Theoretically, we showed that LASE arises naturally as the solution to a localised version of the classical Schmidt approximation problem under a re-weighted latent position model. Our analysis provides bounds on approximation and statistical error, extending standard spectral convergence results to the localised setting. Notably, this framework remains flexible: the weighting function $w$ may be chosen to reflect prior knowledge without disrupting the theoretical guarantees.

Empirically, we demonstrated that LASE offers substantial improvements over classical ASE for local reconstruction and visualisation tasks. Moreover, we present a new technique for producing a global visualisation of graph-valued data using a combination of LASE and UMAP, exploiting the benefits and intermediary mechanisms of each method.

\paragraph{Future work} 

Our results open several promising avenues for future work:

\begin{itemize}
    \item \textbf{Learning weights from data.} While we focused on user-defined or heuristic weight functions, a natural extension is to learn the weights $w_i$ directly from data in a task-specific or data-adaptive manner. This could be accomplished through gradient-based optimisation, or by training a GNN to predict weights that improve downstream performance.

    \item \textbf{Integration with neural architectures.} LASE can serve as a principled preprocessing step or layer in hybrid models that combine spectral and deep learning methods. For example, LASE embeddings could be used as input features to GNNs, or LASE could be incorporated into spectral attention mechanisms for localised reasoning.

    \item \textbf{Theoretical generalisations.} While our analysis assumes a latent position model with a positive-definite kernel, it would be interesting to extend the framework to indefinite kernels and directed or weighted graphs.

    \item \textbf{Applications in scientific networks.} For real-world networks where local structure matters, future work can explore LASE as a tool for local structure discovery or interactive embeddings where one can "zoom in" and "move around" the network to reveal rich local structure, akin to Google maps.

\end{itemize}

We believe that LASE opens a new perspective on graph representation learning: one that prioritises local fidelity, theoretical tractability, and practical flexibility.

\bibliography{local}

\newpage
\appendix

\section{Background about kernels and the Mercer feature map }\label{app:ase}

\begin{thm}
\label{low_rank_approximation}
[\cite{hsing2015theoretical} \textit{(Theorem 4.6.8)] 
Let $f$ be a symmetric and nonnegative-definite kernel with the eigendecomposition
$$f(x,y) = \sum_{k=1}^\infty \lambda_k u_k(x) u_k(y).$$
Then, for any positive integer $r$,
$$\min_{\rank(f^{(r)})=r} \iint \left| f(x,y) - f^{(r)}(x,y)\right|^2  \mu (\mathrm{d}x) \mu (\mathrm{d}y) = \sum_{k=r+1}^\infty \lambda_k^2,  $$
where the minimum is achieved by $f^{(r)}(x,y) = \sum_{k=1}^r \lambda_k u_k(x) u_k(y)$. }
\end{thm}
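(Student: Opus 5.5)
The plan is to reduce the statement to an optimal low-rank approximation problem for a Hilbert--Schmidt operator and then use the eigendecomposition. Identify each kernel $g\in L_2(\mu\otimes\mu)$ with the integral operator $T_g h(x)=\int g(x,y)h(y)\mu(\mathrm{d}y)$ on $L_2(\mathcal{Z},\mu)$. The map $g\mapsto T_g$ is an isometry onto the Hilbert--Schmidt operators, since $\|T_g\|_{HS}^2=\iint |g|^2\,\mathrm{d}\mu\,\mathrm{d}\mu$. A kernel of rank $r$, meaning it can be written as $\sum_{k=1}^r a_k(x)b_k(y)$ with linearly independent factors, corresponds exactly to an operator of rank $r$. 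The objective therefore becomes $\|\mathcal{A}-T\|_{HS}^2$, minimised over operators $T$ of rank at most $r$; allowing rank $<r$ cannot lower the minimum. By Mercer, $\mathcal{A}=\sum_k\lambda_k u_k\otimes u_k$ with $\{u_k\}$ orthonormal, and we extend $\{u_k\}$ to an orthonormal basis of $L_2(\mu)$.

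\emph{Upper bound.} Take $f^{(r)}=\sum_{k\le r}\lambda_k u_ku_k$. Then $f-f^{(r)}=\sum_{k>r}\lambda_k u_k(x)u_k(y)$, and by orthonormality of the products $u_k(x)u_l(y)$ in $L_2(\mu\otimes\mu)$ its squared norm is $\sum_{k>r}\lambda_k^2$.

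\emph{Lower bound.} This is the main step, and we prove it via Weyl's inequality for singular values. For any rank-$r$ operator $T$ and every $j$,
\[
s_{j+r}(\mathcal{A})\le s_j(\mathcal{A}-T)+s_{r+1}(T)=s_j(\mathcal{A}-T).
\]
This follows from the min--max characterisation $s_{m}(B)=\min_{\dim V< m}\max_{h\perp V,\|h\|=1}\|Bh\|$. Taking $V$ to be the span of $\ker(T)^\perp$ together with an optimal subspace for $\mathcal{A}-T$ gives $\dim V<j+r$, and on $V^\perp$ we have $\mathcal{A}h=(\mathcal{A}-T)h$. Summing the squared inequality over $j\ge1$ gives
\[
\|\mathcal{A}-T\|_{HS}^2=\sum_j s_j(\mathcal{A}-T)^2\ge\sum_{k>r}\lambda_k^2,
\]
since the singular values of the positive self-adjoint operator $\mathcal{A}$ are exactly the $\lambda_k$.

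The obstacle we expect is making the min--max argument rigorous in infinite dimensions. Compactness of $\mathcal{A}$ follows from trace class, and $\mathcal{A}-T$ is compact, so singular values exist and the min--max principle applies. An alternative route avoids Weyl altogether: write $T=\sum_{i\le r}a_i\otimes b_i$ with $\{b_i\}$ orthonormal and let $P$ be the projection onto their span. Then $\|\mathcal{A}-T\|_{HS}^2\ge\|\mathcal{A}(I-P)\|_{HS}^2=\mathrm{tr}(\mathcal{A}^2)-\mathrm{tr}(P\mathcal{A}^2P)$, and Ky Fan's maximum principle gives $\mathrm{tr}(P\mathcal{A}^2P)\le\sum_{k\le r}\lambda_k^2$. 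That is the route we would write out.
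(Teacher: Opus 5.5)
The paper does not prove this statement. It quotes the result from Hsing and Eubank (Theorem 4.6.8) and uses it as a black box to justify \eqref{eq:phi^r_opt} and \eqref{eq:phi_w^r_opt}, so there is no in-paper argument to match.

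Your proposal is a correct, self-contained proof. The reduction through the isometry $g\mapsto T_g$ between $L_2(\mu\otimes\mu)$ and Hilbert--Schmidt operators on the separable space $L_2(\mathcal{Z},\mu)$ is sound, and both of your lower-bound arguments go through.
\begin{itemize}
\item In the Weyl route, $\dim V\le r+j-1$. Also, $h\perp\ker(T)^\perp$ forces $h\in\ker T$ because $\ker T$ is closed.
\item In the projection route, $T=TP$ gives $\|\mathcal{A}-T\|_{\mathrm{HS}}^2=\|(\mathcal{A}-T)P\|_{\mathrm{HS}}^2+\|\mathcal{A}(I-P)\|_{\mathrm{HS}}^2$, which you can check in an orthonormal basis adapted to $P$. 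Ky Fan then finishes the argument.
\end{itemize}

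The two routes buy different things. The Weyl inequality $s_j(\mathcal{A}-T)\ge s_{j+r}(\mathcal{A})$ holds for every $j$, so it shows the truncation is optimal simultaneously in every unitarily invariant norm (Mirsky). The projection route is shorter, and it is in fact the argument the paper already gives for part 1) of Theorem \ref{thm:projection_thm}. Set $\psi(x)\coloneqq f(x,\cdot)\in L_2(\mu)$ and $Z\sim\mu$. Then $\|\mathcal{A}(I-P)\|_{\mathrm{HS}}^2=\mathbb{E}\|\psi(Z)-P\psi(Z)\|^2$ and $\mathbb{E}[\psi(Z)\otimes\psi(Z)]=\mathcal{A}^2$. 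The paper's Pythagorean identity followed by the Courant--Fischer/Ky Fan step therefore applies verbatim, with $\phi$ replaced by $\psi$ and $\mathcal{K}_w$ by $\mathcal{A}^2$, whose eigenvalues are $\lambda_k^2$. Writing it that way would make the appendix self-contained without new machinery.

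One caveat concerns the statement rather than your proof. The candidate $\sum_{k\le r}\lambda_k u_k(x)u_k(y)$ has rank exactly $r$ only when $\lambda_r>0$. If $\lambda_r=0$, then $f$ has rank at most $r-1$. In that case the claimed minimiser is not admissible, and no kernel of rank exactly $r$ attains the value $\sum_{k>r}\lambda_k^2=0$. Your remark that allowing rank $<r$ cannot lower the minimum covers the lower bound. Attainment, however, needs either the constraint $\rank(f^{(r)})\le r$ or the proviso that $r$ is at most the number of positive eigenvalues. This is the same restriction, $r\le\mathrm{rank}(\mathcal{A}_w)$, that the paper imposes in Theorem \ref{thm:projection_thm}.
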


\begin{lem}\label{lem:homeomorphism} Let $\phi:\mathcal{Z}\to \ell_2$ be the Mercer feature map associated with the latent position model as in Definition \ref{def:latent_position_model}. If for all $x, y \in \mathcal{Z}$ with $x \neq y$, there exists $a \in \mathcal{Z}$ such that $f(x, a) \neq f(y, a)$, then $\phi$ is an injective embedding, and consequently a homeomorphism between $\mathcal{Z}$ and $\mathcal{M}\coloneqq \{\phi(z) : z\in\mathcal{Z}\}$. 
\end{lem}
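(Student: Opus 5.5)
We plan to prove three things in turn: $\phi$ is continuous, $\phi$ is injective, and any continuous injection from a compact space into a Hausdorff space is a homeomorphism onto its image.

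\emph{Continuity.} From \eqref{eq:mercer_inner_prod_form},
\[
\|\phi(x)-\phi(y)\|^2 = f(x,x) - 2f(x,y) + f(y,y).
\]
Since $f$ is jointly continuous on $\mathcal{Z}\times\mathcal{Z}$ (Definition \ref{def:latent_position_model}), the right-hand side tends to $f(x,x)-2f(x,x)+f(x,x)=0$ as $y\to x$. Hence $\phi:\mathcal{Z}\to\ell_2$ is continuous. The uniform convergence in Mercer's theorem \eqref{eq:mercer_expansion} also guarantees that $\phi(x)\in\ell_2$ for every $x$, since $\|\phi(x)\|^2=f(x,x)\le 1$.

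\emph{Injectivity.} Suppose $x\neq y$ but $\phi(x)=\phi(y)$. Then for every $a\in\mathcal{Z}$,
\[
f(x,a)=\langle\phi(x),\phi(a)\rangle=\langle\phi(y),\phi(a)\rangle=f(y,a).
\]
This contradicts the hypothesis that some $a$ has $f(x,a)\neq f(y,a)$. So $\phi$ is a bijection from $\mathcal{Z}$ onto $\mathcal{M}$.

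\emph{Homeomorphism.} We show the inverse $\phi^{-1}:\mathcal{M}\to\mathcal{Z}$ is continuous, i.e.\ that $\phi$ maps closed sets of $\mathcal{Z}$ to relatively closed sets of $\mathcal{M}$. Let $C\subseteq\mathcal{Z}$ be closed. Because $\mathcal{Z}$ is a compact metric space, $C$ is compact. Its continuous image $\phi(C)$ is therefore compact in $\ell_2$. A compact subset of the metric (hence Hausdorff) space $\ell_2$ is closed, so $\phi(C)$ is closed in $\ell_2$ and in particular relatively closed in $\mathcal{M}$. Thus $\phi$ is a closed continuous bijection onto $\mathcal{M}$, which is a homeomorphism.

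This argument has no serious obstacle. The only point needing care is continuity of $\phi$ into $\ell_2$ with the norm topology, and it follows directly from the distance identity above.
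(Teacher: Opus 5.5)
Your proof is correct and follows the paper's argument essentially step for step: continuity from the identity $\|\phi(x)-\phi(y)\|^2=f(x,x)+f(y,y)-2f(x,y)$, injectivity by the same contradiction through inner products, and continuity of the inverse from compactness of $\mathcal{Z}$. The only difference is that you prove the closed-map fact for a continuous bijection from a compact space into a Hausdorff space inline, whereas the paper cites it as a general metric-space result.
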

\begin{proof}
We need to prove that $\phi$ is continuous, invertible on its image $\mathcal{M}$ (injective), and has a continuous inverse. 
Continuity follows from the identity
$$
\|\phi(x)-\phi(y)\|_{\ell_2}^2=f(x,x)+f(y,y)-2f(x,y)
$$
and the continuity of $f$ as per Definition \ref{def:latent_position_model}.

Now for the proof of invertibility of $\phi$, suppose for purposes of contradiction that $\phi$ is not invertible, i.e., there exist $x,y \in \mathcal{Z}$ with $x\neq y$ such that $\phi(x)=\phi(y)$. Then for any $a \in \mathcal{Z}$,
$$
f(x,a)=\langle \phi(x),\phi(a)\rangle = \langle \phi(y),\phi(a)\rangle =f(y,a).
$$
But this contradicts the condition in the statement of the lemma that there exists some $a \in \mathcal{Z}$ where $f(x,a) \neq f(y,a)$. Therefore, $\phi$ must be invertible on $\mathcal{M}$.

It remains to prove that the inverse of $\phi$ is continuous (with respect to the $\ell_2$ distance). But this is automatic due to a general result in the theory of metric spaces \citep[Prop. 13.26]{sutherland2009introduction} concerning the inverse of a continuous, bijective mapping with a compact domain. 
\end{proof}

\section{Proofs for and supporting results for Section \ref{sec:LASE_feature_map}}
\label{reconstruction_error_appendix}

\subsection{Relating \texorpdfstring{$\mathcal{A}_w$}{Aw} and \texorpdfstring{$\mathcal{K}_w$}{Kw}}\label{app:relating_A_K}
The following lemma characterises the non-zero eigenvalues and corresponding eigenvectors of $\mathcal{K}_w$ defined in \eqref{eq:K_w_defn}, and will be used in the proof of Theorem \ref{thm:projection_thm}.
\begin{lem}
\label{lem:covariance_eigenvalues}
The distinct non-zero eigenvalues of $\mathcal{K}_w$ are equal to those of $\mathcal{A}_w$. Furthermore, $\mathcal{K}_w$ and $\mathcal{A}_w$ have the same number of orthonormal eigenvectors and eigenfunctions, respectively, associated with each distinct eigenvalue.
    With $(\Tilde{u}_k)_{k\geq 1}$ being orthonormal eigenfunctions of $\mathcal{A}_w$ with associated eigenvalues $(\Tilde{\lambda}_k)_{k \geq 1}$, for any $k \geq 1$ such that $\Tilde{\lambda}_k > 0$,
    $$\bm{v}_k := \Tilde{\lambda}_k^{-1/2} \int \Tilde{u}_k(x) \phi(x) \mu_w (\mathrm{d}x) $$
    is a member of $\ell_2$ satisfying $\mathcal{K}_w \bm{v}_k = \Tilde{\lambda}_k \bm{v}_k$, $\Vert \bm{v}_k \Vert = 1$ and $\langle \bm{v}_j, \bm{v}_k \rangle = 0$ for $j \neq k$.
\end{lem}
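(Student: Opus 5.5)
The plan is to realise both operators through a single "feature operator" and then read off the shared spectrum. Define $T: L_2(\mathcal{Z},\mu_w)\to\ell_2$ by $Tg \coloneqq \int g(x)\phi(x)\,\mu_w(\mathrm{d}x)$.

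First we check that $T$ is well defined and bounded. By the second equality in \eqref{eq:two_feaure_maps}, $\|\phi(x)\|^2 = f(x,x)\le 1$, so $\phi$ is Bochner integrable against $g\,\mu_w$ for $g\in L_2(\mu_w)\subset L_1(\mu_w)$, and $\|Tg\|\le \|g\|_{L_2(\mu_w)}$. Continuity of $\phi$ (from the identity in the proof of Lemma \ref{lem:homeomorphism}) gives measurability.

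Next we identify the adjoint and the two products. The adjoint is $T^*\bm{v} = \langle \phi(\cdot),\bm{v}\rangle$, which lies in $L_2(\mu_w)$ since $|\langle\phi(x),\bm v\rangle|\le\|\bm v\|$. Then:
\begin{itemize}
\item $T^*T g(x) = \int \langle\phi(x),\phi(y)\rangle g(y)\,\mu_w(\mathrm{d}y) = \mathcal{A}_w g(x)$, using $f(x,y)=\langle\phi(x),\phi(y)\rangle$;
\item $TT^*\bm v = \int \phi(x)\langle\phi(x),\bm v\rangle\,\mu_w(\mathrm{d}x) = \mathcal{K}_w\bm v$.
\end{itemize}
The only care needed is exchanging the integral and the inner product, which is justified by boundedness of the $\ell_2$-valued Bochner integral.

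Now we apply the standard $T^*T$ versus $TT^*$ argument. If $\mathcal{A}_w\tilde u_k=\tilde\lambda_k\tilde u_k$ with $\tilde\lambda_k>0$, then $\mathcal{K}_w T\tilde u_k = T\mathcal{A}_w\tilde u_k = \tilde\lambda_k T\tilde u_k$. Moreover,
\[
\langle T\tilde u_j, T\tilde u_k\rangle = \langle \tilde u_j,\mathcal{A}_w\tilde u_k\rangle_{L_2(\mu_w)} = \tilde\lambda_k\delta_{jk},
\]
so $\bm v_k=\tilde\lambda_k^{-1/2}T\tilde u_k$ is a unit eigenvector, the $\bm v_k$ are mutually orthogonal, and $T\tilde u_k\neq 0$. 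This gives every claimed property of $\bm v_k$. Symmetrically, if $\mathcal{K}_w\bm v=\lambda\bm v$ with $\lambda>0$, then $T^*\bm v$ is a nonzero eigenfunction of $\mathcal{A}_w$ with eigenvalue $\lambda$: it is nonzero because $\|T^*\bm v\|^2=\lambda\|\bm v\|^2$. Hence the nonzero spectra coincide.

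For equality of multiplicities, restrict attention to a fixed $\lambda>0$. The map $T$ sends the $\lambda$-eigenspace of $\mathcal{A}_w$ into the $\lambda$-eigenspace of $\mathcal{K}_w$, and $T^*$ maps back the other way. On these eigenspaces $T^*T=\lambda I$ and $TT^*=\lambda I$, so $\lambda^{-1/2}T$ is a unitary bijection between them. Orthonormal bases therefore correspond and have the same cardinality, which is finite since $\mathcal{A}_w$ is compact (trace class, by the same Mercer argument as for $\mathcal{A}$, with $\mu_w$ in place of $\mu$).

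The main obstacle is only technical: rigorously justifying the Bochner-integral manipulations and the adjoint formula in $\ell_2$. I would handle this coordinatewise. Each coordinate of $Tg$ is $\int \lambda_k^{1/2}u_k(x)g(x)\,\mu_w(\mathrm{d}x)$, and the interchanges then follow from the uniform convergence in Mercer's theorem \eqref{eq:mercer_expansion} together with dominated convergence, using $\sum_k\lambda_k u_k(x)^2 = f(x,x)\le1$.
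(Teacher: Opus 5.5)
Your proof is correct and takes essentially the same route as the paper. The paper's maps $g(\tilde u_k)=\tilde\lambda_k^{-1/2}\int \tilde u_k\,\phi\,\mathrm{d}\mu_w$ and $h(\bm v_k)=\tilde\lambda_k^{-1/2}\phi(\cdot)^\top\bm v_k$ are exactly $\tilde\lambda_k^{-1/2}T$ and $\tilde\lambda_k^{-1/2}T^{*}$, and its direct computations are the identities $T^{*}T=\mathcal{A}_w$ and $TT^{*}=\mathcal{K}_w$ written out. Your factorisation, with $\lambda^{-1/2}T$ as a unitary map between eigenspaces and the integral interchanges justified, states the multiplicity claim more cleanly and rigorously than the paper's $g\circ h$, $h\circ g$ argument, but it is the same idea.
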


\begin{proof}
We have that
\begin{align*}
    \mathcal{K}_w \bm{v}_k &= \Tilde{\lambda}_k^{-1/2} \int \phi(y) \phi(y)^\top \mu_w (\mathrm{d}y)\int \Tilde{u}_k(x) \phi(x) \mu_w (\mathrm{d}x) \\
    &= \Tilde{\lambda}_k^{-1/2} \int \phi(y) \int \Tilde{u}_k(x) f(y,x) \mu_w (\mathrm{d}x) \mu_w (\mathrm{d}y) \\
    &= \Tilde{\lambda}_k^{1/2} \int \phi(y) \Tilde{u}_k(y) \mu_w (\mathrm{d}y) \\
    &= \Tilde{\lambda}_k \bm{v}_k.
\end{align*}
Therefore $\Tilde{\lambda}_k, \bm{v}_k$ is an eigenvalue, eigenvector pair for $\mathcal{K}_w$. Furthermore, we have
\begin{align*}
    \Vert \bm{v}_k \Vert^2 
    = \Tilde{\lambda}_k^{-1} \iint \Tilde{u}_k(x) \Tilde{u}_k(y) f(x,y) \mu_w (\mathrm{d}x) \mu_w (\mathrm{d}y) 
    = \int \Tilde{u}_k(x)^2 \mu_w (\mathrm{d}x) = 1,
\end{align*}
and for $j \neq k$,
\begin{align*}
    \langle \bm{v}_j, \bm{v}_k \rangle &= \Tilde{\lambda}_j^{-1/2} \Tilde{\lambda}_k^{-1/2} \iint \Tilde{u}_j(x) \Tilde{u}_k(y) f(x,y) \mu_w (\mathrm{d}x) \mu_w (\mathrm{d}y) \\
    &= \Tilde{\lambda}_j^{-1/2} \Tilde{\lambda}_k^{1/2} \int \Tilde{u}_j(x) \Tilde{u}_k(x) \mu_w (\mathrm{d}x) =0.
\end{align*}

Now, let $\{\bm{v}_k\}_{k \geq1}$ be orthonormal eigenvectors of $\mathcal{K}_w$ corresponding to eigenvalues $\{\Tilde{\lambda}_k\}_{k\geq1}$, and consider the functions
$$
g(\Tilde{u}_k) := \Tilde{\lambda}_k^{-1/2} \int \Tilde{u}_k(x) \phi(x) \mu_w (\mathrm{d}x),
$$
so that $\bm{v}_k= g(\Tilde{u}_k)$, and
$$u_{v_k}(\cdot) = h(\bm{v}_k)(\cdot) := \Tilde{\lambda}_k^{-1/2}\phi(\cdot)^\top \bm{v}_k.$$
For any $k \geq 1$ such that $\Tilde{\lambda}_k > 0$, we have that
\begin{align*}
    \mathcal{A}_w u_{v_k}(x) &= \Tilde{\lambda}_k^{-1/2} \int f(x,y) \phi(y)^\top \bm{v}_k \mu_w (\mathrm{d}y) \\
    &= \Tilde{\lambda}_k^{-1/2}  \phi(x)^\top \left( \int \phi(y) \phi(y)^\top \mu_w (\mathrm{d}y) \right) \bm{v}_k \\
    &= \Tilde{\lambda}_k^{-1/2}  \phi(x)^\top \mathcal{K}_w \bm{v}_k \\
    &= \Tilde{\lambda}_k^{1/2} \phi(x)^\top \bm{v}_k \\
    &= \Tilde{\lambda}_k u_{v_k}(x),
\end{align*}
and
\begin{align*}
    \Vert h(\bm{v}_k) \Vert^2 &= \Tilde{\lambda}_k^{-1} \bm{v}_k^\top \mathbb{E}_{\mu_w} \left[ \phi(X) \phi(X)^\top \right] \bm{v}_k = \Vert \bm{v}_k
\Vert^2 =1 \end{align*}

Therefore, $\Tilde{\lambda}_k, u_{v_k}(\cdot)$ is an eigenvalue, eigenfunction pair for $\mathcal{A}_w$. Since the relation holds in both directions, it follows that $\mathcal{A}_w$ and $\mathcal{K}_w$ have the same non-zero eigenvalues. 

Furthermore, we have that 
$$(g \circ h)(\bm{v}_k) = \bm{v}_k \hspace{20pt} \text{and} \hspace{20pt} (h \circ g)(\Tilde{u}_k) = \Tilde{u}_k,$$
so there is a 1-1 correspondence between the eigenvectors and eigenfunctions given by the functions $g$ and $h$. It follows that $\mathcal{A}_w$ and $\mathcal{K}_w$ have the same number of orthonormal eigenvectors and eigenfunctions, respectively, associated with each distinct eigenvalue.
\end{proof}

\subsection{Proof of Theorem \ref{thm:projection_thm}}

\begin{proof}[Proof of Theorem \ref{thm:projection_thm}]

 Let $Z \sim \mu_w$, and let $\bm{q}_1, \ldots, \bm{q}_r \in \ell_2$ be an orthonormal basis for some subspace $U$, and define the orthogonal projection operator $\boldsymbol{\Pi}_r: \ell_2 \to U$ as $\boldsymbol{\Pi}_r := \sum_{i=1}^r \bm{q}_i \langle \cdot, \bm{q}_i \rangle$.

First, we show that
$$ \mathbb{E} \left[ \Vert \phi(Z) - \boldsymbol{\Pi}_r \phi(Z) \Vert^2  \right] = \mathbb{E} \left[ \Vert \phi(Z) \Vert^2  \right] - \mathbb{E} \left[ \Vert \boldsymbol{\Pi}_r \phi(Z) \Vert^2  \right].$$

In the following, we use that since $\boldsymbol{\Pi}_r$ is an orthogonal projection, for every $\bm{x},\bm{y} \in \ell_2$, $\langle \bm{x}, \boldsymbol{\Pi}_r \bm{y} \rangle = \langle \boldsymbol{\Pi}_r \bm{x}, \boldsymbol{\Pi}_r \bm{y} \rangle = \langle \boldsymbol{\Pi}_r \bm{x}, \bm{y} \rangle$.
\begin{align*}
    &\mathbb{E} \left[ \Vert \phi(Z) - \boldsymbol{\Pi}_r \phi(Z) \Vert^2  \right]\\ 
    &= \mathbb{E} \left[ \big\langle \phi(Z) - \boldsymbol{\Pi}_r \phi(Z), \phi(Z) - \boldsymbol{\Pi}_r \phi(Z) \big\rangle \right] \\
    &= \mathbb{E} \left[ \big\langle \phi(Z),\phi(Z) \big\rangle  + \big\langle \boldsymbol{\Pi}_r \phi(Z), \boldsymbol{\Pi}_r\phi(Z) \big\rangle - 2 \big\langle \phi(Z), \boldsymbol{\Pi}_r\phi(Z) \big\rangle \right] \\
    &= \mathbb{E} \left[ \big\langle \phi(Z),\phi(Z) \big\rangle  + \big\langle \boldsymbol{\Pi}_r \phi(Z), \boldsymbol{\Pi}_r\phi(Z) \big\rangle - 2 \big\langle \boldsymbol{\Pi}_r \phi(Z), \boldsymbol{\Pi}_r\phi(Z) \big\rangle \right] \\
    &= \mathbb{E} \left[ \big\langle \phi(Z),\phi(Z) \big\rangle  - \big\langle \boldsymbol{\Pi}_r \phi(Z), \boldsymbol{\Pi}_r\phi(Z) \big\rangle \right] \\
    &=  \mathbb{E} \left[ \Vert \phi(Z) \Vert^2  \right] - \mathbb{E} \left[ \Vert \boldsymbol{\Pi}_r \phi(Z) \Vert^2  \right].
\end{align*}
Therefore, 
$$\argmin_{\boldsymbol{\Pi}_r}\mathbb{E} \left[ \Vert \phi(Z) - \boldsymbol{\Pi}_r \phi(Z) \Vert^2  \right] = \argmax_{\boldsymbol{\Pi}_r}\mathbb{E} \left[ \Vert \boldsymbol{\Pi}_r \phi(Z) \Vert^2  \right].$$

Using $\Vert \phi(x) \Vert^2=f(x,x)=\|\phi_w(x)\|^2$, we also have,
$$\mathbb{E} \left[ \Vert \phi(Z) \Vert^2  \right]= \mathbb{E}[f(X,X)] = \mathbb{E} \left[ \|\phi_w(Z)\|^2 \right] = \mathbb{E} \left[ \sum_{i=1}^\infty \Tilde{\lambda}_i |\Tilde{u}_i(Z)|^2 \right] =  \sum_{i=1}^\infty \Tilde{\lambda}_i .$$

Now, define $\alpha_i(\cdot) := \langle \bm{q}_i, \phi(\cdot) \rangle$, and note that $\langle \bm{q}_i, \bm{q}_j\rangle = \delta_{ij}$, where $\delta_{ij}$ is the Kronecker delta. Then we have that

\begin{align*}
    \mathbb{E} \left[ \Vert \boldsymbol{\Pi}_r \phi(Z) \Vert^2  \right] 
    &= \mathbb{E} \left[ \big\langle \boldsymbol{\Pi}_r \phi(Z), \boldsymbol{\Pi}_r \phi(Z) \big\rangle  \right] \\
    &= \mathbb{E} \left[ \big\langle \sum_{i=1}^r \alpha_i(Z) \bm{q}_i, \sum_{j=1}^r \alpha_j(Z) \bm{q}_j  \big\rangle  \right] \\
    &= \mathbb{E} \left[ \sum_{i,j =1}^r \alpha_i(Z) \alpha_j(Z) \langle \bm{q}_i, \bm{q}_j \rangle \right] \\
    &= \mathbb{E} \left[ \sum_{i=1}^r \alpha_i(Z)^2 \right] \\
    &= \mathbb{E} \left[ \sum_{i=1}^r \langle \bm{q}_i, \phi(Z)\rangle^2 \right] \\
    &= \mathbb{E} \left[ \sum_{i=1}^r \bm{q}_i^\top \phi(Z) \phi(Z)^\top \bm{q}_i \right] \\
    &=  \sum_{i=1}^r \bm{q}_i^\top \mathbb{E}\left[  \phi(Z) \phi(Z)^\top \right]\bm{q}_i \\
     &=  \sum_{i=1}^r \bm{q}_i^\top \mathcal{K}_w \bm{q}_i. 
\end{align*}
It follows by the Courant-Fischer-Weyl min-max principle and Lemma \ref{lem:covariance_eigenvalues}, that
$$\max_{\bm{\Pi}_r}\mathbb{E} \left[ \Vert \boldsymbol{\Pi}_r \phi(Z) \Vert^2  \right] = \sum_{i=1}^r \Tilde{\lambda}_i,$$
and therefore 
$$\min_{\boldsymbol{\Pi}_r} \mathbb{E} \left[ \Vert \phi(Z) - \boldsymbol{\Pi}_r \phi(Z) \Vert^2  \right] = \sum_{i = r+1}^\infty \Tilde{\lambda}_i,$$
proving part 1) of the statement of the theorem.
Moreover, the projection that achieves this minimum is $\mathbf{V}_r \mathbf{V}_r^\top ( \cdot) = \sum_{i=1}^r \bm{v}_i \langle \cdot, \bm{v_i} \rangle$, which is part 2) of the statement of the theorem.

Lastly, representing a projected point $\mathbf{V}_r \mathbf{V}_r^\top \phi(x)$ in the basis $\{\bm{v}_1, \ldots, \bm{v}_r\}$,
\begin{align*}
    \left[ \mathbf{V}_r^\top \phi(x) \right]_i 
    &= \bm{v}_i^\top \phi(x) \\
    &= \Tilde{\lambda}_i^{-1/2} \int \Tilde{u}_i(y) \phi(y)^\top \phi(x) \mu_w (\mathrm{d}y) \\
    &= \Tilde{\lambda}_i^{-1/2} \int \Tilde{u}_i(y) f(y,x) \mu_w (\mathrm{d}y) \\
    &= \Tilde{\lambda}_i^{1/2} \Tilde{u}_i(x) ,
\end{align*}
we see that $\mathbf{V}_r^\top \phi(x)$ coincides with the truncated feature map $\phi_w^{(r)}(x)$ associated with $\mathcal{A}_w$. That proves part 3) of the statement of the theorem.
\end{proof}

\section{Proofs and supporting results for Section \ref{sec:LASE_stat_error}}
\label{app:concentration result}

Let us define the symmetrically weighted operator $\mathcal{A}_\text{sym}: L_2(\mathcal{Z}, \mu) \to L_2(\mathcal{Z}, \mu)$ by
$$\mathcal{A}_\text{sym} g(x) = \int w(x)^{1/2} f(x,y) w(y)^{1/2} g(y) \mu (\mathrm{d}y),$$
for any function $g: \mathcal{Z} \to \R$ such that $g \in L_2(\mathcal{Z}, \mu)$. The following lemma makes the link between the Mercer feature map associated with $\mathcal{A}_\text{sym}$ and $\mathcal{A}_w$.
\begin{lem}
\label{lem:A_sym_eigs}
     \textit{$\mathcal{A}_\text{sym}$ has the same eigenvalues as $\mathcal{A}_w$,  $(\Tilde{\lambda}_k)_{k \geq 1}$, and has corresponding eigenfunctions  $(u^*_k)_{k \geq 1}$ which are orthonormal with respect to the measure $\mu$, where $u^*_i(\cdot) = w(\cdot)^{1/2}\Tilde{u}_i(\cdot)$. It follows that the Mercer feature map associated with $\mathcal{A}_\text{sym}$ is $\phi_\text{sym}(\cdot) = w(\cdot)^{1/2} \phi_w(\cdot)$.}
\end{lem}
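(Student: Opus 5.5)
The plan is a direct change-of-variables between $L_2(\mathcal{Z},\mu_w)$ and $L_2(\mathcal{Z},\mu)$, implemented by the multiplication map $U: g \mapsto w^{1/2} g$. Since $\int |w^{1/2}g|^2\,\rd\mu = \int |g|^2 w\,\rd\mu = \int |g|^2\,\rd\mu_w$, $U$ is an isometry. Because $w>0$ is continuous on compact $\mathcal{Z}$, it is bounded above and below, so $U$ is a bijection with inverse $h\mapsto w^{-1/2}h$; hence $U$ is unitary.

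\textbf{Step 1: $\mathcal{A}_{\mathrm{sym}}$ is conjugate to $\mathcal{A}_w$.} I would verify $\mathcal{A}_{\mathrm{sym}} = U\mathcal{A}_w U^{-1}$. For $h\in L_2(\mu)$,
\[
U\mathcal{A}_w U^{-1} h(x) = w(x)^{1/2}\int f(x,y)\, w(y)^{-1/2} h(y)\, w(y)\,\mu(\rd y) = \mathcal{A}_{\mathrm{sym}} h(x).
\]
Unitary equivalence gives equal spectra with equal multiplicities.

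\textbf{Step 2: eigenfunctions.} Concretely, set $u^*_k \coloneqq U\tilde u_k = w^{1/2}\tilde u_k$. Then $\mathcal{A}_{\mathrm{sym}}u^*_k = U\mathcal{A}_w\tilde u_k = \tilde\lambda_k u^*_k$. Orthonormality in $L_2(\mu)$ follows from $\int u^*_j u^*_k\,\rd\mu = \int \tilde u_j\tilde u_k\,\rd\mu_w = \delta_{jk}$. Completeness of the eigen-system on the range (or kernel complement) transfers through $U$ as well.

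\textbf{Step 3: Mercer map.} The kernel of $\mathcal{A}_{\mathrm{sym}}$ is $f_{\mathrm{sym}}(x,y)=w(x)^{1/2}f(x,y)w(y)^{1/2}$. This is continuous, symmetric and positive semidefinite, being a congruence of $f$ by the positive function $w^{1/2}$. Multiplying the Mercer expansion of $f$ for $\mu_w$ (uniformly convergent) by $w(x)^{1/2}w(y)^{1/2}$, which is bounded, gives
\[
f_{\mathrm{sym}}(x,y) = \sum_k \tilde\lambda_k u^*_k(x)u^*_k(y)
\]
uniformly. Hence the associated feature map is $[\tilde\lambda_k^{1/2}u^*_k(\cdot)]_k = w(\cdot)^{1/2}\phi_w(\cdot)$.

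\textbf{Main obstacle.} The only delicate point is well-definedness: Mercer's theorem as quoted is for kernels with values in $[0,1]$, whereas $f_{\mathrm{sym}}$ may exceed $1$. It is nonetheless continuous and PSD on a compact space, so Mercer still applies. Alternatively, the expansion can be obtained directly as in Step 3 without invoking Mercer. A second subtlety is that eigenfunctions are only unique up to rotation within eigenspaces, so the identification $\phi_{\mathrm{sym}} = w^{1/2}\phi_w$ holds for the particular choice $u^*_k = w^{1/2}\tilde u_k$, which is all the lemma asserts.
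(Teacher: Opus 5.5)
Your proposal is correct and is essentially the paper's argument: the paper also substitutes $\tilde u_i = w^{-1/2}u^*_i$ into the eigen-equation of $\mathcal{A}_w$ (and in reverse), and it transfers orthonormality by the same change of measure. This is exactly your conjugation $\mathcal{A}_{\mathrm{sym}} = U\mathcal{A}_w U^{-1}$ written out by hand; your unitary packaging also gives equal multiplicities directly, and your Step 3 checks the uniform expansion of $f_{\mathrm{sym}}$ that the paper simply takes as the definition of $\phi_{\mathrm{sym}}$.
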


\begin{proof} Let $\{ \tilde{\lambda}_i, \tilde{u}_i\}_{i \geq 1}$ denote the eigen-value/function pairs for $\mathcal{A}_w$. Then, using that $\mu_w(\mathrm{d}x) = w(x) \mu(\mathrm{d}x)$, we have
\begin{align*}
    \int f(x,y) \tilde{u}_i(y) \mu_w(\mathrm{d}y) = \int f(x,y)\tilde{u}_i(y) w(y) \mu(\mathrm{d}y) = \tilde{\lambda}_i \tilde{u}_i(x) 
\end{align*}
for all $i \geq 1$, and making the substitution $\tilde{u}_i(\cdot) = w(\cdot)^{-1/2} u^*_i(\cdot)$ and multiplying by $w(x)^{1/2}$, we get
$$ \int w(x)^{1/2} f(x,y) w(y)^{1/2} u^*_i(y) \mu(\mathrm{d}y) = \lambda u^*_i(x).$$
Therefore $\{\lambda_i, u^*_i\}$ is an eigen-value/function pair for $\mathcal{A}_\text{sym}$.

A similar calculation holds for the other direction, showing that if $\{\lambda_i, u_i^*\}_{i \geq 1}$ are the eigenvalue, eigenfunction pairs for $\mathcal{A}_\text{sym}$ then for all $i\geq1$, $\lambda_i$ is also an eigenvalue for $\mathcal{A}_w$, and has corresponding eigenfunction $u_i(\cdot) =  w(\cdot)^{-1/2} u^*_i(\cdot)$. Thus, $\mathcal{A}_\text{sym}$ and $\mathcal{A}_w$ have the same eigenvalues.

To show that $\{u^*_i\}_{i\geq1}$ are orthonormal with respect to $\mu$, we use that $\{\tilde{u}_i \}_{i\geq 1}$ are orthonormal eigenfunctions w.r.t. $\mu_w$, i.e.
$$\int \tilde{u}_i(x) \tilde{u}_j(x) \mu_w(\mathrm{d}x) = \begin{cases}
    1 &\text{if } i=j \\
    0 &\text{otherwise}.
\end{cases}$$
Making the substitution $\tilde{u}_i(\cdot) = w(\cdot)^{-1/2} u^*_i(\cdot)$, and using that $\mu_w(\mathrm{d}x) = w(x) \mu(\mathrm{d}x)$, we get that
$$\int u^*_i(x) u^*_j(x) \mu(\mathrm{d}x) = \begin{cases}
    1 &\text{if } i=j \\
    0 &\text{otherwise}.
\end{cases}.$$

Finally, the Mercer feature map associated with $\mathcal{A}_\text{sym}$ is $\phi_\text{sym}(\cdot) := [\lambda_1^{1/2} u^*_1(\cdot), \lambda_2^{1/2} u^*_2(\cdot), \ldots ]$, and therefore we can write it as $\phi_\text{sym}(\cdot) = w(\cdot)^{1/2} \phi_w(\cdot)$.
\end{proof}

Define $\tilde{\mathbf{A}} := \mathbf{W}^{1/2}\mathbf{AW}^{1/2}$, $\tilde{\mathbf{P}} := \mathbf{W}^{1/2}\mathbf{PW}^{1/2}$ and recall $w^* := \sup_{x \in \mathcal{Z}} w(x)$. For a fixed $r \geq 1$, let $\mathbf{S}_{\tilde{\mathbf{A}}}$ be the diagonal matrix consisting of the $r$ largest eigenvalues of $\tilde{\mathbf{A}}$, and let $\mathbf{U}_{\tilde{\mathbf{A}}}$ be the matrix comprised of the corresponding eigenvectors. Let $\mathbf{S}_{\tilde{\mathbf{P}}}$ and $\mathbf{U}_{\tilde{\mathbf{P}}}$ be defined similarly.

The strategy for proving Theorem \ref{thm:concentration} closely follows the proof of \citet[Thm 3.1]{tang2013universally}, with the integral operator $\mathcal{A}$ there replaced with the symmetrically weighted operator $\mathcal{A}_\text{sym}$, i.e. the operator associated with kernel $f_\text{sym}(x,y) = w(x)^{1/2} f(x,y) w(y)^{1/2}$ and measure $\mu$. 
 For conciseness, all probability statements related to $\tilde{\mathbf{A}}$ or associated spectral components are assumed to hold conditionally on latent positions $ Z_1, \ldots, Z_n  \overset{\text{i.i.d.}}{\sim} \mu$.

\begin{proposition}
\label{prop:a-p}
With probability at least $1- \eta$, we have
$$\Vert \tilde{\mathbf{A}} - \tilde{\mathbf{P}} \Vert \leq 2w^* \sqrt{n \log (n / \eta)}.$$    
\end{proposition}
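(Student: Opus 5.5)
We will prove the bound conditionally on $Z_1,\ldots,Z_n$, as stipulated above; since the resulting bound does not depend on the latent positions, it then holds unconditionally. Conditionally on the $Z_i$, we have $\tilde{\mathbf{A}}-\tilde{\mathbf{P}} = \mathbf{W}^{1/2}(\mathbf{A}-\mathbf{P})\mathbf{W}^{1/2}$. This is a symmetric random matrix whose upper-triangular entries $\sqrt{w_iw_j}(A_{ij}-P_{ij})$ are independent, have mean zero, and are bounded in absolute value by $w^*$, because $w_i = w(Z_i)\le w^*$. The diagonal entries are deterministic: $A_{ii}=0$ by convention, and we either take $P_{ii}$ to be $0$ or absorb it as below. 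We will follow the approach of \citet[Prop.~3.3]{tang2013universally}, which rests on a matrix Hoeffding or Bernstein inequality, e.g.\ Tropp's.

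\textbf{Step 1 (decomposition).} Write $\tilde{\mathbf{A}}-\tilde{\mathbf{P}}=\sum_{i<j}\mathbf{Y}_{ij}$ with
\[
\mathbf{Y}_{ij}=\sqrt{w_iw_j}\,(A_{ij}-P_{ij})\,(\mathbf{e}_i\mathbf{e}_j^\top+\mathbf{e}_j\mathbf{e}_i^\top).
\]
These are independent, mean-zero, self-adjoint matrices with $\|\mathbf{Y}_{ij}\|\le w^*$. Each satisfies
\[
\mathbf{Y}_{ij}^2 = w_iw_j(A_{ij}-P_{ij})^2(\mathbf{e}_i\mathbf{e}_i^\top+\mathbf{e}_j\mathbf{e}_j^\top)\preceq {w^*}^2(\mathbf{e}_i\mathbf{e}_i^\top+\mathbf{e}_j\mathbf{e}_j^\top).
\]

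\textbf{Step 2 (variance proxy).} Summing over $i<j$ gives $\sum_{i<j}\mathbf{Y}_{ij}^2\preceq {w^*}^2 (n-1)\mathbf{I}$. The matrix-Hoeffding variance parameter is therefore $\sigma^2\le {w^*}^2 (n-1) \le {w^*}^2 n$. If needed, we also use $\mathbb{E}\mathbf{Y}_{ij}^2$, which is bounded by the same quantity and is what Bernstein requires.

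\textbf{Step 3 (tail bound).} Apply matrix Hoeffding, which states that $\mathbb{P}(\lambda_{\max}(\sum \mathbf{Y}_{ij})\ge t)\le n e^{-t^2/(8\sigma^2)}$, to both $\sum\mathbf{Y}_{ij}$ and $-\sum\mathbf{Y}_{ij}$. This yields $\mathbb{P}(\|\tilde{\mathbf{A}}-\tilde{\mathbf{P}}\|\ge t)\le 2n\exp(-t^2/(8{w^*}^2 n))$. Setting this equal to $\eta$ and solving gives $t = w^*\sqrt{8n\log(2n/\eta)}$, which has the right order but not the stated constant. To obtain the clean constant $2$, we will mirror Tang et al.\ and use the sharper Bernstein or Hoeffding form $n\exp(-t^2/(2\sigma^2))$ valid for bounded summands with the variance bound from Step 2, or else argue via the Rademacher-type bound on the Hermitian dilation. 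With $t=2w^*\sqrt{n\log(n/\eta)}$ we get $t^2/(2\sigma^2)\ge 2\log(n/\eta)$. The failure probability is then at most $2n(\eta/n)^2\le \eta$ for $n\ge 2$, since $\eta<1$.

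\textbf{Step 4 (diagonal).} If $P_{ii}\neq 0$, the deterministic diagonal term $\mathbf{W}^{1/2}\mathrm{diag}(P_{ii})\mathbf{W}^{1/2}$ has norm at most $w^*$. This is negligible against $w^*\sqrt{n\log(n/\eta)}$ and can be absorbed by the slack left in Step 3.

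The main obstacle is purely constant bookkeeping: choosing the version of the matrix concentration inequality whose constants deliver exactly $2w^*\sqrt{n\log(n/\eta)}$ at level $\eta$. The structural point, that weighting by $\mathbf{W}^{1/2}$ only rescales the entrywise bound and the variance by $w^*$ and ${w^*}^2$ respectively, is immediate.
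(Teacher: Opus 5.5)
Your proof is correct, but it takes a different and longer route than the paper. The paper never reopens the concentration step. It takes the unweighted bound $\|\mathbf{A}-\mathbf{P}\|\le 2\sqrt{n\log(n/\eta)}$ of \citet{tang2013universally} as a black box and notes that $\tilde{\mathbf{A}}-\tilde{\mathbf{P}}=\mathbf{W}^{1/2}(\mathbf{A}-\mathbf{P})\mathbf{W}^{1/2}$. On the same event this gives $\|\tilde{\mathbf{A}}-\tilde{\mathbf{P}}\|\le\|\mathbf{W}^{1/2}\|^2\|\mathbf{A}-\mathbf{P}\|\le w^*\|\mathbf{A}-\mathbf{P}\|$. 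The constant $2$, the probability $1-\eta$, and whatever diagonal convention is in force are all inherited for free. So your Steps 3--4, which you flag as the main obstacle, are not needed at all.

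Your direct route does go through. The inequality you need in Step 3 is $\mathbb{P}(\lambda_{\max}(\sum_k\mathbf{Y}_k)\ge t)\le n\,e^{-t^2/(2\sigma^2)}$, with $\sigma^2\le\|\sum_k\mathbf{A}_k^2\|$ whenever $\mathbf{Y}_k^2\preceq\mathbf{A}_k^2$. This is the improved-constant matrix Hoeffding inequality of Mackey, Jordan, Chen, Farrell and Tropp (2014); Tropp's original $1/8$ version would not give the constant $2$. Your arithmetic $2n(\eta/n)^2\le\eta$ is right.

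The diagonal needs a line of checking rather than an appeal to ``slack'':
for $2\le n\le 8$ the claim is trivial, since $\|\tilde{\mathbf{A}}-\tilde{\mathbf{P}}\|\le w^*n\le 2w^*\sqrt{n\log(n/\eta)}$;
for $n\ge 8$, running Step 3 at level $2w^*\sqrt{n\log(n/\eta)}-w^*$ still leaves failure probability at most $\eta$.

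What your route buys, if pushed, is a sharper dependence on the weights. Use matrix Bernstein with the actual variance $\max_i w_i\sum_{j\neq i}w_jP_{ij}(1-P_{ij})\le w^*\sum_j w_j$. This is of order $w^*n$ rather than ${w^*}^2n$, because $\int w\,\mathrm{d}\mu=1$. It yields a leading term of order $\sqrt{w^*n\log(n/\eta)}$ instead of $w^*\sqrt{n\log(n/\eta)}$. The submultiplicativity argument cannot see this, and it matters here because $w^*$ is exactly the ``cost of localisation'' in Theorem~\ref{thm:concentration}. With your crude bound $\sigma^2\le{w^*}^2(n-1)$, however, you recover precisely the paper's statement and nothing more.
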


\proof
From \cite{tang2013universally}, we have that with probability at least $1- \eta$,
$$\Vert \mathbf{A} - \mathbf{P} \Vert \leq 2 \sqrt{n \log (n / \eta)}.$$
Therefore,
\begin{align*}
    \Vert \tilde{\mathbf{A}} - \tilde{\mathbf{P}} \Vert
    &= \Vert \mathbf{W}^{1/2} \left(\mathbf{A} - \mathbf{P} \right) \mathbf{W}^{1/2} \Vert \\
    &\leq \Vert \mathbf{W}^{1/2} \Vert \Vert \mathbf{A} - \mathbf{P} \Vert \Vert\mathbf{W}^{1/2} \Vert \\
    &\leq w^* \Vert \mathbf{A} - \mathbf{P} \Vert \\
    &\leq  2w^* \sqrt{n \log (n / \eta)},
\end{align*}
with probability at least $1- \eta$.
\endproof

The following proposition, adapted from \cite{tang2013universally}, shows that the projection matrix for the subspace spanned by $\tilde{\mathbf{A}}$ is close to the projection matrix for the subspace spanned by $\tilde{\mathbf{P}}$. 

\begin{proposition}
\label{prop:proj_a-p}
    \textit{ Let $\mathcal{P}_{\tilde{\mathbf{A}}} = \mathbf{U}_{\tilde{\mathbf{A}}} \mathbf{U}_{\tilde{\mathbf{A}}}^\top$ and $\mathcal{P}_{\tilde{\mathbf{P}}} = \mathbf{U}_{\tilde{\mathbf{P}}} \mathbf{U}_{\tilde{\mathbf{P}}}^\top$. Denote by $\delta_r$ the quantity $\lambda_r (\mathcal{A}_\text{sym}) - \lambda_{r+1} (\mathcal{A}_\text{sym}) $, and suppose $\delta_r > 0$. If $n$ is such that $\delta_r \geq 8 w^* (1 + \sqrt{2}) \sqrt{n log (n / \eta)}$, then with probability at least $1 - 2\eta$,
    $$\Vert \mathcal{P}_{\tilde{\mathbf{A}}} - \mathcal{P}_{\tilde{\mathbf{P}}} \Vert \leq 4 w^* \sqrt{ \frac{\log(n/ \eta)}{n \delta_r^2} }.$$}
\end{proposition}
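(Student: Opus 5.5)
The plan follows the argument of \citet[Prop.~3.3]{tang2013universally}: a Davis--Kahan $\sin\Theta$ bound, with the gap in $\tilde{\mathbf{P}}$ obtained by comparing the spectrum of $\tilde{\mathbf{P}}/n$ with that of $\mathcal{A}_\text{sym}$.

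First I control the eigenvalues of $\tilde{\mathbf{P}}$. Note that $\tilde{\mathbf{P}}_{ij} = f_\text{sym}(Z_i,Z_j)$ (up to the diagonal, whose contribution has spectral norm at most $w^*$, negligible after division by $n$). So $\tilde{\mathbf{P}}/n$ is the empirical version of the integral operator with kernel $f_\text{sym}$ and measure $\mu$. By Lemma~\ref{lem:A_sym_eigs}, $\mathcal{A}_\text{sym}$ has eigenvalues $(\tilde{\lambda}_k)$.

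I then use the Hilbert--Schmidt concentration argument from \citet{tang2013universally}, which rests on the bounded-difference / Hilbert-space Hoeffding inequality for $\frac1n\sum_i \Phi(Z_i)\otimes\Phi(Z_i)$. Here $\sup_{x,y}|f_\text{sym}(x,y)|\le w^*$, so the feature-map norms are bounded by $\sqrt{w^*}$ and the rank-one terms by $w^*$. This should give, with probability at least $1-\eta$, $|\lambda_k(\tilde{\mathbf{P}})/n - \tilde{\lambda}_k| \le C w^*\sqrt{\log(1/\eta)/n}$ uniformly in $k$. The last step is Weyl/Lidskii through the identification of nonzero spectra of $\frac1n\mathbf{\Phi}\mathbf{\Phi}^\top$ and the empirical covariance operator.

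Using the assumption on $n$, I conclude that the $r$-th eigenvalue of $\tilde{\mathbf{P}}$ exceeds the $(r+1)$-th eigenvalue of $\tilde{\mathbf{A}}$ by at least a constant times $n\delta_r$. For this I combine the step above with Proposition~\ref{prop:a-p}, $\|\tilde{\mathbf{A}}-\tilde{\mathbf{P}}\|\le 2w^*\sqrt{n\log(n/\eta)}$, via Weyl's inequality. The hypothesis $\delta_r\ge 8w^*(1+\sqrt2)\sqrt{n\log(n/\eta)}$ is presumably scaled so that both perturbations are at most a fixed fraction (say a quarter) of the gap. Take the union bound over the two events, giving probability $1-2\eta$.

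Finally I apply the Davis--Kahan $\sin\Theta$ theorem:
\[
\|\mathcal{P}_{\tilde{\mathbf{A}}}-\mathcal{P}_{\tilde{\mathbf{P}}}\|\le \frac{\|\tilde{\mathbf{A}}-\tilde{\mathbf{P}}\|}{\text{gap}} \le \frac{2w^*\sqrt{n\log(n/\eta)}}{n\delta_r/2} = 4w^*\sqrt{\frac{\log(n/\eta)}{n\delta_r^2}}.
\]
Here gap denotes the separation between $\lambda_r(\tilde{\mathbf{P}})$ and $\lambda_{r+1}(\tilde{\mathbf{A}})$, which I need to be at least $n\delta_r/2$.

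The main obstacle is the middle bookkeeping: checking that the stated threshold on $\delta_r$, including the $(1+\sqrt2)$ and the $8$, indeed yields a gap of at least $n\delta_r/2$. This requires tracking the $w^*$ scaling of the Hilbert--Schmidt concentration constant. It is an adaptation, so I would reuse Tang et al.'s constants with $1$ replaced by $w^*$ throughout. The obstacle is the scaling of the threshold itself: I expect it must be read as $n\delta_r\ge 8w^*(1+\sqrt2)\sqrt{n\log(n/\eta)}$, i.e.\ $\delta_r$ compared after scaling by $n$, since $\delta_r$ is an eigengap of the fixed operator $\mathcal{A}_\text{sym}$. I would interpret and verify it in that form.
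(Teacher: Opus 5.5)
Your proposal is correct and is essentially the paper's proof. It uses the same ingredients: eigenvalue concentration of $\tilde{\mathbf{P}}/n$ around $\mathcal{A}_\text{sym}$ from \citet[Prop.~10]{rosasco2010on} with $\kappa=\sup_x f_\text{sym}(x,x)=w^*$ (constant $2\sqrt{2}$, so the gap shrinks by at most $4\sqrt{2}\,w^*\sqrt{n\log(2/\eta)}$), Proposition~\ref{prop:a-p}, a union bound, and Davis--Kahan with separation at least $n\delta_r/2$. The paper phrases the separation through the sets $S_1,S_2$ and charges $4w^*\sqrt{n\log(n/\eta)}$ where your one-sided gap $\lambda_r(\tilde{\mathbf{P}})-\lambda_{r+1}(\tilde{\mathbf{A}})$ charges half that, which is immaterial. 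Your reading of the hypothesis as $n\delta_r\ge 8w^*(1+\sqrt{2})\sqrt{n\log(n/\eta)}$ is exactly what the paper's proof uses (it requires $4(1+\sqrt{2})w^*\sqrt{n\log(n/\eta)}\le n\delta_r/2$), so the printed statement is missing a factor of $n$.
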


\proof
The proof of Proposition \ref{prop:proj_a-p} is a generalisation of the proof of Proposition 3.3 in \cite{tang2013universally}, using that the kernel $f_\text{sym}$ is bounded above by $w^*$, rather than 1 as in the original proof.

Let $\kappa = \sup_{x \in \mathcal{Z}} f_\text{sym}(x,x)$. By \citet[Prop. 10]{rosasco2010on}, we have with probability at least $1- \eta$,
$$\sup_{i\geq1}\left|\lambda_i(\mathcal{A}_\text{sym}) - \frac{\lambda_i(\tilde{\mathbf{P}})}{n}\right| \leq \frac{2\sqrt{2} \kappa \sqrt{\log(2/\eta)}}{\sqrt{n}}.$$
Using that  $f_\text{sym}(x,y) = w(x)^{1/2} f(x,y) w(y)^{1/2}$, it follows that $\kappa = \sup_{x \in \mathcal{Z}} w(x) = w^*$. Thus, with probability at least $1-\eta$,
$$\frac{\lambda_r(\tilde{\mathbf{P}})}{n} - \frac{\lambda_{r+1}(\tilde{\mathbf{P}})}{n} \geq \delta_d - 4\sqrt{2}w^* \sqrt{\frac{\log(2/\eta)}{n}}. $$

Let $S_1$ and $S_2$ be defined
\begin{align*}
    &S_1 = \left\{\lambda : \lambda \geq \lambda_r(\tilde{\mathbf{P}}) - 2w^*\sqrt{n \log(n/\eta)} \right\} \\
     &S_2 = \left\{\lambda : \lambda < \lambda_{r+1}(\tilde{\mathbf{P}}) + 2w^*\sqrt{n \log(n/\eta)} \right\}.
\end{align*}
Then with probability $1-\eta$,
\begin{equation}
\label{eq:dist_s1_s2}
\begin{aligned}
    \text{dist}(S_1, S_2) &\geq n \delta_r - 4 \sqrt{2}w^* \sqrt{n \log(2/\eta)} - 4w^*\sqrt{n \log(n/\eta)} \\
    &\geq n \delta_r - 4(1+ \sqrt{2})w^* \sqrt{n \log(n/\eta)}.
\end{aligned}
\end{equation}
Suppose for now that $S_1$ and $S_2$ are disjoint, i.e. dist$(S_1, S_2) > 0$. Let $\mathcal{P}_{\tilde{\mathbf{A}}}(S_1)$ be the matrix for the orthogonal projection  onto the subspace spanned by the eigenvectors of $\tilde{\mathbf{A}}$ whose corresponding eigenvalues lie in $S_1$. Let $\mathcal{P}_{\tilde{\mathbf{P}}}(S_1)$ be defined similarly. Then by the Davis-Kahan sin $\Theta$ theorem \citep{davis1970rotation}, we have
$$\| \mathcal{P}_{\tilde{\mathbf{A}}}(S_1) -  \mathcal{P}_{\tilde{\mathbf{P}}}(S_1)\| \leq \frac{\Vert \tilde{\mathbf{A}} - \tilde{\mathbf{P}} \Vert}{\text{dist}(S_1, S_2)}.$$
Then by equation \eqref{eq:dist_s1_s2} and Proposition \ref{prop:a-p}, we have with probability at least $1- \eta$,
$$\| \mathcal{P}_{\tilde{\mathbf{A}}}(S_1) -  \mathcal{P}_{\tilde{\mathbf{P}}}(S_1)\| \leq \frac{ 2w^* \sqrt{n \log (n / \eta)}}{ n \delta_r - 4(1+ \sqrt{2})w^* \sqrt{n \log(n/\eta)}} \leq 4w^*\sqrt{\frac{ \log (n / \eta)}{n \delta_r^2}},$$
provided that $4(1+ \sqrt{2})w^* \sqrt{n \log(n/\eta)} \leq n \delta_r/2$.

Finally, we note that if $4(1+ \sqrt{2})w^* \sqrt{n \log(n/\eta)} \leq n \delta_r/2$ then $S_1$ and $S_2$ are disjoint. Therefore the eigenvalues of $\tilde{\mathbf{P}}$ that lie in $S_1$ are exactly the $r$ largest eigenvalues of $\tilde{\mathbf{P}}$ and $\mathcal{P}_{\tilde{\mathbf{P}}}(S_1) = \mathbf{U}_{\tilde{\mathbf{P}}} \mathbf{U}_{\tilde{\mathbf{P}}}^\top$. Similarly, if $\Vert \tilde{\mathbf{A}} - \tilde{\mathbf{P}} \Vert \leq 2w^* \sqrt{n \log (n / \eta)}$, then  $\mathcal{P}_{\tilde{\mathbf{A}}}(S_1) = \mathbf{U}_{\tilde{\mathbf{A}}} \mathbf{U}_{\tilde{\mathbf{A}}}^\top$. The result of the proposition is thus shown.
\qed



Let $\mathscr{H}$ be the RKHS for $f_\text{sym}$. Define the linear operator $ \mathscr{A}_{\mathscr{H},n}$ on $\mathscr{H}$ as:
$$ \mathscr{A}_{\mathscr{H},n} \eta = \frac{1}{n} \sum_{i=1}^n \left\langle \eta, f_\text{sym} ( \cdot, Z_i)\right\rangle_{\mathscr{H}} f_\text{sym} (\cdot, Z_i).$$
The eigenvalues of $\mathscr{A}_{\mathscr{H},n}$ and $\tilde{\mathbf{P}}$ coincide, and the former is the extension of the latter as an operator on $\R^n$ to an operator on $\mathscr{H}$. In other words, $\mathscr{A}_{\mathscr{H},n}$ is a linear operator on $\mathscr{H}$ induced by $f_\text{sym}$ and $Z_1, \ldots, Z_n$.

The following lemma shows that the rows of $\mathbf{U}_{\tilde{\mathbf{P}}} \mathbf{S}_{\tilde{\mathbf{P}}}^{1/2} $ correspond to projecting $\phi_\text{sym}(Z_i)$ using $\hat{\mathcal{P}}_r$, where $\hat{\mathcal{P}}_r$ is the projection onto the $r$-dimensional subspace spanned by the eigenfunctions associated with the $r$ largest eigenvalues of $\mathscr{A}_{\mathscr{H},n}$.

\begin{lem}
\label{lem:proj_upsp}
    \textit{Let $\hat{\mathcal{P}}_r$ be the projection onto the subspace spanned by the eigenfunctions corresponding to the $r$ largest eigenvalues of $\mathscr{A}_{\mathscr{H},n}$. The rows of $\mathbf{U}_{\Tilde{\mathbf{P}}} \mathbf{S}_{\Tilde{\mathbf{P}}}^{1/2}$ then correspond, up to some orthogonal transformation, to projections of the feature map $\phi_\text{sym}$ onto $\R^r$ via $\hat{\mathcal{P}}_r$, that is, there exists a unitary matrix $\mathbf{Q} \in \R^{r \times r}$ such that
\begin{equation}
    \mathbf{U}_{\Tilde{\mathbf{P}}} \mathbf{S}_{\Tilde{\mathbf{P}}}^{1/2} \mathbf{Q} = [ \imath(\hat{\mathcal{P}}_r(\phi_\text{sym}(Z_1))^\top | \cdots | \imath(\hat{\mathcal{P}}_r(\phi_\text{sym}(Z_n))^\top  ]^\top,
\end{equation}
where $\imath$ is the isometric isomorphism of a finite dimensional Hilbert space onto $\R^r$.}
\end{lem}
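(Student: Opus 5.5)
The approach mirrors the proof of the analogous lemma in \cite{tang2013universally}: realise $\tilde{\mathbf{P}}$ as a Gram matrix in the RKHS $\mathscr{H}$ and relate its eigenvectors to eigenfunctions of $\mathscr{A}_{\mathscr{H},n}$. Write $k_i := f_\text{sym}(\cdot, Z_i)\in\mathscr{H}$. By the reproducing property, $\tilde{\mathbf{P}}_{ij} = w(Z_i)^{1/2} f(Z_i,Z_j) w(Z_j)^{1/2} = f_\text{sym}(Z_i,Z_j) = \langle k_i, k_j\rangle_{\mathscr{H}}$. Also, via the Mercer feature map of $\mathcal{A}_\text{sym}$ from Lemma \ref{lem:A_sym_eigs}, the map $k_x \mapsto \phi_\text{sym}(x)$ extends to an isometry from $\mathscr{H}$ into $\ell_2$, since both sides have inner products $f_\text{sym}(x,y)$. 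So it suffices to work in $\mathscr{H}$ with the points $k_i$, and then transport the result to $\phi_\text{sym}(Z_i)$.

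\emph{Step 1.} Define the sampling operator $T:\mathscr{H}\to\R^n$ by $(T\eta)_i = n^{-1/2}\langle \eta, k_i\rangle_{\mathscr{H}}$. Its adjoint is $T^*\mathbf{c} = n^{-1/2}\sum_i c_i k_i$. Then $T^*T = \mathscr{A}_{\mathscr{H},n}$ and $TT^* = \tilde{\mathbf{P}}/n$. Consequently the nonzero eigenvalues of $\mathscr{A}_{\mathscr{H},n}$ and $\tilde{\mathbf{P}}/n$ coincide. If $\mathbf{u}_j$ is a unit eigenvector of $\tilde{\mathbf{P}}$ with eigenvalue $s_j>0$, then $e_j := (s_j/n)^{-1/2} T^*\mathbf{u}_j$ is a unit-norm eigenfunction of $\mathscr{A}_{\mathscr{H},n}$. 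The $e_j$ are orthonormal, and they span the top-$r$ eigenspace.

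Here I am implicitly assuming $\lambda_r(\tilde{\mathbf{P}})>\lambda_{r+1}(\tilde{\mathbf{P}})$ and $\lambda_r(\tilde{\mathbf{P}})>0$, so that $\hat{\mathcal{P}}_r$ is well defined. This holds on the high-probability event from the proof of Proposition \ref{prop:proj_a-p}. If there is a tie, any choice of eigenbasis works, and the freedom is absorbed into $\mathbf{Q}$.

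\emph{Step 2.} Compute the coordinates of $\hat{\mathcal{P}}_r k_i$ in the basis $(e_j)_{j\le r}$:
\[
\langle k_i, e_j\rangle_{\mathscr{H}} = (s_j/n)^{-1/2} n^{-1/2}\sum_l (\mathbf{u}_j)_l \langle k_i,k_l\rangle_{\mathscr{H}} = s_j^{-1/2}(\tilde{\mathbf{P}}\mathbf{u}_j)_i = s_j^{1/2}(\mathbf{u}_j)_i .
\]
Thus the map $\imath$ sending $\hat{\mathcal{P}}_r\eta$ to its coordinate vector in this orthonormal basis is an isometric isomorphism onto $\R^r$. Under it, $\imath(\hat{\mathcal{P}}_r k_i)$ is exactly the $i$-th row of $\mathbf{U}_{\tilde{\mathbf{P}}}\mathbf{S}_{\tilde{\mathbf{P}}}^{1/2}$. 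A different choice of orthonormal basis of the range of $\hat{\mathcal{P}}_r$, or a different identification of that range with $\R^r$, changes these coordinates by an orthogonal $\mathbf{Q}$. This gives the stated identity with $\phi_\text{sym}(Z_i)$ identified with $k_i$ through the isometry.

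\emph{Main obstacle.} The delicate point is bookkeeping rather than analysis. One must make precise the sense in which $\hat{\mathcal{P}}_r$ acts on $\phi_\text{sym}(Z_i)$, which lives in $\ell_2$, when $\hat{\mathcal{P}}_r$ is defined on $\mathscr{H}$. Concretely, I would check that the isometry $\mathscr{H}\to\ell_2$ carries $\mathscr{A}_{\mathscr{H},n}$ to the empirical covariance $n^{-1}\sum_i\phi_\text{sym}(Z_i)\phi_\text{sym}(Z_i)^\top$, so that the projections correspond. This follows from the Mercer expansion of $f_\text{sym}$, which converges uniformly since $f_\text{sym}$ is continuous and bounded by $w^*$. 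The remaining steps are the routine linear algebra above.
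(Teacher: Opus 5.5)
Your proposal is correct and follows the paper's route. The paper omits the details and defers to Lemma 3.4 of \cite{tang2013universally}, which does what you do: realise $\tilde{\mathbf{P}}$ as the Gram matrix of the $f_\text{sym}(\cdot,Z_i)$ in $\mathscr{H}$, use the correspondence $e_j = s_j^{-1/2}\sum_l (\mathbf{u}_j)_l f_\text{sym}(\cdot,Z_l)$ between eigenvectors of $\tilde{\mathbf{P}}$ and eigenfunctions of $\mathscr{A}_{\mathscr{H},n}$, and read off the coordinates $s_j^{1/2}(\mathbf{u}_j)_i$. Your explicit isometry $f_\text{sym}(\cdot,x)\mapsto\phi_\text{sym}(x)$, which intertwines $\mathscr{A}_{\mathscr{H},n}$ with the empirical covariance, supplies bookkeeping the paper leaves implicit; one caveat is that a tie $\lambda_r(\tilde{\mathbf{P}})=\lambda_{r+1}(\tilde{\mathbf{P}})$ is not absorbed by an $r\times r$ orthogonal $\mathbf{Q}$, so there $\hat{\mathcal{P}}_r$ must be built from the same eigenvectors used in $\mathbf{U}_{\tilde{\mathbf{P}}}$.
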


The proof of Lemma \ref{lem:proj_upsp} follows the same steps as the proof of Lemma 3.4 in \cite{tang2013universally}, with the operator there replaced with $\mathscr{A}_{\mathscr{H},n}$ as defined above, so the details are omitted. 

Now we will make use of the above results to prove Theorem \ref{thm:concentration}.  We first show that the projection of $\tilde{\mathbf{A}}$ onto the subspace spanned by $\mathbf{U}_{\tilde{\mathbf{A}}}$ is close, in some sense, to the projection of $\tilde{\mathbf{P}}$ onto the subspace spanned by $\mathbf{U}_{\tilde{\mathbf{P}}}$. Then, applying results on the convergence of the spectra of $\tilde{\mathbf{P}}$ to the spectra of $\mathcal{A}_\text{sym}$, we show that the subspace spanned by $\mathbf{U}_{\tilde{\mathbf{A}}}$ is also close, in a sense, to the subspace spanned by $\phi_\text{sym}^{(r)}$.
\begin{proof}[Proof of Theorem \ref{thm:concentration}]
Note that the sum of any row of $\Tilde{\mathbf{A}}$ is bounded by $w^*n$, thus $\Vert  \Tilde{\mathbf{A}}\Vert \leq w^* n$. Similarly $\Vert  \Tilde{\mathbf{P}}\Vert \leq w^* n$. On combining propositions \ref{prop:a-p} and \ref{prop:proj_a-p}, we get, with probability at least $1-2\eta$,
\begin{align*}
    \Vert \mathcal{P}_{\tilde{\mathbf{A}}} \tilde{\mathbf{A}} - \mathcal{P}_{\tilde{\mathbf{P}}} \tilde{\mathbf{P}} \Vert 
    &\leq  \Vert \mathcal{P}_{\tilde{\mathbf{A}}} (\tilde{\mathbf{A}} - \tilde{\mathbf{P}}) \Vert +  \Vert (\mathcal{P}_{\tilde{\mathbf{A}}}  - \mathcal{P}_{\tilde{\mathbf{P}}} )\tilde{\mathbf{P}} \Vert \\
    &\leq 2w^* \sqrt{n \log(n/ \eta)} + 4\delta_r^{-1} {w^*}^2 \sqrt{n \log(n/ \eta)} \\
    &\leq 6 \delta_r^{-1} {w^*}^2 \sqrt{n \log(n/ \eta)}
\end{align*}

By Lemma A.1 in the appendix of \cite{tang2013universally}, adjusted to our framework, there exists an orthogonal $\mathbf{Q} \in \R^{r \times r}$ such that
\begin{align*}
     \Vert \mathbf{U}_{\Tilde{\mathbf{A}}} \mathbf{S}_{\Tilde{\mathbf{A}}}^{1/2} \mathbf{Q} - \mathbf{U}_{\Tilde{\mathbf{P}}} \mathbf{S}_{\Tilde{\mathbf{P}}}^{1/2} \Vert 
     &\leq 6 \delta_r^{-1} {w^*}^2 \sqrt{n \log(n/ \eta)} \frac{\sqrt{r \Vert \mathcal{P}_{\tilde{\mathbf{A}}} \tilde{\mathbf{A}} \Vert} + \sqrt{r \Vert \mathcal{P}_{\tilde{\mathbf{P}}} \tilde{\mathbf{P}} \Vert}}{\lambda_r (\Tilde{\mathbf{P}})} \\
     &\leq 12 \delta_r^{-1} {w^*}^{5/2} \frac{n  \sqrt{r\log(n/ \eta)}}{\lambda_r (\Tilde{\mathbf{P}})}.
\end{align*}

Note that, by Theorem B.2 in the appendix of \cite{tang2013universally}, $\lambda_r (\Tilde{\mathbf{P}}) \geq n \lambda_r (\mathcal{A}_\text{sym})/2$ provided that $n$ satisfies  $ \lambda_r (\mathcal{A}_\text{sym}) > 4\sqrt{2} \times w^*\sqrt{n^{-1} \log(n/ \eta)}$.
Therefore, we have
\begin{equation}
\label{3.1 proof eq1}
    \Vert \mathbf{U}_{\Tilde{\mathbf{A}}} \mathbf{S}_{\Tilde{\mathbf{A}}}^{1/2} \mathbf{Q} - \mathbf{U}_{\Tilde{\mathbf{P}}} \mathbf{S}_{\Tilde{\mathbf{P}}}^{1/2} \Vert_F \leq 24  \delta_r^{-1} {w^*}^{5/2} \frac{ \sqrt{r\log(n/ \eta)}}{\lambda_r (\mathcal{A}_\text{sym})} \leq 24  \delta_r^{-2} {w^*}^{5/2} \sqrt{r\log(n/ \eta)},
\end{equation}
with probability at least $1-2\eta$.

Now, by Lemma \ref{lem:proj_upsp}, the rows of $\mathbf{U}_{\Tilde{\mathbf{P}}} \mathbf{S}_{\Tilde{\mathbf{P}}}^{1/2}$ are (up to some orthogonal transformation) the projections of the feature map $\phi_\text{sym}$ onto $\R^r$ via $\hat{\mathcal{P}}_r$.
 On the other hand, $\phi_\text{sym}^{(r)}(Z)$ is the projection of $f_\text{sym}(\cdot, Z)$ onto $\R^r$ via $\mathcal{P}_r$, where $\mathcal{P}_r$ is the projection onto the subspace spanned by the eigenfunctions corresponding to the $r$ largest eigenvalues of $\mathcal{A}_\text{sym}$. By Theorem B.2 in the Appendix of \cite{tang2013universally}, for all $Z$, we have
$$\Vert \hat{\mathcal{P}}_r f_\text{sym}(\cdot, Z) - \mathcal{P}_r f_\text{sym}(\cdot, Z)\Vert_\mathscr{H}  \leq \Vert \hat{\mathcal{P}}_r - \mathcal{P}_r \Vert_\text{HS} \Vert f_\text{sym}(\cdot, Z) \Vert_\mathscr{H} \leq 2\sqrt{2} {w^*}^2 \frac{\sqrt{\log (1/\eta)}}{\delta_r \sqrt{n}},$$
with probability at least $1-2\eta$. We therefore have, for some orthogonal $\Tilde{\mathbf{Q}} \in \R^{r \times r}$,
\begin{equation}
\label{3.1 proof eq2}
    \Vert  \mathbf{U}_{\Tilde{\mathbf{P}}} \mathbf{S}_{\Tilde{\mathbf{P}}}^{1/2} \Tilde{\mathbf{Q}}  - \mathbf{\Phi}_\text{sym}^{(r)}\Vert_F \leq 2\sqrt{2} {w^*}^2 \frac{\sqrt{\log (1/\eta)}}{\delta_r}
    \leq 3 \delta_r^{-2} {w^*}^3 \sqrt{r \log (n /\eta)}
\end{equation}
with probability at least $1-2\eta$, where $\mathbf{\Phi}_\text{sym}^{(r)}$ has $i$-th row equal to $\phi_\text{sym}^{(r)}(Z_i)$. Then on combining equations (\ref{3.1 proof eq1}) and (\ref{3.1 proof eq2}), we get
\begin{equation}
\label{sym_frob_norm}
    \left\Vert \mathbf{U}_{\Tilde{\mathbf{A}}} \mathbf{S}_{\Tilde{\mathbf{A}}}^{1/2} \mathbf{Q} - \mathbf{\Phi}_\text{sym}^{(r)}  \right\Vert_F \leq 27 \delta_r^{-2} {w^*}^{3} \sqrt{r \log(n / \eta)},
\end{equation}
and equation (\ref{conc_thm:eq_1}) in the statement of the theorem follows from pre-multiplying by $\mathbf{W}^{1/2} \mathbf{W}^{-1/2}$ and using that $\mathbf{\Phi}_w^{(r)} = \mathbf{W}^{-1/2} \mathbf{\Phi}_\text{sym}^{(r)}$.

With $X_i^\top$ denoting the $i$-th row of $\mathbf{W}^{-1/2}  \mathbf{U}_{\Tilde{\mathbf{A}}} \mathbf{S}_{\Tilde{\mathbf{A}}}^{1/2}$ (as in Algorithm \ref{alg:lase}), notice that $w(Z_i)^{1/2}X_i^\top\mathbf{Q}$ equals the $i$-th row of $\hat{\mathbf{\Phi}}^{(r)}_\text{sym} :=\mathbf{U}_{\Tilde{\mathbf{A}}} \mathbf{S}_{\Tilde{\mathbf{A}}}^{1/2} \mathbf{Q}$. To show equation (\ref{eq:3.1.2}) in the theorem, we first note that as the $\{ Z_i\}_{i=1}^n$ are independent and identically distributed, the $\{w(Z_i)^{1/2}X_i^\top\mathbf{Q}\}_{i=1}^n$ are exchangeable and hence identically distributed. 
Let $\eta = n^{-2}$. By conditioning on the event in equation (\ref{sym_frob_norm}), we have
\begin{align}
\begin{split}
\label{3.1 proof eq3}
    \mathbb{E} \left[ \Vert w(Z_i)^{1/2}X_i^\top\mathbf{Q} - \phi_\text{sym}^{(r)}(Z_i) \Vert \right] &\leq \sqrt{ \mathbb{E} \left[ \Vert w(Z_i)^{1/2}X_i^\top\mathbf{Q} - \phi_\text{sym}^{(r)}(Z_i) \Vert^2 \right]} \\
    &\leq \sqrt{\frac{1}{n}\mathbb{E} [ \Vert \hat{\mathbf{\Phi}}^{(r)}_\text{sym} - \mathbf{\Phi}_\text{sym}^{(r)} \Vert^2_F]} \\
    &\leq \frac{1}{\sqrt{n}} \sqrt{\left(1 - \frac{2}{n^2} \right) (27 \delta_r^{-2} {w^*}^{3} \sqrt{ 3r \log(n)})^2 + \frac{2}{n^2}2n} \\
    &\leq 27 \delta_r^{-2} {w^*}^{3} r\sqrt{ \frac{6 \log(n)}{n}},
\end{split}
\end{align}
because the worst case bound is $ \Vert \hat{\mathbf{\Phi}}^{(r)}_\text{sym} - \mathbf{\Phi}_\text{sym}^{(r)} \Vert_F^2 \leq 2nw^*$ with probability 1. 
Recalling that  $\phi_\text{sym}(\cdot) = w(\cdot)^{1/2} \phi_w(\cdot)$,
equation (\ref{eq:3.1.2}) follows from (\ref{3.1 proof eq3}), Markov's inequality and pre-multiplication by $w(Z_i)^{1/2} w(Z_i)^{-1/2}$.
\end{proof}

\section{Proofs and supporting results for Section \ref{sec:LASE_trunc_error}}
\label{app:trunc_error}

\begin{lemma}\label{lem:bias} Assume \textbf{A\ref{ass:Z_in_R^d}} -  \textbf{A\ref{ass:epsilon_sigma}}. Then 
$$
\|\bar{\phi}_w-\phi(z^\star)\|^2 = O(\epsilon).
$$
\end{lemma}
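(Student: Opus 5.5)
By Jensen's inequality in $\ell_2$ (the norm squared is convex), I will reduce the claim to a mean-squared distance:
\[
\|\bar{\phi}_w-\phi(z^\star)\|^2 = \bigl\|\mathbb{E}[\phi(Z)-\phi(z^\star)]\bigr\|^2 \le \mathbb{E}\bigl[\|\phi(Z)-\phi(z^\star)\|^2\bigr], \qquad Z\sim\mu_w .
\]
This expectation equals $\mathrm{tr}(\mathcal{D}_w)$. It therefore suffices to show that $\mathbb{E}[\|\phi(Z)-\phi(z^\star)\|^2]=O(\epsilon)$.

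Next I will bound $\|\phi(x)-\phi(z^\star)\|$ by a constant times $\|x-z^\star\|$, uniformly over $x\in\mathcal{Z}$. Write $g(x):=\|\phi(x)-\phi(z^\star)\|^2 = f(x,x)-2f(x,z^\star)+f(z^\star,z^\star)$. For $x$ in a convex neighbourhood of $z^\star$ (a ball inside $\mathcal{Z}$, available by \textbf{A\ref{ass:phi(x)_not_zero}}), the differentiability of $\phi$ implied by \textbf{A\ref{ass:differentiability}} gives
\[
\phi(x)-\phi(z^\star)=\int_0^1 \partial\phi(z^\star+t(x-z^\star))\,(x-z^\star)\,\rd t .
\]
Then $g(x)\le \sup_{y}\|\partial\phi(y)\|^2\,\|x-z^\star\|^2$. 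Moreover, $\|\partial\phi(y)\|^2\le \mathrm{tr}(\mathbf{H}_{y,y})=\sum_i \mathbf{H}^{ii}_{y,y}$, which is bounded because the mixed partials are continuous on the compact set $\mathcal{Z}$. This gives $g(x)\le C\|x-z^\star\|^2$ on the ball. Outside the ball, $\|x-z^\star\|\ge\rho>0$, while $g(x)\le 4\sup f\le 4$. So $g(x)\le (4/\rho^2)\|x-z^\star\|^2$ there, and the bound holds globally with a single constant $C'$.

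Finally, by \textbf{A\ref{ass:epsilon_sigma}},
\[
\mathbb{E}[g(Z)]\le C'\,\mathbb{E}\|Z-z^\star\|^2 = C'\,\mathrm{tr}(\epsilon\mathbf{\Sigma}) = O(\epsilon).
\]
This proves the lemma.

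The main obstacle is justifying the mean-value/integral representation in the infinite-dimensional space $\ell_2$. I will handle it by working with the scalar function $t\mapsto g(z^\star+t(x-z^\star))$ instead. Its second derivative in $t$ can be expressed through $\mathbf{H}$ together with the first-order derivative terms of $f$, or, more cleanly, $\phi$ can be treated as $C^1$ into $\ell_2$ via \citet[Lem.~4.34]{steinwart2008support}, which makes the Bochner-integral form of the fundamental theorem of calculus valid. The Lipschitz condition in \textbf{A\ref{ass:differentiability}} is not needed here; only the continuity of the mixed partials is used.
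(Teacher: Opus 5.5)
Your proposal is correct and follows essentially the same route as the paper's proof: Jensen's inequality, the fundamental theorem of calculus along the segment from $z^\star$ on a ball inside $\mathcal{Z}$ with the derivative controlled by the uniformly bounded $\mathbf{H}$, and the trivial bound on $\|\phi(x)-\phi(z^\star)\|^2$ off the ball. Your pointwise estimate $g(x)\le (4/\rho^2)\|x-z^\star\|^2$ there is just the paper's Markov step in pointwise form, and it gets the $\rho^{-2}$ factor right where the paper writes $2/\delta$. One caution: justify the integral representation through the $C^1$ property of $\phi$ into $\ell_2$, because the alternative you sketch via the second $t$-derivative of the scalar function $g$ would need second derivatives of $f$ in a single argument, which \textbf{A\ref{ass:differentiability}} does not supply.
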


\begin{proof}
Under \textbf{A\ref{ass:phi(x)_not_zero}}, $z^\star$ is an interior point of $\mathcal{Z}$, so there exists $\delta>0$ such that $B_\delta\coloneqq \{x\in\mathbb{R}^d:\|x-z^\star\|<\delta\}$ is a subset of $\mathcal{Z}$. 
With $Z\sim \mu_w$, we may write:
\begin{multline}
\|\bar{\phi}_w-\phi(z^\star)\|^2= \|\mathbb{E}[\phi(Z)-\phi(z^\star)]\|^2 \\  \leq \mathbb{E}\left[\| \phi(Z)-\phi(z^\star) \|^2\mathbf{1}[Z\in B_\delta]\right]+\mathbb{E}\left[\| \phi(Z)-\phi(z^\star) \|^2\mathbf{1}[Z\notin B_\delta]\right].\label{eq:bias_decomp}
\end{multline}
For the first term in \eqref{eq:bias_decomp},  define $z_t\coloneqq t Z+ (1-t)z^\star$ for $t\in[0,1]$. Note that on the event $\{Z\in B_\delta\}$ we have  $z_t\in B_\delta$ hence $z_t\in\mathcal{Z}$. Then:
$$
\phi(Z) - 
\phi(z^\star) = \int_0^1 \frac{\mathrm{d}}{\mathrm{d}t}\phi(z_t)\mathrm{d}t=\int_0^1 \partial \phi(z_t)\mathrm{d}t \cdot(Z-z^\star),  
$$
where $\partial \phi(x)$ is the matrix whose $(k,i)$ element is the partial derivative of $\lambda_k^{1/2} u_k(x)$ with respect to $x_i$, where $x=(x^{(1)}, \ldots, x^{(d)}) \in \mathcal{Z}$. We have:
\begin{align}
\|\phi(Z)-\phi(z^\star)\|^2& = (Z-z^\star)^{\top}\cdot \left[\int_0^1 \partial \phi(z_t)\mathrm{d}t\right]^{\top}\cdot \int_0^1 \partial \phi(z_t)\mathrm{d}t \cdot(Z-z^\star)\nonumber\\
&= (Z-z^\star)^{\top}\cdot \int_0^1\left[ \partial \phi(z_t)\right]^{\top} \mathrm{d}t\cdot \int_0^1 \partial \phi(z_t)\mathrm{d}t \cdot(Z-z^\star)\nonumber\\
&= (Z-z^\star)^{\top}\cdot \int_0^1 \int_0^1\left[ \partial \phi(z_s)\right]^{\top} \partial \phi(z_t)\mathrm{d}s \mathrm{d}t \cdot(Z-z^\star)\nonumber\\
& = (Z-z^\star)^{\top}\cdot \int_0^1 \int_0^1\mathbf{H}_{z_s,z_t}\mathrm{d}s \mathrm{d}t \cdot(Z-
z^\star)\label{eq:lipschitz_proof}.
\end{align}
Under \textbf{A\ref{ass:differentiability}} the elements of the matrix $\mathbf{H}_{x,y}$ are continuous in $x,y$, and then uniformly bounded in $x,y$ since $\mathcal{Z}$ is compact. It follows that  $\sup_{x,y}\|\mathbf{H}_{x,y}\|\leq \sup_{x,y}\|\mathbf{H}_{x,y}\|_F<\infty$. This fact combined with \eqref{eq:lipschitz_proof} establishes that there exists a finite constant $c$ such that
$$
\| \phi(Z)-\phi(z^\star) \|^2\mathbf{1}[Z\in B_\delta] \leq c \|Z-z^\star\|^2.
$$
Combined with \textbf{A\ref{ass:epsilon_sigma}} and the fact that $\mathbb{E}[\|Z-z^\star\|^2]=\mathrm{tr}\mathbb{E}[(Z-z^\star)(Z-z^\star)^\top]$ this shows that the first term on the r.h.s. of \eqref{eq:bias_decomp} is $O(\epsilon)$. 

For the second term on the r.h.s of \eqref{eq:bias_decomp}, consider the bound:
\begin{align*}
\mathbb{E}\left[\| \phi(Z)-\phi(z^\star) \|^2\mathbf{1}[Z\notin B_\delta]\right] \leq 2 \mathbb{P}\left(Z\notin B_\delta \right) \leq \frac{2}{\delta} \mathbb{E}[\|Z-z^\star\|^2] = O(\epsilon),
\end{align*}
where the first inequality uses 
$$
\sup_{x,y\in\mathcal{Z}}\|\phi(x)-\phi(y)\|^2 = \sup_{x,y\in\mathcal{Z}} f(x,x)+f(y,y)-2f(x,y)\leq 2,
$$
the second inequality is an application of Markov's inequality, and the final equality holds by \textbf{A\ref{ass:higher_moments}}.

\end{proof}

\begin{proof}[Proof of proposition \ref{prop:eigenvalues_of_D}]

Consider the decomposition:
\begin{equation}\label{eq:D_w_first+decomp}
\mathcal{D}_w  = \epsilon \mathbf{S} +\mathcal{D}_w - \epsilon \mathbf{S}
\end{equation}
where 
\begin{align*}
\mathbf{S}  \coloneqq \partial \phi(z^\star) \boldsymbol{\Sigma} \partial\phi(z^\star)^\top =\frac{1}{\epsilon} \mathbb{E} \bigg[ \partial\phi(z^\star) (Z-z^\star)(Z-z^\star)^\top \partial\phi(z^\star)^\top \bigg], 
\end{align*}
with the equality holding under \textbf{A\ref{ass:epsilon_sigma}}.
The matrix $\mathbf{S}$ does not depend on $\epsilon$. Noting that under \textbf{A\ref{ass:epsilon_sigma}} we have $\boldsymbol{\Sigma} \succ 0$, hence $\boldsymbol{\Sigma}^{1/2}$ is full rank, the rank of $\mathbf{S}$ is equal to that of $ \boldsymbol{\Sigma}^{1/2} \partial\phi(z^\star)^\top \partial\phi(z^\star) \boldsymbol{\Sigma}^{1/2} = \boldsymbol{\Sigma}^{1/2} \mathbf{H}_{z^\star,z^\star} \boldsymbol{\Sigma}^{1/2}$, in  turn equal  to $d_{\text{loc}}(z^\star)$ as defined in  \eqref{eq:d_loc_defn}. Thus $\epsilon \mathbf{S}$ is rank $d_{\text{loc}}(z^\star)$ with non-zero eigenvalues which are all positive and $\Theta(\epsilon)$.

We now turn to the term $\mathcal{D}_w - \epsilon \mathbf{S}$ in \eqref{eq:D_w_first+decomp}. Since $z^\star$ is an interior point of $\mathcal{Z}$ under \textbf{A\ref{ass:Z_in_R^d}}, there exists $\delta>0$ such that $B_\delta\coloneqq\{x\in\mathbb{R}^d:\|x-z^\star\|<\delta\}$ is a subset of $\mathcal{Z}$. 

We decompose further:
\begin{equation}\label{eq:D_w-eS_decomp}
 \mathcal{D}_w - \epsilon \mathbf{S} = \mathbf{R} +\mathbf{Q}
\end{equation}
where
\begin{align*}
\mathbf{R}& \coloneqq  \mathbb{E}\left[\left\{(\phi(Z)-\phi(z^\star))(\phi(Z)-\phi(z^\star))^\top  - \partial\phi(z) (Z-z^\star)(Z-z^\star)^\top \partial\phi(z)^\top\right\} \mathbf{1}[X\in B_\delta] \right], \\
\mathbf{Q}& \coloneqq  \mathbb{E}\left[\left\{(\phi(Z)-\phi(z^\star))(\phi(Z)-\phi(z^\star))^\top - \partial\phi(z) (Z-z^\star)(Z-z^\star)^\top \partial\phi(z)^\top\right\}  \mathbf{1}[Z\notin B_\delta]\right].
\end{align*}
We first bound the nuclear (a.k.a. Schatten-$1$) norm $\|\cdot\|_\star$ of $\mathbf{Q}$:
\begin{align}
\|\mathbf{Q}\|_\star & \leq\sup_{x\in\mathcal{Z}}\|\phi(x)-\phi(z)\|^2\mathbb{P}(Z\notin B_\delta) + \|\mathbf{H}_{z^\star,z^\star}\| \mathbb{E}\left[\|Z-z^\star\|^2 \mathbf{1}[Z\notin B_\delta]\right]\nonumber \\
&\leq \left(2 +\|\mathbf{H}_{z^\star,z^\star} \|\sup_{x\in\mathcal{Z}}\|x-z^\star\|^2\right)  \mathbb{P}(Z\notin B_\delta)\nonumber\\
&\leq  \left(2 + \|\mathbf{H}_{z^\star,z^\star} \| \sup_{x\in\mathcal{Z}}\|x-z^\star\|^2\right) \frac{1}{\delta^3} \mathbb{E}[\|Z-z^\star\|^3] = o(\epsilon),\label{eq:Q_bound}
\end{align}
where the first inequality uses convexity and the triangle inequality of the nuclear norm, together with the facts that for a vector $u$, $\|uu^\top \|_\star=\|u\|^2$ and $\mathbf{H}_{z^\star,z^\star} = \partial\phi(z^\star)^{\top }\partial \phi(z^\star)$; the second inequality uses  $\|\phi(x)-\phi(z^\star)\|^2 = f(x,x)+f(z^\star,z^\star)-2f(x,z^\star)\leq 2$, noting $\sup_{x\in\mathcal{Z}}\|x-z^\star\|^2$ is finite since $\mathcal{Z}$ is compact under \textbf{A\ref{ass:Z_in_R^d}}; the final inequality holds by Markov's inequality, and the final equality holds by \textbf{A\ref{ass:higher_moments}}, noting that $\mathbf{H}_{z^\star,z^\star}$ does not depend on $\epsilon$.

We now turn to bounding the nuclear norm of the matrix $\mathbf{R}$ in \eqref{eq:D_w-eS_decomp}.   Define $Y_t \coloneqq tZ + (1-t)z^\star$ with $t\in[0,1]$. Note that on the event $\{Z\in B_\delta\}$, we have $Y_t\in B_\delta$ and hence $Y_t$ is in the interior of $\mathcal{Z}$, where both $\phi(\cdot)$ and $\partial\phi(\cdot)$ are well defined (recall \textbf{A\ref{ass:differentiability}}). Here $\partial \phi(x)$ is the matrix whose $(k,i)$ element is the partial derivative of $\lambda_k^{1/2} u_k(x)$ with respect to $x_i$, where $x=(x^{(1)}, \ldots, x^{(d)})^\top \in \mathcal{Z}$. On the event $\{Z\in B_\delta\}$, we may then write:
\begin{align}
    \phi(Z) - \phi(z^\star) &= \int_0^1 \frac{d}{dt} \phi (Y_t) \rd t = \int_0^1 \partial\phi(Y_t) \cdot \dot{Y_t} \rd t = \int_0^1 \partial\phi(Y_t) \rd t \cdot(Z-z^\star) \nonumber\\
    &= \partial\phi(z^\star)\cdot (Z-z^\star) + \int_0^1 [\partial\phi(Y_t) - \partial\phi(z^\star)] \rd t\cdot (Z-z^\star).\label{eq:X-z_proof}
\end{align}
Using the identity \eqref{eq:X-z_proof}, we have 
\begin{equation}
\mathbf{R} = \mathbf{R}_1+\mathbf{R}_2+\mathbf{R}_2
\end{equation}
where\begin{align*}
& \mathbf{R}_1 \coloneqq  \mathbb{E} \bigg[ \partial\phi(z^\star) (Z-z^\star)(Z-z^\star)^\top \left( \int_0^1 [\partial \phi(Y_t) - \partial \phi(z^\star)] \rd t\right)^\top\cdot  \mathbf{1}[Z\in B_\delta]\bigg],\\
& \mathbf{R}_2 \coloneqq \mathbf{R}_1^{\top}, \\
& \mathbf{R}_3 \coloneqq  \mathbb{E} \bigg[ \int_0^1 [\partial \phi(Y_t) - \partial \phi(z^\star)] \rd t  (Z-z^\star)(Z-z^\star)^\top  \left(\int_0^1 [\partial \phi(Y_t) - \partial \phi(z^\star)] \rd t\right)^{\top} \cdot \mathbf{1}[Z\in B_\delta] \bigg].
\end{align*}
On the event $\{Z\in B_\delta\}$, consider the following bound on the integral term which appears in $\mathbf{R}_1$, $\mathbf{R}_2$ and $\mathbf{R}_3$:
\begin{align}
    \left\Vert \int_0^1 [\partial \phi (Y_t) - \partial \phi(z^\star)] \rd t \right\Vert_F 
    &\leq   \int_0^1 \left\Vert \partial \phi (Y_t) - \partial \phi(z^\star) \right\Vert_F \rd t\nonumber \\
    &= \int_0^1 \left[ \langle \partial \phi (Y_t), \partial \phi (Y_t) \rangle_F  + 
    \langle \partial \phi(z^\star), \partial \phi(z^\star) \rangle_F  - 
    2\langle \partial \phi (Y_t), \partial \phi(z^\star) \rangle_F  \right]^{1/2} \rd t\nonumber \\
    &= \int_0^1 \left[ \text{tr}(\mathbf{H}_{Y_t,Y_t}) + \text{tr}(\mathbf{H}_{z^\star,z^\star}) - 2\text{tr}(\mathbf{H}_{Y_t, z^\star} ) \right]^{1/2} \rd t \nonumber\\
    &= \int_0^1 \left[ \sum_{j=1}^d (\mathbf{H}^{jj}_{z^\star,z^\star} - \mathbf{H}^{jj}_{Y_t, z^\star} + \mathbf{H}^{jj}_{Y_t,Y_t} -  \mathbf{H}^{jj}_{Y_t, z^\star}  ) \right]^{1/2} \rd t \nonumber\\
    &\leq  \int_0^1 \left[ \sum_{j=1}^d L_{j} t \Vert Z-z^\star \Vert +  L_{j} t \Vert Z-z^\star \Vert  \right]^{1/2} \rd t\nonumber \\
    &= \frac{4}{3}\left(\sum_{j=1}^d L_j \right)^{1/2} \cdot \Vert Z-z^\star \Vert^{1/2}\label{eq:intergral_est}, 
\end{align}
where $\langle \cdot, \cdot \rangle_F$ is the Frobenius inner product, and the constants $L_j$ are as in \textbf{A\ref{ass:differentiability}}. We obtain:
\begin{align}
\|\mathbf{R}_1\|_\star =\|\mathbf{R}_2\|_\star&\leq \mathbb{E}\left[\left\lVert\partial\phi(z^\star) (Z-z^\star)(Z-z^\star)^\top \left( \int_0^1 [\partial \phi(Y_t) - \partial \phi(z^\star)] \rd t\right)^\top\right\rVert_\star \mathbf{1}[Z\in B_\delta]\right] \nonumber\\
& \leq  \mathbb{E}\left[ \left\lVert \partial\phi(z^\star) (Z-z^\star)(Z-z^\star)^\top \right\rVert_F \left\lVert  \int_0^1 [\partial \phi(Y_t) - \partial \phi(z^\star)] \rd t \right\rVert_F \mathbf{1}[Z\in B_\delta] \right] \nonumber \\
&\leq \mathbb{E}\left[ \left\lVert \partial\phi(z^\star) \right\rVert_F  \left\lVert (Z-z^\star)(Z-z^\star)^\top \right\rVert_F \left\lVert  \int_0^1 [\partial \phi(Y_t) - \partial \phi(z^\star)] \rd t \right\rVert_F \mathbf{1}[Z\in B_\delta] \right] \nonumber\\
& = \mathbb{E}\left[ \mathrm{tr}(\mathbf{H}_{z^\star,z^\star})^{1/2}  \|Z-z^\star\|^2 \left\lVert \int_0^1 [\partial \phi(Y_t) - \partial \phi(z^\star)] \rd t \right\rVert_F \mathbf{1}[Z\in B_\delta] \right]\nonumber \\
&\leq c \mathbb{E}\left[\|Z-z^\star\|^{5/2}\right] = o(\epsilon),\label{eq:R_1_R_2_bound}
\end{align}
where the second inequality uses Schatten's matrix version of Holder's inequality (specifically $\|\mathbf{A}\mathbf{B}\|_{\star}\leq \|\mathbf{A}\|_F \|\mathbf{B}\|_F$); the third inequality uses the submultiplicativity of the Frobenius norm; the first equality uses $\mathbf{H}_{z^\star,z^\star}=\partial \phi(z^\star)^{\top }\partial\phi(z^\star)$; the last inequality holds for some constant $c$ which does not depend on $\epsilon$ using \eqref{eq:intergral_est}; and the final equality holds by \textbf{A\ref{ass:higher_moments}}.

With the short-hand for the vector: $v\coloneqq  \int_0^1 [\partial \phi(Y_t) - \partial \phi(z^\star)] \rd t \cdot  (Z-z^\star)$,
\begin{align}
\|\mathbf{R}_3\|_\star &\leq \mathbb{E} \bigg[ \left\lVert v v^{\top}  \right\rVert  \mathbf{1}[Z\in B_\delta]\bigg]\nonumber\\
& = \mathbb{E}\bigg[ \|v\|^2 \mathbf{1}[Z\in B_\delta] \bigg] \nonumber\\
&\leq \mathbb{E}\bigg[ \|Z-z^\star\|^2 \left\lVert \int_0^1 [\partial \phi(Y_t) - \partial \phi(z^\star)] \rd t\right\rVert_F^2 \mathbf{1}[Z\in B_\delta] \bigg] \nonumber\\
&\leq c   \mathbb{E}\bigg[ \|Z-z^\star\|^3 \bigg] = o(\epsilon),\label{eq:R_3_bound}
\end{align}
for some constant $c$, where $\|vv^\top \|_\star = \|v\|^2$, \eqref{eq:intergral_est} and \textbf{A\ref{ass:higher_moments}} have been used.

Combining \eqref{eq:Q_bound}, \eqref{eq:R_1_R_2_bound}, \eqref{eq:R_3_bound} and returning to \eqref{eq:D_w-eS_decomp} we have shown:
$$
\|\mathcal{D}_w-\epsilon \mathbf{S}\|_\star = o(\epsilon). 
$$
Then recalling the discussion of the eigenvalues of $\mathbf{S}$ above \eqref{eq:D_w-eS_decomp} and using Weyl's inequality, we have
\begin{equation}\label{eq:D_w_first_d_eigs}
\max_{1\leq i\leq d_{\text{loc}}(z)}|\lambda_i(\mathcal{D}_w)-\epsilon\lambda_i(\mathbf{S})|\leq \|\mathcal{D}_w-\epsilon \mathbf{S}\|  \leq \|\mathcal{D}_w-\epsilon \mathbf{S}\|_\star =o(\epsilon) ,
\end{equation}
which implies the first $d_{\text{loc}}(z)$ eigenvalues of $\mathcal{D}_w$ are all $\Theta(\epsilon)$, which is the first claim of the proposition.

It remains to prove the second claim of the proposition. By \citep[Thm. II]{kato1987variation} with $p=1$, there exist $j(\cdot)$, $k(\cdot)$ and $\ell(\cdot)$ mapping ${1,2,\ldots}$ to itself such that 
$$
\sum_{i\geq 1} |\lambda_{j(i)}(\mathcal{D}_w) - \epsilon \lambda_{k(i)}(\mathbf{S})| \leq \sum_{i\geq 1} | \lambda_{\ell(i)}(\mathcal{D}_w - \epsilon\mathbf{S})| = \|\mathcal{D}_w - \epsilon\mathbf{S}\|_\star.
$$
Then, recalling that $\mathbf{S}$ has exactly $d_{\text{loc}(z)}$ non-zero eigenvalues, all of which are positive,
\begin{align*}
\sum_{i > d_{\text{loc}(z)}} \lambda_i(\mathcal{D}_w) 
&=\left| \sum_{i \geq 1} \lambda_i(\mathcal{D}_w) - \sum_{i \leq d_{\text{loc}}(z)} \lambda_i(\mathcal{D}_w)\right|\\
&\leq  \left| \sum_{i \geq 1} \lambda_i(\mathcal{D}_w) - \sum_{i \geq 1 } \epsilon\lambda_i(\mathbf{S})\right|+  \left| \sum_{i \geq 1} \lambda_i(\epsilon \mathbf{S}) - \sum_{i \leq d_{\text{loc}}(z)} \lambda_i(\mathcal{D}_w)\right| \\
& = \left| \sum_{i\geq 1} \lambda_{k(i)}(\mathcal{D}_w) - \epsilon \lambda_{\ell(i)}(\mathbf{S}) \right|+  \left| \sum_{i\leq d_{\text{loc}(z)}}  \epsilon\lambda_i( \mathbf{S}) - \sum_{i \leq d_{\text{loc}}(z)} \lambda_i(\mathcal{D}_w)\right| \\
&\leq   \sum_{i\geq 1} \left| \lambda_{k(i)}(\mathcal{D}_w) - \epsilon \lambda_{\ell(i)}(\mathbf{S}) \right|  + d_{\text{loc}}(z)\max_{1\leq i\leq d_{\text{loc}}(z)}|\epsilon\lambda_i(\mathbf{S})-\lambda_i(\mathcal{D}_w)| \\
&\leq (1+d_{\text{loc}(z)}) \|\mathcal{D}_w - \epsilon\mathbf{S}\|_\star = o(\epsilon),
\end{align*}
using \eqref{eq:D_w_first_d_eigs} for the finality equality.

\end{proof}

\begin{lem}
\label{lem:big_O_epsilon}
   Let $a: \R_{\geq0} \to \R_{\geq0}$ be such that $ a(\epsilon) = O(\epsilon)$ as $\epsilon \to 0$. Then there exists a monotone decreasing sequence $\{\epsilon_k\}_{k\geq1}$, with $\epsilon_k \to 0$, such that along this sequence either 
    $$a(\epsilon_k) = \Theta(\epsilon_k) \quad \text{or} \quad a(\epsilon_k) = o(\epsilon_k).$$
\end{lem}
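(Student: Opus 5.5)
The plan is to study the ratio $b(\epsilon) \coloneqq a(\epsilon)/\epsilon$, defined for $\epsilon>0$. Since $a(\epsilon)=O(\epsilon)$ and $a\geq 0$, there exist $C<\infty$ and $\epsilon_0>0$ such that $0\leq b(\epsilon)\leq C$ for all $\epsilon\in(0,\epsilon_0)$. The dichotomy in the statement then becomes a statement about subsequential limits of a bounded, nonnegative function as $\epsilon\to 0$. This is essentially Bolzano--Weierstrass.

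Set $L\coloneqq \liminf_{\epsilon\to 0} b(\epsilon)\in[0,C]$. By the definition of $\liminf$ we can choose a sequence along which $b$ converges to $L$, and we can make that sequence strictly monotone decreasing to $0$. Concretely, we pick the terms inductively:
\[
\epsilon_{k+1}<\min\{\epsilon_k,\,1/(k+1)\}
\quad\text{such that}\quad
|b(\epsilon_{k+1})-L|<1/(k+1).
\]
Such a choice is always possible because $L$ is the infimum of limit points of $b$ on every punctured neighbourhood of $0$. The argument then splits into two cases.

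\textbf{Case $L=0$.} Here $a(\epsilon_k)/\epsilon_k\to 0$, so $a(\epsilon_k)=o(\epsilon_k)$ along the sequence.

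\textbf{Case $L>0$.} For all large $k$ we have $L/2\leq a(\epsilon_k)/\epsilon_k\leq C$, so $a(\epsilon_k)=\Theta(\epsilon_k)$ along the sequence. The upper bound $C$ comes from the $O(\epsilon)$ hypothesis, and the lower bound $L/2$ comes from convergence to $L$.

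Using $\limsup$ instead of $\liminf$ would work equally well. Any subsequential limit of the bounded function $b$ suffices, since only its being zero or positive matters.

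There is no real obstacle. The only point needing care is the inductive construction, which must make $\{\epsilon_k\}$ both strictly decreasing to $0$ and convergent in the ratio; both are handled by the constraint $\epsilon_{k+1}<\min\{\epsilon_k,1/(k+1)\}$. It also helps to note how the lemma is used downstream. In Theorem~\ref{thm:lase_trunc_err} it will be applied to the $O(\epsilon)$ eigenvalues $\tilde{\lambda}_{d_{\text{loc}}(z^\star)}$ and $\tilde{\lambda}_{d_{\text{loc}}(z^\star)+1}$, possibly one after the other, passing to a further subsequence each time. So it is worth remarking that the lemma remains valid when $\epsilon$ is restricted to any given sequence decreasing to $0$: the same proof applies verbatim, with $\liminf$ taken along that sequence.
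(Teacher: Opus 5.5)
Your proof is correct and is essentially the paper's argument: the paper also passes to $b(\epsilon)=a(\epsilon)/\epsilon$, takes $L=\limsup_{\epsilon\to0}b(\epsilon)\in[0,\infty)$, and splits into $L>0$ (a sequence with $b(\epsilon_k)\to L$ gives $\tfrac{L}{2}\epsilon_k\le a(\epsilon_k)\le M\epsilon_k$) and $L=0$ (any decreasing sequence gives $o(\epsilon_k)$). Using $\liminf$ instead makes no difference. Your explicit inductive construction of a strictly decreasing $\{\epsilon_k\}$, and your remark that the argument works verbatim along any prescribed sequence, tidy two points the paper leaves implicit: monotonicity in its Case 1, and the passage to subsequences in the proof of Theorem~\ref{thm:lase_trunc_err}.
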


\begin{proof}
Define $b(\epsilon):= a(\epsilon)/\epsilon$. Since $a(\epsilon)=O(\epsilon)$, then 
$$L = \limsup_{\epsilon\to0} b(\epsilon)$$
is such that $0 \leq L < \infty.$ Now let us split into two cases:

\textbf{Case 1}: $L > 0$.
\newline By the definition of $\limsup$, there exists a sequence $\epsilon_k \to 0$ such that $\lim_{k \to \infty} b(\epsilon_k) = L$.
In particular, for sufficiently large $k$ we have
$$b(\epsilon_k) = \frac{a(\epsilon_k)}{\epsilon_k} \geq \frac{L}{2} \implies a(\epsilon_k) \geq \frac{L}{2} \epsilon_k.$$
On the other hand, since $b$ is bounded, say $b(\epsilon) \leq M$ for all small $\epsilon$, we also have
$$a(\epsilon_k) = b(\epsilon_k) \epsilon_k \leq M \epsilon_k.$$
Thus
$$\frac{L}{2} \epsilon_k \leq a (\epsilon_k) \leq M \epsilon_k$$
for all large $k$, and therefore $a(\epsilon_k) = \Theta(k)$.

\textbf{Case 2}: $L=0$.
In this case, by the definition of $\limsup$, \textit{any} decreasing sequence $\epsilon_k \to 0$ will be such that
$$\lim_{k\to\infty} \frac{a(\epsilon_k)}{\epsilon_k} = 0,$$
and thus $a(\epsilon_k) = o(\epsilon_k)$

In either case, we have produced a monotone decreasing $\epsilon_k \to 0$ so that $\{a(\epsilon_k)\}_{k\geq1}$ is \textit{either} $\Theta(\epsilon_k)$ \textit{or} $o(\epsilon_k)$.
\end{proof}

 \begin{proof}[Proof of Theorem \ref{thm:lase_trunc_err}]
We have that
$$\|\bar{\phi}_w\| \geq \|\phi(z^\star)\| - \|\bar{\phi}_w - \phi(z^\star)\|,$$
Moreover, by Lemma \ref{lem:bias} $\|\bar{\phi}_w - \phi(z^\star)\| = O(\epsilon)$, and thus  $\|\bar{\phi}_w - \phi(z^\star)\| \to 0$ as $\epsilon \to 0$.
Using Assumption \textbf{A\ref{ass:phi(x)_not_zero}} that $\phi(z^\star) \neq 0$, it follows that for $\epsilon<\epsilon_0$, $\|\bar{\phi}_w - \phi(z^\star)\| \leq \tfrac{1}{2}\|\phi(z^\star)\|$, and therefore
$$\|\bar{\phi}_w\| \geq \|\phi(z^\star)\| - \tfrac{1}{2}\|\phi(z^\star)\| =
\tfrac{1}{2}\|\phi(z^\star)\|.$$
We also have that
$$\|\bar{\phi}_w\| \leq \|\phi(z^\star)\| + \|\bar{\phi}_w - \phi(z^\star)\|,$$
and therefore, for $\epsilon > \epsilon_0$, 
$$\|\bar{\phi}_w\| \leq \|\phi(z^\star)\| + \tfrac{1}{2}\|\phi(z^\star)\| =
\tfrac{3}{2}\|\phi(z^\star)\|.$$
Thus, for $\epsilon<\epsilon_0$, $\|\bar{\phi}_w\|$ is bounded above and below by positive constants, and it follows that $\|\bar{\phi}_w\| = \Theta(1)$ as $\epsilon \to 0$.

Next, using a corollary of Weyl's inequality \citep[Cor. 4.3.3]{horn2012matrix}, often referred to as ``Weyl interlacing", we have that the eigenvalues of $\mathcal{D}_w$ and the eigenvalues of the rank-$2$ perturbation,
$$
\mathcal{K}_w = \mathcal{D}_w + \Bar{\phi}_w\Bar{\phi}_w^\top -  \bm{\gamma}_w\bm{\gamma}_w^\top,
$$
are interlaced in the following way:
\begin{equation}
\label{eq:interlacing_lower}
     \lambda_{i}(\mathcal{K}_w) \geq \lambda_{i+1}(\mathcal{D}_w) \hspace{10pt} \text{for } i=1,2, \ldots,
\end{equation}
and
\begin{equation}
\label{eq:interlacing_upper}
    \lambda_{i-1}(\mathcal{D}_w) \geq \lambda_{i}(\mathcal{K}_w )  \hspace{10pt} \text{for } i=2,3, \ldots .
\end{equation}
This is due to the rank-2 perturbation having exactly one positive and one negative eigenvalue.

Thus, by \eqref{eq:interlacing_upper}, we have that
$$\sum_{i> d_\text{loc}(z^\star)+1} \lambda_i(\mathcal{K}_w) \leq \sum_{i> d_\text{loc}(z^\star)+1} \lambda_{i-1}(\mathcal{D}_w) = \sum_{i> d_\text{loc}(z^\star)} \lambda_{i}(\mathcal{D}_w) = o(\epsilon),$$
where the final equality is due to Proposition \ref{prop:eigenvalues_of_D}. Recalling from Lemma \ref{lem:covariance_eigenvalues} that the spectra of $\mathcal{K}_w$ and $\mathcal{A}_w$ are equal, we have established \eqref{eq:lase_trunc_err}. It follows that $ \lambda_i(\mathcal{K}_w) = o(\epsilon)$ for all $i > d_\text{loc}(z^\star)+1$, which is the final case considered in \eqref{eq:lam_decay_cases} 

Next, let us examine the top eigenvalue $\lambda_1(\mathcal{K}_w)$. We have that $\lambda_1(\Bar{\phi}_w\Bar{\phi}_w^\top) = \Vert \bar{\phi}_w\Vert^2$ and $\lambda_1(\bm{\gamma}_w \bm{\gamma}_w^\top) = \| \bm{\gamma}_w \|^2$. Then using Lemma \ref{lem:bias}, we know that $$\lambda_1(\bm{\gamma}_w \bm{\gamma}_w^\top) = \| \bm{\gamma}_w \|^2 =  \|\bar{\phi}_w-\phi(z^\star)\|^2 = O(\epsilon).$$
Also, from Proposition \ref{prop:eigenvalues_of_D}, we have $\lambda_1(\mathcal{D}_w)=\Theta(\epsilon)$. Therefore, by the triangle inequality applied to the spectral norm,
$$\| \mathcal{D}_w - \bm{\gamma}_w \bm{\gamma}_w^\top\| \leq \|\mathcal{D}_w\| + \|\bm{\gamma}_w \bm{\gamma}_w^\top\| = O(\epsilon).$$
Now, by Weyl's inequality, we get the upper bound:
$$\lambda_1(\mathcal{K}_w) \leq \lambda_1(\Bar{\phi}_w\Bar{\phi}_w^\top) + \| \mathcal{D}_w - \bm{\gamma}_w \bm{\gamma}_w^\top\| = \|\bar{\phi}_w\|^2 + O(\epsilon).$$
To get a lower bound, we can inspect the Rayleigh quotient of $\mathcal{K}_w$ at $\bar{\phi}_w$, which satisfies
$$\frac{\bar{\phi}_w^\top \mathcal{K}_w \bar{\phi}_w}{\|\bar{\phi}_w\|^2} = \|\bar{\phi}_w\|^2 + \frac{\bar{\phi}_w^\top (\mathcal{D}_w - \bm{\gamma}_w \bm{\gamma}_w^\top) \bar{\phi}_w}{\|\bar{\phi}_w\|^2}
\geq \|\bar{\phi}_w\|^2 - \|\mathcal{D}_w - \bm{\gamma}_w \bm{\gamma}_w^\top\|.$$
Then since $\lambda_1(\mathcal{K}_w)$ is the maximum Rayleigh quotient,
$$\lambda_1(\mathcal{K}_w) \geq \|\bar{\phi}_w\|^2 - \|\mathcal{D}_w - \bm{\gamma}_w \bm{\gamma}_w^\top\| \geq \|\bar{\phi}_w\|^2 -O(\epsilon).$$
Combining the upper and lower bounds, we have that $\lambda_1(\mathcal{K}_w) =  \|\bar{\phi}_w\|^2 + O(\epsilon).$ Then, using the earlier result that $\| \bar{\phi}_w \| = \Theta(1)$, we have that $\lambda_1(\mathcal{K}_w) = \Theta(1)$.


It remains to examine the eigenvalues $\lambda_i(\mathcal{K}_w)$ for  $i=2,\ldots, d_\text{loc}(z^\star) +1$ as in \eqref{eq:lam_decay_cases}.
We first assume that $d_\text{loc}(z^\star) > 2$ (we consider the cases $d_\text{loc}(z^\star)= 1,2$ separately, later).
For $i=2,\ldots, d_\text{loc}(z^\star) -1,$ by Proposition \ref{prop:eigenvalues_of_D} and equations \eqref{eq:interlacing_lower} and \eqref{eq:interlacing_upper}, 
$$\Theta(\epsilon) = \lambda_{i-1}(\mathcal{D}_w) \geq \lambda_i(\mathcal{K}_w) \geq \lambda_{i+1}(\mathcal{D}_w)  = \Theta(\epsilon),$$ 
and therefore $\lambda_i(\mathcal{K}_w) = \Theta(\epsilon)$, which is the second case in \eqref{eq:lam_decay_cases}.

For $i=d_\text{loc}(z^\star), d_\text{loc}(z^\star) + 1 ,$ by Proposition \ref{prop:eigenvalues_of_D} and equation \eqref{eq:interlacing_upper}, we have
$$\Theta(\epsilon) = \lambda_{i-1}(\mathcal{D}_w) \geq \lambda_i(\mathcal{K}_w),$$ 
and therefore $\lambda_i(\mathcal{K}_w) = O(\epsilon)$, which are the third and fourth cases in \eqref{eq:lam_decay_cases}.

Next, consider the case $ d_\text{loc}(z^\star)=1$. We have that for $i = d_\text{loc}(z^\star)+1$, by Proposition \ref{prop:eigenvalues_of_D} and equation \eqref{eq:interlacing_upper},
$$\Theta(\epsilon) = \lambda_{i-1}(\mathcal{D}_w) \geq \lambda_{i}(\mathcal{K}_w)$$
and therefore $\lambda_i(\mathcal{K}_w) = O(\epsilon)$, which is the fourth case in \eqref{eq:lam_decay_cases}.

Finally, consider the case $ d_\text{loc}(z^\star)=2$. We have that for $i = d_\text{loc}(z^\star), d_\text{loc}(z^\star)+1$, by Proposition \ref{prop:eigenvalues_of_D} and equation \eqref{eq:interlacing_upper},
$$\Theta(\epsilon) = \lambda_{i-1}(\mathcal{D}_w) \geq \lambda_{i}(\mathcal{K}_w) $$
and therefore $\lambda_i(\mathcal{K}_w) = O(\epsilon)$, which are the third and fourth cases in \eqref{eq:lam_decay_cases}. This concludes the proof of the decay rates for all the cases shown in \eqref{eq:lam_decay_cases}.

It remains to prove the final claim of the theorem, concerning the eigengap. We will split this into three cases:

\textbf{Case 1}: $d_\text{loc}(z^\star)>2$. 
\newline From \eqref{eq:lam_decay_cases} we know that $\tilde{\lambda}_{d_\text{loc}(z^\star)} = O(\epsilon)$. By Lemma \ref{lem:big_O_epsilon}, this implies that there exists a monotone decreasing sequence $\{\epsilon_k\}_{k\geq1}$, with $\epsilon_k \to 0$, such that along this sequence either $\tilde{\lambda}_{d_\text{loc}(z^\star)} = \Theta(\epsilon_k)$ or $\tilde{\lambda}_{d_\text{loc}(z^\star)} = o(\epsilon_k)$. If $\tilde{\lambda}_{d_\text{loc}(z^\star)} = o(\epsilon_k)$, then we have that $\tilde{\lambda}_{d_\text{loc}(z^\star)-1} - \tilde{\lambda}_{d_\text{loc}(z^\star)} = \Theta(\epsilon_k)$. On the other hand, if along the sequence, $\tilde{\lambda}_{d_\text{loc}(z^\star)} = \Theta(\epsilon_k)$, then from \eqref{eq:lam_decay_cases} we know $\tilde{\lambda}_{d_\text{loc}(z^\star)+1} = O(\epsilon)$. Again using Lemma \ref{lem:big_O_epsilon}, this implies that there exists a monotone decreasing subsequence $\{\epsilon_k^\prime\}_{k\geq1}$ of $\{\epsilon_k\}_{k\geq1}$, with $\epsilon_k^\prime \to 0$ as $k\to \infty$, such that either $\tilde{\lambda}_{d_\text{loc}(z^\star)+1} = \Theta(\epsilon_k^\prime)$ or $\tilde{\lambda}_{d_\text{loc}(z^\star)+1} = o(\epsilon_k^\prime)$. If $\tilde{\lambda}_{d_\text{loc}(z^\star)+1} = o(\epsilon_k^\prime)$, then we have $\tilde{\lambda}_{d_\text{loc}(z^\star)} - \tilde{\lambda}_{d_\text{loc}(z^\star)+1} = \Theta(\epsilon_k^\prime)$. If $\tilde{\lambda}_{d_\text{loc}(z^\star)+1} = \Theta(\epsilon_k^\prime)$, then we have $\tilde{\lambda}_{d_\text{loc}(z^\star)+1} - \tilde{\lambda}_{d_\text{loc}(z^\star)+2} = \Theta(\epsilon_k^\prime)$, and the result is shown.

\textbf{Case 2}: $d_\text{loc}(z^\star)=2$. 
\newline By \eqref{eq:lam_decay_cases}, we have $\tilde{\lambda}_{d_\text{loc}(z^\star)} = O(\epsilon)$. Then by  Lemma \ref{lem:big_O_epsilon}, this implies that there exists a monotone decreasing sequence $\{\epsilon_k\}_{k\geq1}$, with $\epsilon_k \to 0$ as $k\to \infty$, such that either $\tilde{\lambda}_{d_\text{loc}(z^\star)} = \Theta(\epsilon_k)$ or $\tilde{\lambda}_{d_\text{loc}(z^\star)} = o(\epsilon_k)$. If $\tilde{\lambda}_{d_\text{loc}(z^\star)} = o(\epsilon_k)$, then $\tilde{\lambda}_{d_\text{loc}(z^\star)-1} - \tilde{\lambda}_{d_\text{loc}(z^\star)} = \Theta(1)$. If $\tilde{\lambda}_{d_\text{loc}(z^\star)} = \Theta(\epsilon_k)$, then we follow the same reasoning as the $d_\text{loc}(z^\star)>2$ case to show that there exists a monotone decreasing subsequence $\{\epsilon_k^\prime\}_{k\geq1}$ of $\{\epsilon_k\}_{k\geq1}$, with $\epsilon_k^\prime \to 0$ as $k\to \infty$, such that along this subsequence either $\tilde{\lambda}_{d_\text{loc}(z^\star)} - \tilde{\lambda}_{d_\text{loc}(z^\star)+1} = \Theta(\epsilon_k^\prime)$ or $\tilde{\lambda}_{d_\text{loc}(z^\star)+1} - \tilde{\lambda}_{d_\text{loc}(z^\star)+2} = \Theta(\epsilon_k^\prime)$.

\textbf{Case 3}: $d_\text{loc}(z^\star)=1$.
By \eqref{eq:lam_decay_cases} $\tilde{\lambda}_{d_\text{loc}(z^\star)+1} = O(\epsilon)$, which by Lemma \ref{lem:big_O_epsilon} implies there exists a  monotone decreasing sequence $\{\epsilon_k\}_{k\geq1}$, with $\epsilon_k \to 0$ as $k\to \infty$, such that either $\tilde{\lambda}_{d_\text{loc}(z^\star)+1} = \Theta(\epsilon_k)$ or $\tilde{\lambda}_{d_\text{loc}(z^\star)+1} = o(\epsilon_k)$. If $\tilde{\lambda}_{d_\text{loc}(z^\star)+1} = \Theta(\epsilon_k)$, then $\tilde{\lambda}_{d_\text{loc}(z^\star)+1} - \tilde{\lambda}_{d_\text{loc}(z^\star)+2} = \Theta(\epsilon_k)$. $\tilde{\lambda}_{d_\text{loc}(z^\star)+1} = o(\epsilon_k)$, then $\tilde{\lambda}_{d_\text{loc}(z^\star)} - \tilde{\lambda}_{d_\text{loc}(z^\star)+1} = \Theta(1)$. This proves the final case.
     
 \end{proof}

 \section{Additional experiments}
 \label{app:experiments}

 \subsection{Node2Vec followed by UMAP}

We compare our UMAP-LASE (and UMAP-ASE) techniques with an analogous approach using a Node2Vec embedding, which we refer to as UMAP-Node2Vec.  
Figures \ref{fig:bristol_n2v} and \ref{fig:london_n2v} illustrate that, in both networks, UMAP-LASE ($d=3$) more accurately reproduces the true physical positions compared to UMAP-Node2Vec ($d=3$).  

For both networks, Node2Vec was run with \texttt{walk\_length=30}, \texttt{num\_walks=50}, and \texttt{workers=4}. The corresponding run-times are summarised in Table \ref{tab:umap_runtimes_new}. Notably, UMAP-LASE achieves a significant speed advantage over UMAP-Node2Vec, highlighting one of its major practical benefits.

 \begin{figure}[h]
     \centering
     \includegraphics[width=1\linewidth]{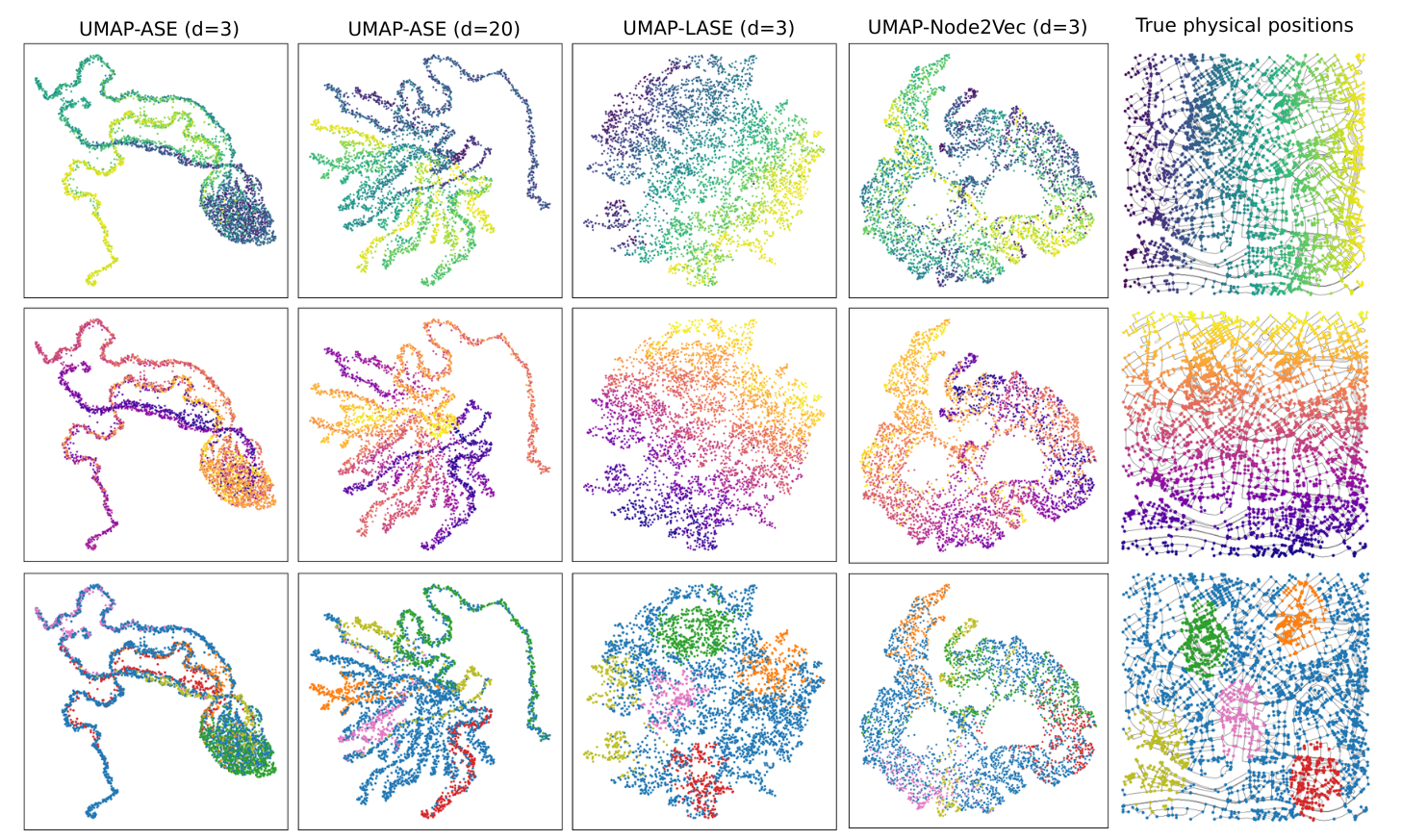}
     \caption{Global embeddings of the Bristol road network, analogous to Figure~\ref{fig:bristol_umap}, with an additional column for UMAP applied to a 3-dimensional Node2Vec embedding. }
     \label{fig:bristol_n2v}
 \end{figure}

 \begin{figure}[h]
     \centering
     \includegraphics[width=1\linewidth]{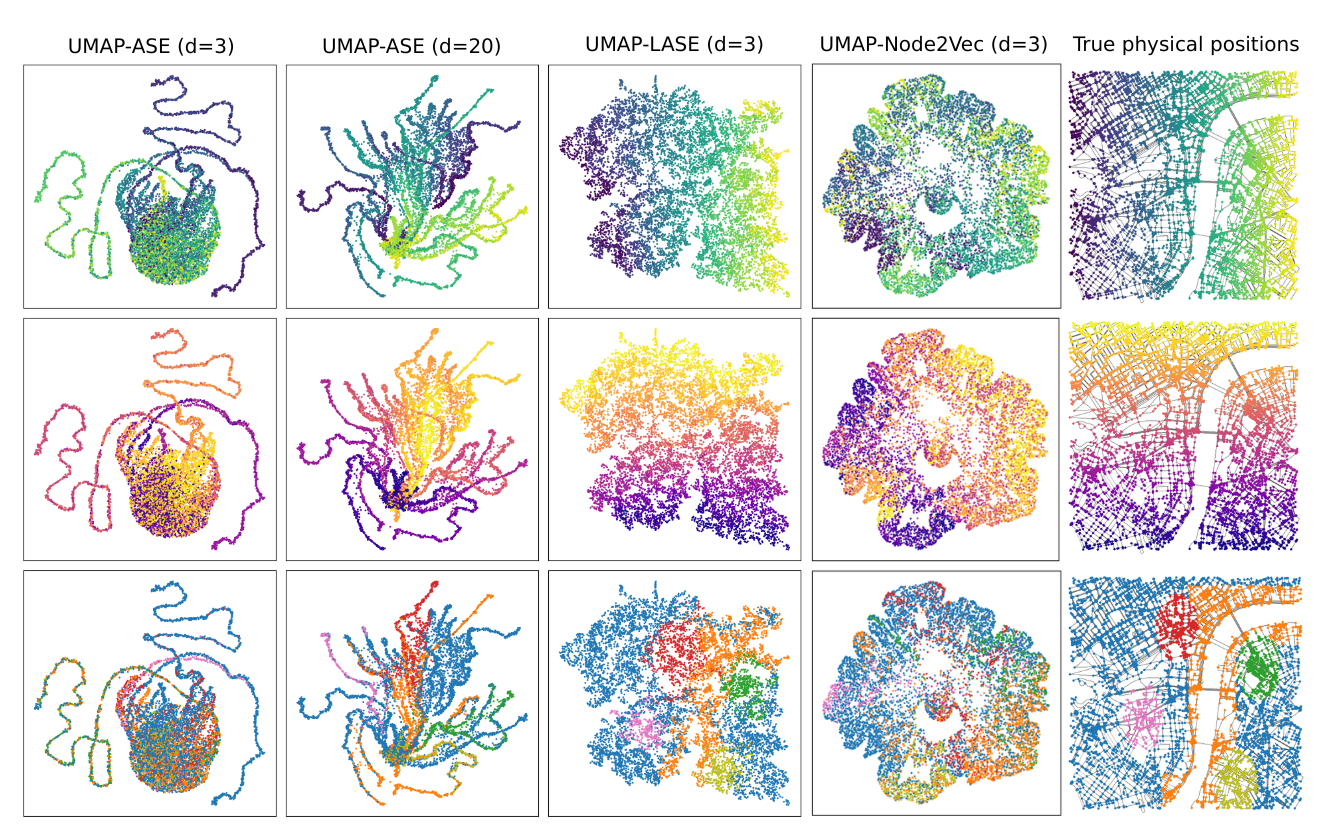}
     \caption{Global embeddings of the London road network, analogous to Figure~\ref{fig:london_umap}, with an additional column for UMAP applied to a 3-dimensional Node2Vec embedding. }
     \label{fig:london_n2v}
 \end{figure}

 \begin{table}[h]
    \centering
    \begin{tabular}{lcc}
        \toprule
        \textbf{Method} & \textbf{Bristol} & \textbf{London} \\
        \midrule
        UMAP-ASE ($d=3$)     & 17.3  & 26.8 \\
        UMAP-ASE ($d=20$)    & 16.9  & 13.9 \\
        UMAP-LASE ($d=3$)    & 5.7   & 19.9 \\
        UMAP-Node2Vec ($d=3$)    & 122.5 & 308.5 \\
        \bottomrule
    \end{tabular}
    \caption{Run-times (in seconds) for UMAP applied to different embeddings of the Bristol and London road networks.}
    \label{tab:umap_runtimes_new}
\end{table}

\end{document}